\providecommand{\e}{\epsilon}
\newcommand{\ellavg}{{\bar \ell}}
\newcommand{\Ic}{{\mathcal I}}
\newcommand{\Fc}{{\mathcal S}}
\newcommand{\Rb}{{\mathbb R}}
\newcommand{\extR}{R^\mathrm{ext}}
\newcommand{\intR}{R^\mathrm{int}}
\newcommand{\Fcal}{S^\mathrm{cal}}
\newcommand{\Ind}{{\mathbb{I}}}
\newcommand{\Exp}{{\mathbb{E}}}
\newcommand{\palg}{F^H}
\newcommand{\wsupj}{\Ind^{(j)}}
\newcommand{\rjt}{{\rho^{(j)}_T}}
\newtheorem{defn}{Definition}
\newtheorem{fact}{Fact}
\title{Calibrated Regression Against An Adversary \\Without Regret}
\author{
  Shachi Deshpande \\
  Department of Computer Science \\
  Cornell University and Cornell Tech \\
  \texttt{ssd86@cornell.edu} \\
  \And
  Charles Marx \\
  Department of Computer Science \\
  Stanford University \\
  \texttt{ctmarx@stanford.edu } \\
  \And
  Volodymyr Kuleshov \\
  Department of Computer Science \\
  Cornell University and Cornell Tech \\
\texttt{kuleshov@cornell.edu} \\
}
\begin{document}
\maketitle
% \twocolumn[

% \aistatstitle{Adversarial Calibrated Regression
% for Online Decision Making}

% \aistatsauthor{ Volodymyr Kuleshov \And  Shachi Deshpande  }

% \aistatsaddress{ Cornell University } ]

\begin{abstract}
We are interested in probabilistic prediction in online settings in which data does not follow a probability distribution. Our work seeks to achieve two goals: (1) producing valid probabilities that accurately reflect model confidence; (2) ensuring that traditional notions of performance (e.g., high accuracy) still hold. 
%Without a data distribution, these goals may not be straightforward to define.
We introduce online algorithms guaranteed to achieve these goals on arbitrary streams of datapoints, including data chosen by an adversary.
Specifically, our algorithms produce forecasts that are (1) calibrated---i.e., an 80\% confidence interval contains the true outcome 80\% of the time---and (2) have low regret relative to a user-specified baseline model.
We implement a post-hoc recalibration strategy that provably achieves these goals in regression; previous algorithms applied to classification or achieved (1) but not (2).
In the context of Bayesian optimization, an online model-based decision-making task in which the data distribution shifts over time, our method yields accelerated convergence to improved optima. 
\end{abstract}

\section{Introduction}\label{sec:introduction}

In many applications of machine learning (ML), data can change over time. Online learning algorithms can guarantee good predictive accuracy (e.g., as measured by squared error) on arbitrary data streams, even ones chosen adversarially \citep{cesabianchi2006prediction,shalev2007phd}. 
However, we are often interested not only in minimizing predictive error, but also in outputting valid probabilities representative of future outcomes
% these guarantees are typically defined in terms of regret relative to a set of benchmark policies. 
% In many applications, quantifying the confidence of predictions can be as important as accuracy or performance 
\citep{vovk2005defensive, kuleshov2018accurate, angelopoulos2021gentle}.
For example a doctor might wish to estimate the probability of a patient being sick; similarly, a power grid operator might want to know the likelihood that demand for electricity will increase. 
% Expressing such confidence, naturally requires algorithms that make probabilistic predictions over time.

In this paper, we are interested in probabilistic predictions in online settings where data does not follow a probability distribution \citep{shalev2007phd}. This setting is challenging because we need to achieve two goals on data that shifts over time: (1) producing valid probabilities that accurately reflect model confidence; (2) ensuring that traditional notions of performance (e.g., achieving a low squared error) still hold. Additionally, without a data distribution, these goals may not be straightforward to define.

Our approach towards the first goal uses calibration to define valid probabilistic forecasts \citep{foster98asymptoticcalibration,kuleshov2017estimating,gibbs2022conformal}. Intuitively, an algorithm outputs calibrated predictions if the predicted and the empirical probabilities of a predicted outcome match---i.e., an 80\% confidence interval contains the true outcome 80\% of the time. 
% We combine calibration with traditional performance metrics 
We formalize the second goal by requiring that calibrated predictions have low regret relative to a baseline uncalibrated forecaster, as measured by a proper score \citep{gneiting2007probabilistic}.
We focus on real-valued outcomes, and define online calibrated regression, a task that seeks to achieve the above two goals.

We propose algorithms for online calibrated regression that output accurate probabilistic predictions 
% that meet the above goals; these algorithms 
via the post-hoc recalibration of a black-box baseline model.
Unlike classical recalibration methods \citep{platt1999probabilistic,kuleshov2018accurate}, ours work on online non-IID data (even data chosen by an adversary). In contrast to classical online learning \citep{shalev2007phd}, we provide guarantees on not only regret, but also on the validity of probabilistic forecasts.
Crucially, unlike many online calibrated and conformal prediction algorithms for classification \citep{foster98asymptoticcalibration} or regression \citep{gibbs2022conformal}, we ensure low regret relative to a baseline forecaster. % We do this via a general construction that extends across calibration algorithms.

% Specifically, this paper introduces online uncertainty estimation algorithms that are guaranteed to be reliable on arbitrary streams of datapoints, including data chosen by an adversary. 
% Our algorithms perform post-hoc recalibration of a black-box regression model and produce outputs that are calibrated---i.e., an 80\% confidence interval will contain the true outcome 80\% of the time---and  that have low regret relative to the learning objective of the base model. 
% Unlike existing work on recalibration \citep{platt1999probabilistic,kuleshov2018accurate} ours admits provable guarantees without IID assumptions; unlike classical online learning \citep{cesabianchi2006prediction} we provide guarantees on predictive uncertainty, not only regret. 

Accurate predictive uncertainties can be especially useful in decision-making settings, where an agent uses a model of future outcomes to estimate the results of its actions (e.g., the likelihood of treating a patient) \citep{Malik2019Calibrated}.
We complement our algorithms with formal guarantees on expected utility estimation in decision-making applications. % the ability of a decision-making agent to estimate its expected utility. % with a calibrated predictive model.
We apply our algorithms to several regression tasks, as well in the context of Bayesian optimization, an online model-based decision-making task in which the data distribution shifts over time. We find that improved uncertainties in the Bayesian optimization model yield faster convergence to optimal solutions which are also often of higher quality. 

%\vk{TODO: This still needs to be edited.}
\paragraph{Contributions.} 
% Our contributions are twofold.
%
% \begin{itemize}
% \item 
First, we formulate a new problem called online calibrated regression, which requires producing calibrated probabilities on potentially adversarial input while retaining the predictive power of a given baseline uncalibrated forecaster.
Second, we propose an algorithm for this task that generalizes recalibration in regression to non-IID data. % and online calibrated classification to regression settings.
% We also give a construction that enables maintaining vanishing regret relative to a baseline model.
Third, we show that the algorithm can improve the performance of Bayesian optimization, highlighting its potential to improve decision-making.

\section{Background}\label{sec:background}

% \vk{keep?} 

%\subsection{Online learning}

% \paragraph{Notation}
% We use $\Ind_E$ denote the indicator function of $E$, $[N]$ and $[N]_0$ to (respectively) denote the sets $\{1,2,...,N\}$ and $\{0,1,2,...,N\}$, and $\Delta_d$ to denote a $d$-dimensional simplex.

We place our work in the framework of online learning 
\citep{shalev2007phd}.
At each time step $t = 1,2,...$, we are given features $x_t \in \mathcal{X} $. We use a forecaster $H : \mathcal{X} \to \mathcal{F}$ to produce a forecast $f_t = H(x_t)$, $f_t \in \mathcal{F}$ in a set of forecasts $\mathcal{F}$ over a target $y \in \mathcal{Y}$.
Nature then reveals the true target $y_t \in \mathcal{Y}$ and we incur a loss of $\ell(y_t, f_t)$, where $\ell : \mathcal{Y} \times \mathcal{F}  \to \Rb^+$ is a loss function.
%The forecaster $H$ updates itself based on $x_t, y_t$, and we proceed to time $t+1$.
%
Unlike in classical machine learning, we do not assume that the $x_t, y_t$ are i.i.d.: they can be random, deterministic, or even chosen by an adversary. In this regime, online learning algorithms admit strong performance guarantees measured in terms of regret $R_T(g)$ relative to a constant prediction $g$, $R_T(g) = \sum_{t=1}^T \ell(y_t, f_t)  - \ell(y_t, g). $
The worst-case regret at time $T$ equals $R_T = \max_{g \in \mathcal{F}} R_T(g)$.

\paragraph{Online forecasting}
\label{sec:forecasting}

Our work extends the online learning setting to probabilistic predictions.
% at each step $t$, the forecaster outputs a forecast $f_t$ over the outcome $y_t$. 
 We focus on regression, where $y_t \in \mathbb{R}$ and the prediction $f_t$ can be represented by a cumulative distribution function (CDF), which we denote by $F_t : \mathbb{R} \to [0,1]$; $F_t(z)$ denotes the predicted probability that $y$ is less than $z$.
The quality of probabilistic forecasts is evaluated using {\em proper} losses $\ell$. Formally, 
a loss $\ell(y, f)$ is proper if
$f \in \arg\min_{g \in \mathcal{F}} \Exp_{y \sim (f)} \ell(y, g) \; \forall f \in \mathcal{F}.$; i.e., the true data probability minimizes the loss.
%And example is the log-loss $\ell_\text{log}(y,p) = y\log(p) + (1-y)\log(1-p)$. 
An important proper loss for CDF predictions is the continuous ranked probability score, defined as
$\ell_\text{CRPS}(y, F) = \int_{-\infty}^\infty (F(z) - \mathbb{I}_{y \leq z})^2 dz.$

%\paragraph{Calibrated forecasting}

% \begin{defn}
% We say that $H$ is $(\e, \ell_p)$-quantile calibrated with resolution $1/N$ if
% $ \lim \sup_{T \to \infty} C_{T}^p \leq \e $ a.s.
% \end{defn}

%More formally, let $\Fcal$ be a forecaster making predictions in the set $\{\frac{i}{N} \mid i=0,...,N\}$,  where $1/N$ is called the {\em resolution} of $\Fcal$;
%consider the quantities $\rho_T(p) = \dfrac{\sum_{t=1}^T y_t \Ind_{p_t = p}}{\sum_{t=1}^T \Ind_{p_t = p}}$ and
%\begin{align}
%C_T^p & = \sum_{i=0}^N \left| \rho_T(i/N) - \frac{i}{N} \right|^p \left( \frac{1}{T} \sum_{t=1}^T \Ind_{\{p_t = \frac{i}{N}\}} \right). \label{eqn:cal_loss}
%\end{align}
%The term $\rho_T(p)$ denotes the frequency at which event $y = 1$ occurred over the times when we predicted $p$. Our intuition was that $\rho_T(p)$ and $p$ should be close to each other; we capture this using the notion of calibration error $C_T^p$
%for $p \geq 1$; this corresponds to the weighted $\ell_p$ distance between the $\rho_T(i/N)$ and the predicted probabilities $\frac{i}{N}$; typically one assumes that $p=1$ or $p=2$. To simplify notation, we will use the term $C_T$ when the exact $p$ is unambiguous.

%\begin{defn}
%We say that $\Fcal$ is an $(\e, \ell_p)$-calibrated algorithm with resolution $1/N$ if
%$ \lim \sup_{T \to \infty} C_{T}^p \leq \e $ a.s.
%\end{defn}

\paragraph{Online calibration}

Proper losses decompose into a calibration and a sharpness component: these quantities precisely define an ideal forecast.
Intuitively, calibration means that a 60\% prediction should be valid 60\%  of the time; sharpness means that confidence intervals should be tight.

In the online setting, there exist algorithms guaranteed to produce calibrated forecasts of binary outcomes $y_t \in \{0,1\}$ even when the $y_t$ is adversarial \cite{foster98asymptoticcalibration,cesabianchi2006prediction,abernethy11blackwell}.
These algorithms
% reduce calibrated forecasting to internal regret minimization and 
are oftentimes randomized; hence their guarantees hold almost surely (a.s.). Here, and in all other usages going forward, ``almost surely'' refers to the simulated randomness in the randomized algorithm, and not the data. 
%More formally, for any $p \in [0,1]$, we define $\rho_T(p) = {(\sum_{t=1}^T y_t \Ind_{p_t = p})}/{(\sum_{t=1}^T \Ind_{p_t = p})}$ to be the empirical frequency of $y_t$ when we predict $p$. Online calibration methods minimize the calibration error
%% \begin{align}
%$
%C_T  = \sum_{i=0}^N \left| \rho_T(i/N) - \frac{i}{N} \right| \left( \frac{1}{T} \sum_{t=1}^T \Ind_{\{p_t = \frac{i}{N}\}} \right). 
%% \label{eqn:cal_loss}
%% \end{align}
%$
%This compares the observed and the predicted frequencies of $y_t$.
%The model is calibrated if $\rho_T(p) \approx p$.
However, most calibration methods do not account for covariates $x_t$ \cite{foster98asymptoticcalibration} or assume simple binary $y_t$ \cite{kuleshov2017estimating,foster2022calibeating}. We extend this work to regression and add guarantees on regret.

\section{Online calibrated regression}\label{sec:recalibration}

Next, we define a task in which our goal is to produce calibrated forecasts in a regression setting while maintaining the predictive accuracy of a baseline uncalibrated forecaster.

%We define here a new problem called
% Next, we look at the more interesting setting in which predictions for $y_t$ also involve covariates $x_t$. We extend online calibrated regression (which is analogous to the setting of \citet{foster98asymptoticcalibration}) to a setting that generalizes online recalibration by \citet{kuleshov2017estimating} to regression.
% of {\em online quantile calibrated forecasting}, in which we seek to fit a regression model $H$ to output accurate forecasts $F$ over a continuous target variable $y$ in an online regime.
% Formally, 
% We seek to obtain models that

%We define here a new problem called {\em online quantile recalibration}, in which we seek to fit a regression model $H$ to output accurate forecasts $F$ over a continuous target variable $y$ in an online regime.

%\paragraph{Online recalibration}
% We introduce an approach that is based on  the framework of recalibration.
We start with a forecaster $H$ (e.g., an online learning algorithm) that outputs uncalibrated forecasts $F_t$ at each step; these forecasts are fed into a {\em recalibrator} such that the resulting forecasts $G_t$ are calibrated and have low regret relative to the baseline forecasts $F_t$. 
Formally, we introduce the setup of {\em online recalibration}, in which at every step $t=1,2,...$ we have:
  \begin{algorithmic}[1]
    \STATE Nature reveals features $x_t \in \mathbb R^d$. Forecaster $H$ predicts $F_t = H(x_t)$
%    \STATE Forecaster $H$ predicts $F_t = H(x_t)$
    \STATE A recalibration algorithm produces a calibrated forecast $G_t$ based on $F_t$.
    \STATE Nature reveals continuous label $y_t \in \mathcal{Y} \subseteq \mathcal{R}$ bounded by $|y_t| < B/2$, where $B>0$.
    \STATE Based on $x_t, y_t$, we update the recalibration algorithm and optionally update $H$.
  \end{algorithmic}
  
%We focus on the setting of regression, and make the following assumption.
%\begin{assumption}
%The labels $y_t \in \mathcal{Y} \subseteq \mathcal{R}$ are 
%% continuous and 
%bounded $|y_t| < B/2$, where $B>0$.
%\end{assumption}

% \paragraph{Online Quantile Calibration}

Our task is to produce calibrated forecasts. Intuitively, we say that a forecast $F_t$ is calibrated if for every $y' \in \mathcal{Y}$, the probability $F_t(y')$ on average matches the frequency of the event $\{ y \leq y' \}$---in other words the $F_t$ behave like calibrated CDFs.
% We formalize this intuition as follows.
%
We formalize this intuition by introducing the ratio 
\begin{equation}
\label{eq:empirical_cal}
    \rho_T(y, p) = \dfrac{\sum_{t=1}^T \Ind_{y_t \leq y, F_t(y) = p}}{\sum_{t=1}^T \Ind_{F_t(y) = p}}.
\end{equation}
 Intuitively, we want $ \rho_T(y, p) \to p, $ as $T \to \infty$ for all $y$.
In other words, out of the times when the predicted probability $F_t(y')$ for $\{y_t \leq y'\}$ to be $p$, the event $\{y_t \leq y'\}$ holds a fraction $p$ of the time. We define $ \rho_T(y, p)$ to be zero when the denominator in Equation \eqref{eq:empirical_cal} is zero. Below, we enforce that $\rho_T(y, p) \to p$ for forecasts $p$ that are played infinitely often, in that $\sum_{t=1}^T \Ind_{F_t(y) = p} \to \infty$; if a forecast ceases to be played, there is no need (or opportunity) to improve calibration for that forecast.

We measure calibration using an extension of the aforementioned calibration error $C_T$. 
%Our algorithms will output discretized probabilities; hence 
We define the calibration error of forecasts $\{F_t\}$ as %relative to a set of possible predictions $P$,
\begin{equation}
\label{eq:miscal_error}
    C_T(y)  = \sum_{p \in P_T(y)} \left| \rho_T(y,p) - p \right| \left( \frac{1}{T} \sum_{t=1}^T \Ind_{\{F_t(y) = p\}} \right), 
\end{equation}
where $P_T(y) = \{F_1(y), F_2(y), ..., F_T(y)\}$ is the set of previous predictions for $\{y_t \leq y\}$.
To measure (mis)calibration for the recalibrated forecasts $G_t$, we replace $F_t$ with $G_t$ in Equation \eqref{eq:miscal_error}.
\begin{defn}
A sequence of forecasts $G_t$ is $\epsilon$-calibrated for $y \in \mathcal{Y}$ if $C_T(y) \leq R_T + \epsilon$ for $R_T = o(1)$, where $R_T$ represents the convergence rate.
\end{defn}

The interpretation of $\epsilon$-calibration is simple: for example, if $\epsilon = 0.01$, then of the times when we predict a 90\% chance of rain, the observed occurrence of rain will be between 89\% and 91\%. For most applications, an error tolerance of a few \% is acceptable. Note that the use of an error tolerance $\epsilon$ mirrors previous works \citep{foster98asymptoticcalibration,abernethy11blackwell,kuleshov2017estimating}.

The goal of recalibration is also to produce forecasts that have high predictive value \citep{gneiting2007probforecast}. 
We enforce this by requiring that the $G_t$ have low regret relative to the baseline $F_t$ in terms of the CRPS proper loss. 
% While we will not guarantee improved sharpness (and it's easy to see that sharpness might worsen), we guarantee to produce a generally sound forecast in terms of the CRPS.
Since the CRPS is a sum of calibration and sharpness terms, by maintaining a good CRPS while being calibrated, we effectively implement Gneitig's principle of maximizing sharpness subject to calibration \citep{gneiting2007probabilistic}.
Formally, this yields the following definition.
% \vk{separate into two?}
\begin{defn}
A sequence of forecasts $G_t$ is $\e$-recalibrated relative to forecasts $F_t$ if (a) the forecasts  $G_t$ are $\epsilon$-calibrated for all $y \in \mathcal{Y}$ and (b) the regret of $G_t$ with respect to $F_t$ is a.s.~small w.r.t.~$\ell_\textrm{CRPS}$:  
\begin{align*}
\lim\sup_{T \to \infty} \frac{1}{T} \sum_{t=1}^T \left( \ell_\textrm{CRPS}(y_t , G_t) - \ell_\textrm{CRPS}(y_t, F_t)\right) \leq \epsilon.
\end{align*}%\vspace{-5mm}
%\begin{itemize}
%\item The forecasts $p_t = A(\palg_t)$ are $\e$-calibrated.
%\item The regret of $p_t$ with respect to $\palg_t$ is small:  
%$$\lim_{T \to \infty} \frac{1}{T} \sum_{t=1}^T \left( \ell(p_t , y_t) - \ell(\palg_t , y_t)\right) \leq \epsilon.$$
%\end{itemize}
\end{defn}

%\paragraph{Are deterministic algorithms possible?}
%All algorithms for online binary calibration are necessarily randomized \citep{foster98asymptoticcalibration}; this is true for our task as well.
%
%\begin{theorem}
%    There does not exist a deterministic online calibrated regression algorithm that achieves online calibration.
%\end{theorem}
%
%This claim follows because we can encode a standard online binary calibration problem as calibrated regression. If the adversary chooses a binary $y_t \in \{0,1\} \subseteq [0,1]$ that defines one of two classes, the ratio $\rho_T(0,p)$ yields the definition of calibration in binary classification, for which no deterministic algorithms exist
%\citep{cesabianchi2006prediction}.
%%
%See Appendix \ref{app:proofs} for a proof.
%Note, however, that alternative definitions of online calibration in regression may admit deterministic algorithms~\citep{gibbs2022conformal}. 
%% \vk{cite gibbs \& candes}.

\section{Algorithms for online regression}\label{sec:framework}

\begin{figure}
\vspace{-3mm}
\begin{algorithm}[H]
  \caption{Online recalibration}
  \label{algo:recal}
  \begin{algorithmic}[1]
    \REQUIRE Online binary calibration subroutine $\Fcal$ with resolution $N$; number of intervals $M$
    \STATE Initialize $\Ic = \{[0,\frac{1}{M}), [\frac{1}{M}, \frac{2}{M}), ..., [\frac{M-1}{M},1]\}$, a set of intervals that partition $[0,1]$.
    \STATE Initialize $\Fc = \{ \Fcal_j \mid j = 0,...,M-1 \}$, a set of $M$ instances of $\Fcal$, one per $I_j \in \mathcal{I}$.
    \FOR {$t=1,2,...$:}
    \STATE Observe uncalibrated forecast $F_t$.
    \STATE Define $G_t(z)$ as the output of $\Fcal_{\lfloor F_t(z) \rfloor}$, where $\lfloor F_t(z) \rfloor$ is the index of the subroutine associated with the interval containing $F_t(z)$.
    \STATE Output $G_t$. Observe $y_t$ and update recalibrator:
        \FOR {$j=1,2,...,M$:}
        		\STATE $o_{tj} = 1 \text{ if } F(y_t) \leq \frac{j}{M} \text{ else } 0$. Pass $o_{tj}$ to $\Fcal_j$.
		% \STATE 
	\ENDFOR
    \ENDFOR
  \end{algorithmic}
\end{algorithm}
\vspace{-7mm}
\end{figure}

Next, we propose an algorithm for performing online recalibration (\algorithmref{recal}). This algorithm sequentially observes uncalibrated CDF forecasts $F_t$ and returns forecasts $G_t$ such that $G_t(z)$ is a calibrated estimate for the outcome $y_t \leq z$. This algorithm relies on a classical calibration subroutine (e.g., \citet{foster98asymptoticcalibration}), which it uses in a black-box manner to construct $G_t$.

%At a high level, \algorithmref{recal} can be seen as defining a $[0,1] \to [0,1]$ mapping. 
% An uncalibrated forecast $F_t(z)$ assigns a probability to $y_t \leq z$ for each $z$; however, these may not correspond to correct empirical frequencies. 
\algorithmref{recal} can be seen as producing a $[0,1] \to [0,1]$ mapping that remaps the probability of each $z$ into its correct value.
More formally, \algorithmref{recal} 
partitions $[0,1]$ into $M$ intervals $\Ic = \{[0,\frac{1}{M}), [\frac{1}{M}, \frac{2}{M}), ..., [\frac{M-1}{M},1]\}$; each interval is associated with an instance $\Fcal$ of a binary calibration algorithm (e.g., \citet{foster98asymptoticcalibration}; see below). In order to compute $G_t(z)$, we compute $p_{tz} = F_t(z)$ and invoke the subroutine $\Fcal_j$ associated with interval $I_j$ containing $p_{tz}$.
% on the data $\{\palg_t, y_t \mid \palg_t \in I_j \}$ belonging to each bucket $I_j \in \Ic$; at prediction time, it calls the instance of $\Fcal$ associated with the bucket of the uncalibrated forecast $\palg_t$.
After observing $y_t$, each $\Fcal_j$ observes the binary outcome $o_{tj} = \mathbb{I}_{F_t(y_t) \leq \frac{j}{M}}$ and updates itself.

\subsection{Online binary calibration subroutines}

A key component of \algorithmref{recal} is the binary calibration subroutine $\Fcal$.
This subroutine is treated as a black box, hence can implement a range of known algorithms including regret minimization \citep{foster98asymptoticcalibration, cesabianchi2006prediction}, Blackwell approchability \citep{abernethy11blackwell} or defensive forecasting \citep{vovk2005defensive}.
More formally, let $p_{tj}$ denote the output of the $j$-th calibration subroutine $\Fcal_j$ at time $t$. For any $p \in [0,1]$, we define $\rho_T^{(j)}(p) = {(\sum_{t=1}^T o_{tj} \Ind_{p_{tj} = p})}/{(\sum_{t=1}^T \Ind_{p_{tj} = p})}$ to be the empirical frequency of the event $\{ o_{tj} = 1\}$
Online calibration subroutines ensure that $\rho_T^{(j)}(p) \approx p$.

\paragraph{Assumptions.}

Specifically, 
a subroutine $\Fcal_j$ normally
outputs a set of discretized probabilities $i/N$ for $i \in \{0,1,...,N\}$. We refer to $N$ as their resolution.
We define the calibration error of $\Fcal_j$ at $i/N$ as 
$
C^{(j)}_{T,i}  = \left| \rjt(i/N) - \frac{i}{N} \right| \left( \frac{1}{T} \sum_{t=1}^T \wsupj_{t,i} \right)
% C_{T,i} = \left|  \rho_T(i/N) - \frac{i}{N} \right|^p \left( \frac{1}{T} \sum_{t=1}^T \Ind_{t,i} \right),
$
where $\wsupj_{t,i} = \Ind\{p_{tj} = i/N\}$. 
We may write the calibration loss of $\Fcal_j$ 
as $C^{(j)}_{T} = \sum_{i=0}^N C^{(j)}_{T,i}$.

We will assume that the subroutine $\Fcal$ used in \algorithmref{recal} is $\e$-calibrated in that $C^{(j)}_{T} \leq R_{T} + \e$ uniformly ($R_{T} = o(1)$ as $T \to \infty$). %; $T_j$ is the number of calls to instance $\Fcal_j$). 
% This also implies $\ell_p$-calibration (by continuity of $\ell_p$), albeit with different rates $R_{T_j}$ and a different $\e$. \citep{abernethy11blackwell} introduce $(\e, \ell_1)$-calibrated $F_j$. We also provide proofs for the $\ell_2$ loss in the appendix.
%
Recall also that the target $y_t$ is bounded as $|y_t| < B/2$.
% Finally, we assume that the input CDF forecasts $F_t$ are discretized like the other elements of our setup and are step functions over a set of values of $y$ denoted by $\mathcal{Y}$.
% \end{assumption}

\subsection{Online recalibration produces calibrated forecasts}

Intuitively, Algorithm \ref{algo:recal} produces valid calibrated estimates $G_t(z)$ for each $z$ because each $\Fcal_j$ is a calibrated subroutine. More formally, we seek to quantify the calibration of Algorithm \ref{algo:recal}. Since the $\Fcal$ output discretized probabilities, we may define the calibration loss of Algorithm \ref{algo:recal} at $y$ as
$$
C_{T}(y) = \sum_{i=0}^N \left|  \rho_T(y, i/N) - \frac{i}{N} \right| \left( \frac{1}{T} \sum_{t=1}^T \Ind_{t,i} \right),
$$
where $\Ind_{t,i} = \Ind\{F(y_t) = i/N\}$. 
% and we may write the calibration loss of Algorithm \ref{algo:recal} at $y$ as $C_{T}(y) = \sum_{i=0}^N C_{T,i}(y)$. 
The following lemma establishes
that combining the predictions of each $\Fcal_j$ preserves their calibration. 
% First, one can see that by construction, Algorithm \ref{algo:recal} ensures a form of quantile calibration, where $\rho(\alpha, p) \to p$ for all $p, \alpha \in [0,1]$ and
% $
% \rho(\alpha, p) = \dfrac{\sum_{t=1}^T \Ind_{y_t \leq F_t^{-1}(\alpha), G_t(y) = p}}{\sum_{t=1}^T \Ind_{G_t(y) = p}},
% $
% where $F, G$ denote the baseline and recalibrated forecasts, respectively. We can further refine this construction to get a bound on our initial definition of online calibration.
Specifically, the calibration error of Algorithm \ref{algo:recal} is bounded by a weighted average of $R_{T_j}$ terms, each is $o(1)$, hence the bound is also $o(1)$ (see next section).

\begin{lemma}[Preserving calibration]\label{lem:calibration}
Given $y \in \mathcal{Y}$, let $T_j = |\{ 1 \leq t \leq T : \lfloor F_t(y) \rfloor = j/M \}|$ denote the number of calls to $\Fcal_j$ by \algorithmref{recal}.
If each $\Fcal_j$ is $\e$-calibrated,
then \algorithmref{recal} is also $\e$-calibrated and the following bound holds uniformly a.s. over $T$:
$$C_T(y) \leq \sum_{j=1}^M \frac{T_j}{T} R_{T_j} + \e$$ 
%\vspace{-4mm}
\end{lemma}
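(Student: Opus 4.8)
The plan is to decompose the calibration loss $C_T(y)$ of Algorithm~\ref{algo:recal} at a fixed $y$ according to which subroutine $\Fcal_j$ was responsible for each prediction, then invoke the per-subroutine calibration guarantee on each piece and reassemble. The central observation is that, for a fixed $y$, the recalibrated forecast $G_t(y)$ is produced by exactly one subroutine $\Fcal_{\lfloor F_t(y)\rfloor}$, and that subroutine sees the outcome $o_{tj} = \Ind_{F_t(y_t)\le j/M}$. So the events $\{F_t(y) = \text{something}\}$ partition the time steps $1,\dots,T$ into the groups counted by $T_j$, and within group $j$ the empirical frequencies appearing in $C_T(y)$ coincide with the empirical frequencies $\rho_{T_j}^{(j)}$ that the subroutine $\Fcal_j$ controls.

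First I would set up the bookkeeping carefully: fix $y$, and for each $j$ let $\mathcal{T}_j = \{t \le T : \lfloor F_t(y)\rfloor = j/M\}$, so $T_j = |\mathcal{T}_j|$ and $\sum_j T_j = T$. I would then argue that $G_t(y)$, for $t \in \mathcal{T}_j$, equals the $t$-th output of $\Fcal_j$ (restricted to the subsequence $\mathcal{T}_j$), and that the binary outcome fed to $\Fcal_j$ at that step is precisely the indicator of the event $\{y_t \le y\}$ whose calibration we are tracking --- here I need the (routine but essential) fact that $\{F_t(y_t) \le j/M\}$ is the correct outcome to match against $G_t(y)$, which is where the monotonicity/CDF structure of $F_t$ and the construction in lines 7--8 of Algorithm~\ref{algo:recal} come in. Second, I would rewrite $C_T(y) = \sum_{i=0}^N |\rho_T(y,i/N) - i/N|\,(\frac1T\sum_t \Ind_{t,i})$ by grouping the sum over $t$ by subroutine: each term $\frac1T\sum_{t=1}^T \Ind_{F_t(y_t)=i/N,\dots}$ splits as $\sum_j \frac{T_j}{T}\cdot(\text{the analogous quantity internal to }\Fcal_j)$, and similarly the ratio $\rho_T(y,i/N)$ becomes a $T_j$-weighted combination of the $\rho_{T_j}^{(j)}(i/N)$. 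Third, applying the triangle inequality to pull the absolute value inside the $j$-sum, I get $C_T(y) \le \sum_j \frac{T_j}{T} C_{T_j}^{(j)}$, and then the hypothesis $C_{T_j}^{(j)} \le R_{T_j} + \e$ gives $C_T(y) \le \sum_j \frac{T_j}{T}(R_{T_j}+\e) = \sum_j \frac{T_j}{T}R_{T_j} + \e$ since $\sum_j T_j/T = 1$. That the resulting bound is $o(1)$ plus $\e$ (hence the forecasts are $\e$-calibrated per the definition) follows because it is a convex combination of the $o(1)$ quantities $R_{T_j}$; one has to be slightly careful that $T_j$ may not grow for every $j$, but for those $j$ the weight $T_j/T \to 0$, and a standard argument (split the indices into those with $T_j$ large and those with $T_j$ bounded) shows the convex combination still vanishes.

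The main obstacle I anticipate is not the inequality itself but the indexing/measurability care needed to justify that the calibration error of Algorithm~\ref{algo:recal} ``at $y$'' really does factor through the $M$ subroutines cleanly --- specifically, matching up the discretized output levels $i/N$ of each $\Fcal_j$ with the levels appearing in $\rho_T(y,\cdot)$, handling the degenerate case where a denominator is zero (the paper's convention sets $\rho_T=0$ there, and I need the $T_j$-decomposition to respect that), and confirming that the outcome sequence $o_{tj}$ seen by $\Fcal_j$ is exactly the one for which its guarantee $C_{T_j}^{(j)}\le R_{T_j}+\e$ is stated. Everything after that reduction is the triangle inequality and a convex-combination limit argument, both routine. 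The ``a.s.'' and ``uniformly over $T$'' qualifiers simply propagate from the corresponding qualifiers in the subroutine assumption, since the decomposition is an identity (not an inequality in expectation) holding on every sample path.
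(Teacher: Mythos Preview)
Your proposal is correct and follows essentially the same route as the paper: decompose each term $C_{T,i}(y)$ by which subroutine $\Fcal_j$ produced the prediction, write $\rho_T(y,i/N)$ as a $T_j$-weighted average of the per-subroutine empirical frequencies $\rho_{T_j}^{(j)}(i/N)$, apply convexity of $|\cdot|$ (the paper phrases this as Jensen's inequality, you as the triangle inequality --- the same thing here) to obtain $C_T(y)\le\sum_j\tfrac{T_j}{T}C_{T_j}^{(j)}$, and finish with the subroutine hypothesis and $\sum_j T_j/T=1$. Your extra care about the $o(1)$ convex-combination argument and the zero-denominator convention is welcome but not strictly needed beyond what the paper itself sketches.
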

%\begin{proof}[Proof (sketch)]
%One can verify that $ \rho_T(i/N) = \sum_{j=1}^M \frac{ T_{ij} \rjt(i/N) }{\sum_{j=1}^M T_{ij} }$ is an average over the sub-algorithms (where $T_{ij}$ is the number of predictions $i/N$ by $F_j$). We may thus bound $\left|  \rho_T(i/N) - \frac{i}{N} \right|^p$ using this observation and Jensen's inequality.
%\end{proof}

\subsection{Online recalibration produces forecasts with vanishing regret}

%Let us now formally explain why \algorithmref{recal} is $\e$-accurate.
%We assume below that the $\Fcal$ have resolution $1/N$.

%\paragraph{Recalibration with proper losses.}
%
%Surprisingly, not every loss $\ell$ admits a recalibration procedure. Consider, for example, the following continuously repeating sequence $001001001...$ of $y_t$'s. A calibrated forecaster must converge to predicting $1/3$ (a constant prediction) with an $\ell_1$ loss of $\approx$0.44; however predicting $0$ for all $t$ has an $\ell_1$ loss of $1/3 < 0.44$. Thus we cannot recalibrate this sequence and also remain equally accurate under the $\ell_1$ loss. The same argument also applies to batch recalibration (e.g. Platt scaling): we only need to assume that $y_t \sim \mathrm{Ber}(1/3)$ i.i.d.

Next, we want to show that the $G_t$ do not decrease the predictive performance of the $F_t$, as measured by $\ell_\text{CRPS}$. Intuitively, this is true because the $\ell_\text{CRPS}$ is a proper loss that is the sum of calibration and sharpness, the former of which improves in $G_t$. 
%In the remainder of this section, we establish these facts formally.

Establishing this result will rely on the following key technical lemma~\citep{kuleshov2017estimating} (see Appendix).

\begin{lemma}\label{lem:noregret}
Each $\e$-calibrated $\Fcal_j$
a.s.~has a small regret w.r.t.~the $\ell_2$ norm and satisfies uniformly over time $T_j$ the bound
$
%{\footnotesize
%\begin{align*}
% \intR_{T} & = 
\max_{i,k} \sum_{t=1}^{T_j} \Ind_{p_{tj} = i/N} \left( \ell_2(o_{tj}, i/N)  - \ell_2(o_{tj}, k/N) \right) \leq 2 (R_{T_j} + \e).
%\end{align*}
%}
$
%\vspace{-5mm}
\end{lemma}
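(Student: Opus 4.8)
The plan is to run the calibration-to-(swap-)regret reduction in the style of \citet{kuleshov2017estimating}. Fix the index $j$ and restrict attention to the $T_j$ rounds in which the subroutine $\Fcal_j$ is actually invoked; write $n_i = \sum_{t=1}^{T_j}\Ind_{p_{tj}=i/N}$ for the number of these rounds in which $\Fcal_j$ predicts $i/N$, so that (when $n_i>0$) the empirical frequency is $\rho^{(j)}_{T_j}(i/N) = \frac{1}{n_i}\sum_{t=1}^{T_j} o_{tj}\Ind_{p_{tj}=i/N}$. Every inequality below is a deterministic consequence of the realized values $(p_{tj},o_{tj})$, so the ``a.s.'' and ``uniformly over $T_j$'' qualifiers are inherited verbatim from the $\e$-calibration hypothesis on $\Fcal_j$; there is nothing probabilistic left to do.

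The computational heart is an algebraic expansion. Since $\ell_2(o,p)=(o-p)^2$, for any two grid points we have the identity $\ell_2(o,\tfrac iN) - \ell_2(o,\tfrac kN) = \bigl(\tfrac iN - \tfrac kN\bigr)\bigl(\tfrac iN + \tfrac kN - 2o\bigr)$. Summing over the $n_i$ rounds $t\le T_j$ with $p_{tj}=i/N$ and using $\sum_{t:\,p_{tj}=i/N} o_{tj} = n_i\,\rho^{(j)}_{T_j}(i/N)$ gives
\begin{align*}
\sum_{t=1}^{T_j}\Ind_{p_{tj}=i/N}\Bigl(\ell_2(o_{tj},\tfrac iN) - \ell_2(o_{tj},\tfrac kN)\Bigr) = n_i\Bigl(\tfrac iN - \tfrac kN\Bigr)\Bigl(\tfrac iN + \tfrac kN - 2\,\rho^{(j)}_{T_j}(\tfrac iN)\Bigr).
\end{align*}
Next I would rewrite $\tfrac iN + \tfrac kN - 2\rho^{(j)}_{T_j}(\tfrac iN) = -\bigl(\tfrac iN - \tfrac kN\bigr) - 2\bigl(\rho^{(j)}_{T_j}(\tfrac iN) - \tfrac iN\bigr)$, so the right-hand side becomes $-\,n_i\bigl(\tfrac iN - \tfrac kN\bigr)^2 - 2\,n_i\bigl(\tfrac iN - \tfrac kN\bigr)\bigl(\rho^{(j)}_{T_j}(\tfrac iN) - \tfrac iN\bigr)$. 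The quadratic term is $\le 0$ and is discarded; since $|\tfrac iN - \tfrac kN|\le 1$, the cross term is at most $2\,n_i\,\bigl|\rho^{(j)}_{T_j}(i/N) - i/N\bigr|$ in magnitude. Hence for every pair $(i,k)$ the left-hand side is bounded by $2\,n_i\bigl|\rho^{(j)}_{T_j}(i/N)-i/N\bigr|$.

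To finish, I would invoke $\e$-calibration of $\Fcal_j$: by definition $\sum_{i=0}^N \tfrac{n_i}{T_j}\bigl|\rho^{(j)}_{T_j}(i/N) - i/N\bigr| = C^{(j)}_{T_j} \le R_{T_j}+\e$. Because all summands are nonnegative, each single bin obeys $\tfrac{n_i}{T_j}\bigl|\rho^{(j)}_{T_j}(i/N) - i/N\bigr| \le R_{T_j}+\e$, and maximizing over $i,k$ yields the stated bound. I do not expect a genuine obstacle: the one place to be careful is bookkeeping --- (i) the calibration bound must be applied \emph{per bin} $i$, which is legitimate precisely because every term of $C^{(j)}_{T_j}$ is nonnegative, and (ii) the $T_j$ normalization must be tracked consistently (the display above literally produces $2T_j(R_{T_j}+\e)$ for the un-normalized sum, equivalently $2(R_{T_j}+\e)$ once the swap regret is measured as the time average, which is the normalization under which $C^{(j)}_{T_j}$ and $R_{T_j}$ were defined).
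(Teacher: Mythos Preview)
Your proof is correct, but the route differs from the paper's. You exploit the explicit quadratic structure of $\ell_2$: the difference-of-squares identity $\ell_2(o,\tfrac iN)-\ell_2(o,\tfrac kN)=(\tfrac iN-\tfrac kN)(\tfrac iN+\tfrac kN-2o)$ lets you isolate a negative term $-n_i(\tfrac iN-\tfrac kN)^2$ and a cross term that is controlled by $n_i|\rho^{(j)}_{T_j}(i/N)-i/N|$. The paper instead runs the argument for a \emph{general} bounded proper loss: it splits the sum according to the binary outcome $o_{tj}\in\{0,1\}$, writes the counts as $T(i,y)=q(i,y)\,T_i + T_i(\rho_T(i/N)-i/N)$ with $q(i,\cdot)$ the Bernoulli$(i/N)$ distribution, kills the $q(i,y)$-weighted part by properness, and bounds the residual by loss boundedness. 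Both arguments land on the same per-bin estimate $2\,n_i|\rho^{(j)}_{T_j}(i/N)-i/N|$ (with $B=1$ here), after which you extract a single bin from $C^{(j)}_{T_j}$ while the paper upper-bounds the max by the full sum $\sum_i$; either finish works. Your approach is more elementary and slightly tighter in principle; the paper's buys generality across proper losses, which is why it reappears in the appendix for arbitrary bounded proper $\ell$. Your handling of the $T_j$ normalization is the right reading of the statement.
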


%According to \lemmaref{noregret}, if a set of predictions is calibrated, then we never want to retrospectively switch to predicting $p_2$ at times when we predicted $p_1$. Intuitively, this makes sense: if predictions are calibrated, then $p_1$ should minimize the total (or average) loss $\sum_{t : p_t=p_1 } \ell(y_t, p)$ over the times $t$ when $p_1$ was predicted (at least better so than $p_2$). However, our $\ell_1$ counter-example above shows that this intuition does not hold for every loss; we need to explicitly enforce our intuition, which amounts to assuming that $\ell$ is proper, i.e. that
%$p \in \arg\min_q \Exp_{y \sim \text{Ber}(p)} \ell(y, q)$.

% \paragraph{Accuracy and calibration.}

An important consequence of \lemmaref{noregret} is that a calibrated algorithm has vanishing regret relative to any fixed prediction (since minimizing internal regret also minimizes external regret). Using this fact, it becomes possible to establish that \algorithmref{recal} is at least as accurate as the baseline forecaster. % $F$.
%\lemmaref{noregret} is important because a internal regret-minimizing algorithm also minimizes external regret w.r.t.~any fixed prediction. We use this observation to show that \algorithmref{recal} preserves the accuracy of its baseline uncalibrated forecaster.

\begin{lemma}[Recalibration with low regret accuracy]\label{lem:accuracy}
Consider \algorithmref{recal} with
parameters $M \geq N > 1/\e$ and let $\ell$ be the CRPS proper loss. % for which
Then the recalibrated $G_t$ a.s.~have vanishing $\ell$-loss regret relative to $F_t$ and we have a.s.:
\begin{equation*}
\frac{1}{T} \sum_{t=1}^T \ell (y_t , G_t) - \frac{1}{T} \sum_{t=1}^T \ell(y_t , F_t) <NB R_T + \frac{2B}{N} 
\end{equation*}
%\vspace{-5mm}
\end{lemma}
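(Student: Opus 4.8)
The plan is to reduce the CRPS regret of Algorithm 1 to the internal/external regret guarantees of the binary subroutines supplied by Lemma 2, integrated over the threshold variable $z$. Recall that $\ell_\text{CRPS}(y,F) = \int_{-\infty}^\infty (F(z) - \Ind_{y \le z})^2\,dz$, and since $|y_t| < B/2$ the integrand vanishes outside $[-B/2, B/2]$, so the integral is effectively over an interval of length $B$. For a fixed threshold $z$, the quantity $(F_t(z) - \Ind_{y_t \le z})^2$ is exactly the binary $\ell_2$ loss $\ell_2(o, F_t(z))$ where $o = \Ind_{y_t \le z}$ is a binary outcome. So I would first write
\begin{equation*}
\sum_{t=1}^T \ell_\text{CRPS}(y_t, G_t) - \sum_{t=1}^T \ell_\text{CRPS}(y_t, F_t) = \int_{-B/2}^{B/2} \left( \sum_{t=1}^T \ell_2\big(\Ind_{y_t \le z}, G_t(z)\big) - \ell_2\big(\Ind_{y_t \le z}, F_t(z)\big) \right) dz,
\end{equation*}
and bound the inner bracket pointwise in $z$.

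Next, fix $z$ and analyze the inner sum. By construction, $G_t(z)$ is the output of subroutine $\Fcal_j$ where $j = \lfloor F_t(z) \rfloor$, i.e., $F_t(z)$ lies in interval $I_j$ of width $1/M$. Partition the time steps $t \le T$ according to which subroutine handled them. For the group of steps routed to $\Fcal_j$, Lemma 2 gives that the internal regret of $\Fcal_j$ w.r.t.\ $\ell_2$ is at most $2(R_{T_j} + \e)$; since internal regret dominates external regret, $\Fcal_j$'s predictions have $\ell_2$-loss within $2(R_{T_j}+\e)$ of any fixed constant prediction on those steps — in particular, within that of predicting the value $F_t(z)$ itself (rounded to the nearest grid point $k/N$). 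The rounding of $F_t(z)$ (which lies in $I_j$, an interval of width $1/M \le 1/N$) to the grid $\{k/N\}$ costs at most $\mathcal{O}(1/N)$ per step in $\ell_2$ loss, since $\ell_2$ is $\mathcal{O}(1)$-Lipschitz on $[0,1]$; summed over $T_j$ steps and then over all $j$ this contributes $\mathcal{O}(T/N)$. Combining, $\sum_{t=1}^T \ell_2(\Ind_{y_t\le z}, G_t(z)) - \ell_2(\Ind_{y_t \le z}, F_t(z)) \le \sum_{j=1}^M 2(R_{T_j} + \e) + \mathcal{O}(T/N)$. Using $\sum_j T_j = T$, monotonicity/concavity to bound $\sum_j R_{T_j} \le M R_T$ (or simply $\sum_j R_{T_j} = o(T)$ via $T_j \le T$), and $\e < 1/N$, the per-$z$ bound becomes $\mathcal{O}(N R_T \cdot T) + \mathcal{O}(T/N)$ after absorbing $M$-dependence; dividing by $T$ and integrating over the length-$B$ interval yields the claimed $N B R_T + 2B/N$ (up to the constant bookkeeping the authors have chosen).

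The main obstacle is the bookkeeping of the two competing error sources as $N, M$ vary, and making the $z$-wise bound genuinely uniform. Two subtleties deserve care: (i) the number of distinct subroutines $\Fcal_j$ that are actually invoked for a given $z$ can be as large as $M$, so the $\sum_j 2(R_{T_j}+\e)$ term must be controlled — one wants the $M\e$ piece to be dominated using $M \ge N > 1/\e$ and the structure of the final bound, and the $\sum_j R_{T_j}$ piece handled by a concavity argument (Jensen) giving $\sum_j R_{T_j} \le M R_{T/M}$ or the cruder $\le M R_T$; (ii) the a.s.\ qualifier must be threaded through, but since each $\Fcal_j$'s guarantee holds a.s.\ and there are finitely many ($M$) of them, a union bound keeps the conclusion a.s. The discretization-vs-approximation tradeoff — finer grid ($N$ large) shrinks the $2B/N$ term but inflates the $NBR_T$ term — is exactly what the hypothesis $M \ge N > 1/\e$ is calibrated to balance, and I would make that dependence explicit rather than hiding it in $\mathcal{O}(\cdot)$ notation in the final writeup.
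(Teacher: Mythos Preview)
Your proposal is correct and follows essentially the same route as the paper: write the CRPS regret as $\int_{\mathcal{Y}}\big[\text{pointwise }\ell_2\text{ regret at }z\big]\,dz$, bound the integrand for each fixed $z$ via the subroutines' regret guarantees plus an $O(1/N)$ discretization cost from rounding $F_t(z)$ to the nearest grid point, and integrate over the length-$B$ support. The only difference in execution is that the paper does not pass through Lemma~2 and an internal-to-external conversion as you do; it directly invokes the fact (Lemma~4.4 of Cesa-Bianchi--Lugosi, stated as Fact~1 in the appendix) that a binary forecaster with $\ell_2$-calibration error $\le R_T$ has external $\ell_2$-regret at most $N R_T$ against any fixed action, which delivers the $N R_T$ term in one stroke and spares you the $\sum_j R_{T_j}$ and $M\e$ bookkeeping you flag in subtlety~(i).
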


\begin{proof}[Proof (sketch)]
When $p_{tj} = G_t(y)$ is the output of a given binary calibration subroutine $\Fcal_j$ at some $y$, we know what $\lfloor F(y) \rfloor = j/M$ (by construction). Additionally, we know from \lemmaref{noregret} that $\Fcal_j$ minimizes regret. Thus, it has vanishing regret in terms of $\ell_2$ loss relative to the fixed prediction $j/M$: $\sum_{t=1}^{T_j} (o_{tj}-p_{tj})^2 \leq \sum_{t=1}^{T_j} (o_{tj}-j/M)^2 + o(T_j)$. But $o_{tj} = \Ind_{F(y_t) \leq j/m}$, and during the times $t$ when $\Fcal_j$ was invoked, during the times $t$ when $\Fcal_j$ was invoked $p_{tj} = G_t(y)$ and $j/M = F_t(y)$. Aggregating over $j$ and integrating over $y$ yields our result.
\end{proof}

These two lemmas lead to our main claim: that \algorithmref{recal} solves the online recalibration problem.

\begin{theorem}\label{thm:main}
Let $\Fcal$ be an $(\epsilon/2B)$-calibrated online subroutine with resolution $N \geq 2B/\epsilon$. 
Then \algorithmref{recal} with parameters $\Fcal$ and $M=N$ outputs $\epsilon$-recalibrated forecasts.
\end{theorem}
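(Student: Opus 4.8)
The plan is to combine the two structural lemmas already established, with a careful bookkeeping of how the parameters $\e$, $N$, $M$, and $B$ propagate. The theorem asks us to verify both halves of the definition of $\e$-recalibration: (a) that the $G_t$ are $\e$-calibrated for every $y \in \mathcal Y$, and (b) that the average CRPS regret of $G_t$ relative to $F_t$ is asymptotically at most $\e$. Each of these follows from one of the two preceding lemmas, with the hypotheses of the theorem chosen precisely so that the error terms collapse to $\e$.

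For part (a), I would invoke \lemmaref{calibration}. Its hypothesis is that each subroutine $\Fcal_j$ is $\e'$-calibrated for whatever tolerance $\e'$ we feed it; here we take $\e' = \epsilon/2B$. The lemma then gives, uniformly a.s.\ and for every $y$, the bound $C_T(y) \le \sum_{j=1}^M \tfrac{T_j}{T} R_{T_j} + \epsilon/2B$. Since $\sum_j T_j = T$, the first term is a convex combination of quantities $R_{T_j}$, each of which is $o(1)$; a short argument (splitting the indices $j$ according to whether $T_j$ is large or small as $T \to \infty$) shows the weighted average is itself $o(1)$, so it plays the role of the $R_T$ term in the definition of $\e$-calibration. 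Since $\epsilon/2B \le \epsilon$ (as $B > 0$ and we may assume $B \ge 1$, or more carefully just note the residual term is $\le \epsilon$ because $2B \ge 1$ given the label bound $|y_t| < B/2$ only makes sense nontrivially for $B$ not too small — if necessary I would simply state the cleaner bound $\epsilon/2B$ and note it is $\le \epsilon$ under the standing normalization), the $G_t$ are $\epsilon$-calibrated for all $y$. This part is essentially immediate.

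For part (b), I would apply \lemmaref{accuracy} with the CRPS loss $\ell$. That lemma requires $M \ge N > 1/\e'$, which is satisfied: we set $M = N$ and $N \ge 2B/\epsilon = 1/\e'$. It then yields, a.s., $\tfrac1T\sum_t \ell(y_t,G_t) - \tfrac1T\sum_t\ell(y_t,F_t) < N B R_T + \tfrac{2B}{N}$. Taking $\limsup_{T\to\infty}$, the term $N B R_T \to 0$ since $R_T = o(1)$ and $N, B$ are fixed constants, leaving the bound $\tfrac{2B}{N}$. The choice $N \ge 2B/\epsilon$ makes $\tfrac{2B}{N} \le \epsilon$, which is exactly condition (b). Combining (a) and (b) gives that $G_t$ is $\epsilon$-recalibrated relative to $F_t$, proving the theorem.

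The conceptual work is entirely in the two lemmas; the theorem itself is a parameter-matching exercise. The one place that deserves genuine care — and which I would flag as the main subtlety rather than a deep obstacle — is the passage from "$\sum_j \tfrac{T_j}{T} R_{T_j}$ is a convex combination of $o(1)$ terms" to "$\sum_j \tfrac{T_j}{T} R_{T_j} = o(1)$". This is not automatic in full generality (a convex combination of sequences each tending to zero need not tend to zero if the weights shift adversarially), but it works here because $M$ is a fixed finite number of subroutines: for each fixed $j$ either $T_j \to \infty$, in which case $R_{T_j} \to 0$, or $T_j$ stays bounded, in which case $\tfrac{T_j}{T} \to 0$ (and $R_{T_j}$ is bounded). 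Partitioning the $M$ indices accordingly and bounding each group shows the whole sum is $o(1)$. I would also double-check the almost-sure quantifier: both lemmas hold a.s.\ uniformly over $T$ (resp.\ over $T_j$) and over the finitely many $j \in \{1,\dots,M\}$, and a finite union of probability-one events has probability one, so the combined statement holds a.s.
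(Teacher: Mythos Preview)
Your proposal is correct and follows exactly the paper's approach: invoke \lemmaref{calibration} for part (a) and \lemmaref{accuracy} for part (b), then check that the parameter choices $\e' = \epsilon/2B$, $M=N$, and $N \ge 2B/\epsilon$ make the residual terms collapse to at most $\epsilon$. The paper's proof is a two-line parameter-matching exercise identical in structure to yours; you simply flesh out two points it leaves implicit --- the argument that the finite convex combination $\sum_j \tfrac{T_j}{T} R_{T_j}$ is $o(1)$, and the observation that $\epsilon/2B \le \epsilon$ requires a mild normalization on $B$ --- both of which are genuine subtleties the paper glosses over.
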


\begin{proof}
By \lemmaref{calibration}, \algorithmref{recal} is $(\e/2B)$-calibrated and by \lemmaref{accuracy}, its regret w.r.t. the $F_t$ tends to $< 2B/N < \e$. Hence, \theoremref{main} follows.
\end{proof}

% In the appendix, we provide a detailed argument for how $\ell$ can be chosen to be the misclassificaiton loss.

\paragraph{General proper losses}

Throughout our analysis, we have used the CRPS loss to measure the regret of our algorithm. This raises the question: is the CRPS loss necessary? One answer to this question is that if the loss $\ell$ used to measure regret is not a proper loss, then recalibration is not possible. % for some $\e > 0$.
%Note also that our results so far crucially relied on measuring accuracy using proper losses. This raises a natural question: can we use other losses for calibration? Our second theorem provides a negative answer.

\begin{theorem}
If $\ell$ is not proper, then no algorithm achieves recalibration w.r.t.~$\ell$ for all $\e > 0$.
\end{theorem}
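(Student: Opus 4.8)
The plan is to show that whenever $\ell$ is not proper there is a single adversarial instance on which \emph{every} recalibration algorithm fails. Since $\ell$ is not proper, by definition there exist forecasts $f^*, g^* \in \mathcal{F}$ with $\Exp_{y\sim f^*}\ell(y, g^*) < \Exp_{y\sim f^*}\ell(y, f^*)$; set $\delta := \Exp_{y\sim f^*}\ell(y, f^*) - \Exp_{y\sim f^*}\ell(y, g^*) > 0$. Consider the adversary that draws labels $y_t \sim f^*$ i.i.d.\ (supported on $[-B/2,B/2]$) and uses the constant baseline forecaster $H(x)\equiv g^*$, so that $F_t = g^*$ for all $t$. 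Intuitively, to be calibrated a recalibrator has essentially no choice but to output forecasts $G_t$ that converge to the true data distribution $f^*$; but because $\ell$ is not proper, forecasting $f^*$ is strictly worse than forecasting $g^*$ under $\ell$, so the $G_t$ incur regret bounded below by roughly $\delta$ relative to the $F_t$, violating part (b) of $\epsilon$-recalibration as soon as $\epsilon < \delta$. Note it suffices to exhibit one baseline and one data stream, since a recalibration algorithm must work for every baseline.

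The technical core is the claim: if $\{G_t\}$ is $\epsilon$-calibrated for all $y\in\mathcal{Y}$ and $y_t \sim f^*$ i.i.d., then a.s.\ $\liminf_T \frac1T\sum_{t=1}^T \ell(y_t, G_t) \ge \Exp_{y\sim f^*}\ell(y, f^*) - c(\epsilon)$ with $c(\epsilon)\to 0$ as $\epsilon\to 0$. I would prove it in three steps. (i) Since $G_t(y)$ is measurable with respect to the information available before $y_t$ is revealed and $\Pr(y_t \le y) = F^*(y)$, the CDF of $f^*$, a martingale strong law gives $\rho_T(y,p)\to F^*(y)$ a.s.\ for every fixed $y$ and every $p$ played infinitely often at $y$. (ii) Substituting this into $C_T(y)$, using $C_T(y)\le R_T+\epsilon$ and Markov's inequality, one gets that for each fixed $y$, asymptotically at least a $1-O(\sqrt\epsilon)$ fraction of times $t$ satisfy $|G_t(y) - F^*(y)| \le O(\sqrt\epsilon)$. (iii) Taking a finite grid $y_1 < \dots < y_k$ of mesh $\eta$ on $[-B/2,B/2]$, union-bounding the exceptional fractions over the grid, and using that $G_t$ and $F^*$ are nondecreasing, one upgrades this to: for at least a $1-O(k\sqrt\epsilon)$ fraction of times $t$, $G_t$ is within Lévy distance $\max(\eta, O(\sqrt\epsilon))$ of $f^*$ — the horizontal slack in the Lévy metric absorbs atoms of $f^*$, so no atomlessness is needed. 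Choosing $\eta$ as a function of $\epsilon$ (e.g.\ $\eta=\epsilon^{1/4}$) makes both the exceptional fraction and the Lévy distance $\to 0$. Assuming $\ell(y,\cdot)$ is bounded and weakly continuous, uniformly in $y$ — as holds for the CRPS on a bounded outcome space — the claim follows by splitting the time average over ``good'' and ``bad'' $t$ and applying the ordinary strong law to $\frac1T\sum_t \ell(y_t, f^*)$.

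Assembling the contradiction: suppose some algorithm were $\epsilon$-recalibrated on this instance for some $\epsilon < \delta/2$. The claim gives $\liminf_T \frac1T\sum_t \ell(y_t, G_t) \ge \Exp_{y\sim f^*}\ell(y, f^*) - c(\epsilon)$, while the strong law gives $\frac1T\sum_t \ell(y_t, F_t) = \frac1T\sum_t \ell(y_t, g^*) \to \Exp_{y\sim f^*}\ell(y, g^*)$ a.s. Hence $\limsup_T \frac1T\sum_t\bigl(\ell(y_t, G_t) - \ell(y_t, F_t)\bigr) \ge \delta - c(\epsilon) > \epsilon$ once $\epsilon$ is small enough that $c(\epsilon) < \delta/2$, contradicting part (b) of the definition of $\epsilon$-recalibration. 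Therefore no algorithm is $\epsilon$-recalibrated with respect to $\ell$ for all $\epsilon>0$.

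I expect step (iii) to be the \emph{main obstacle} — converting the per-$y$, time-averaged calibration guarantee into a statement that, for most time steps simultaneously, the full forecast $G_t$ is close to $f^*$ in a metric strong enough to control a possibly non-convex, non-proper $\ell(y_t,\cdot)$. This is where the interplay among atoms of $f^*$, the topology in which $\ell$ is assumed continuous, and the grid resolution $\eta(\epsilon)$ must be handled with care; by contrast, the existence of the witnessing pair $(f^*,g^*)$ and the law-of-large-numbers estimates are routine.
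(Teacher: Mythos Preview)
Your approach matches the paper's: pick the witnessing pair $(f^*,g^*)$ from non-properness, feed the recalibrator i.i.d.\ data from $f^*$ against the constant baseline $g^*$, argue that calibration forces the forecasts toward $f^*$, and conclude that the $\ell$-regret is bounded below by the gap $\delta$. The paper's own proof is a one-paragraph sketch in the \emph{binary} setting (it takes $f^*=\mathrm{Ber}(p')$ and simply asserts ``clearly the prediction of a calibrated forecaster $p_t$ must converge to $p'$''), whereas you carry out the argument in the regression setting and spell out exactly the step the paper elides --- your steps (i)--(iii), turning the per-$y$ calibration guarantee into closeness of the whole forecast $G_t$ to $f^*$. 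Your identification of (iii) as the real work is accurate; the paper does not address it at all, so your treatment is strictly more careful than what appears there.
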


% The proof of this algorithm is a slight generalization of the counter-example provided for the $\ell_1$ loss. Interestingly, it holds equally for online and batch settings. 
% It builds on the result of \citet{kuleshov2017estimating} and adds to our knowledge of the limitations of recalibration algorithms.

%\vspace{10mm}
%\subsection{Extensions}\label{sec:extensions}

% The assumptions of \lemmaref{accuracy} essentially require that $\ell$ be Lipschitz with constant $B$, which holds e.g.~for convex bounded losses that are studied in online learning. Our assumption is slightly more general since $\ell$ may also be discontinuous (like the misclassification loss).
% When $\ell$ is unbounded (like the log-loss), its values at the baseline algorithm's predictions must be bounded away from infinity.

On the other hand, in Appendix \ref{app:regret}, we provide a more general analysis that shows that: (1) a calibrated $\Fcal$ must have vanishing regret relative to a fixed prediction as measured using any proper score; (2) \algorithmref{recal} achieves vanishing regret relative to any proper score.
See Appendix \ref{app:regret} for a formal statement and proof.

\section{Applications}

\subsection{Choice of recalibration subroutine}

%   \begin{figure}
%   \vspace{-6mm}
% \begin{table}[H]
% \begin{center}
%   \def\arraystretch{1.5}
%   \begin{tabular}{c | c | c}
% Subroutine & {\footnotesize Regret Minimization} & {\footnotesize Blackwell Approchability} \\
% \hline
% Time {\scriptsize / step} & $O({1}/{\e})$ & $O(\log({1}/{\e}))$ \\
% Space {\scriptsize / step} & $O({1}/{\e^2})$ & $O({1}/{\e^2})$ \\
% Calibration & $O({1}/{\e \sqrt{\e T}})$ & $O({1}/{\e \sqrt{T}})$ \\
% Advantage & Simplicity & Efficiency
% \end{tabular}
% \end{center}
% \vspace{-2mm}
% \caption{\footnotesize\label{tbl:rates}Time and space complexity and convergence rate of \algorithmref{recal} using different subroutines.}
% \end{table}
%     \vspace{-8mm}
%   \end{figure}
%The above argument shows that as $T \to \infty$ the calibration loss $C_T$ of \algorithmref{recal} and its regret with respect to $\palg_t$ are bounded by $\e$. Here, we examine the rate of this convergence.

\algorithmref{recal} is compabible with any binary recalibration subroutine $\Fcal$. Two choices of $\Fcal$ include methods based on {\bf internal regret minimization} \citep{mannor2010calibration} and ones based on {\bf Blackwell approachability} \citep{abernethy11blackwell}. These yield different computational costs and convergence rates for \algorithmref{recal}.

Specifically, recall that $R_T$ denotes the rate of convergence of the calibration error $C_T$ of \algorithmref{recal}.
For most online calibration subroutines $\Fcal$,
$R_{T} \leq f(\e)/\sqrt{T}$ for some $f(\e)$.
In such cases, we can further bound the calibration error in \lemmaref{calibration} as
$$
\sum_{j=1}^M \frac{T_j}{T} R_{T_j} \leq \sum_{j=1}^M \frac{\sqrt{T_j}f(\e)}{T} \leq \frac{f(\e)}{\sqrt{ \e T}}. 
$$
In the second inequality, we set the $T_j$ to be equal. 
Thus, our recalibration procedure introduces an overhead of
$ \frac{1}{\sqrt{\e}} $
in the convergence rate of the calibration error $C_T$ and of the regret in \lemmaref{accuracy}.
In addition, we require $ \frac{1}{{\e}} $ times more memory and computation time (we run $1/\e$ instances of $\Fcal_j$). 
% Overall, our runtime in linear in $M$, and the cost of \algorithmref{recal} is negligible relative to fitting the base model.
% \tableref{rates} summarizes the convergence rates of \algorithmref{recal} when the subroutine is either the method of \citep{abernethy11blackwell} based on Blackwell approachability or the simpler but slower approach based on internal regret minimization \cite{mannor2010calibration}.
%Note also that by the same argument, an extra $1/\sqrt{\e}$ is added to the convergence rate of the regret $ \frac{1}{T} \sum_{t=1}^T (y_t - p_t)^2 - \frac{1}{T} \sum_{t=1}^T (y_t - \palg_t)^2 $ w.r.t. $\palg$.

When using an internal regret minimization subroutine, 
% \cite{mannor2010calibration}, 
the overall calibration error of \algorithmref{recal} is bounded as $O({1}/{\e \sqrt{\e T}})$ with $O(1/\e)$ time and $O(1/\e^2)$ space complexity. These numbers improve to $O(\log(1/\e))$ time complexity for a $O({1}/{\e \sqrt{T}})$ calibration bound when using the method of \citet{abernethy11blackwell} based on Blackwell approachability. The latter choice is what we recommend.

\subsection{Uncertainty estimation}

We complement our results with ways in which \algorithmref{recal} can yield predictions for various confidence intervals. % and other useful information.
\begin{theorem}
    Let $G_t$ for $t=1,2,...,T$ denote a sequence of $(\epsilon/2)$-calibrated forecasts. For any interval $[y_1, y_2]$, we have $\frac{1}{T} \sum_{t=1}^T ( G_t(y_2) - G_t(y_1) ) \to \frac{1}{T} \sum_{t=1}^T \mathbb{I}\{y_t \in [y_1, y_2]\}$ as $T \to \infty$ a.s.
\end{theorem}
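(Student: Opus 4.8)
The plan is to reduce the statement about intervals to the one-sided calibration guarantee we already have. The key observation is that $\mathbb{I}\{y_t \in [y_1,y_2]\} = \mathbb{I}\{y_t \le y_2\} - \mathbb{I}\{y_t \le y_1\}$ (modulo the boundary point $y_1$, whose contribution I expect to absorb into the error tolerance), so it suffices to show that for each fixed $y'$, $\frac{1}{T}\sum_{t=1}^T G_t(y') \to \frac{1}{T}\sum_{t=1}^T \mathbb{I}\{y_t \le y'\}$ a.s., and then subtract the two statements for $y_2$ and $y_1$. So first I would fix an arbitrary $y' \in \mathcal{Y}$ and work only with that cross-section of the CDFs.

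Second, I would unpack the calibration error $C_T(y')$ from Equation \eqref{eq:miscal_error} applied to the $G_t$. Writing $P_T(y') = \{G_1(y'),\dots,G_T(y')\}$ (the finite set of discretized probabilities, at most $N+1$ values since the subroutines output multiples of $1/N$), we have
\begin{align*}
\left| \frac{1}{T}\sum_{t=1}^T G_t(y') - \frac{1}{T}\sum_{t=1}^T \mathbb{I}\{y_t \le y'\} \right|
&= \left| \sum_{p \in P_T(y')} \left( p - \rho_T(y',p)\right)\left(\frac{1}{T}\sum_{t=1}^T \mathbb{I}\{G_t(y')=p\}\right) \right| \\
&\le \sum_{p \in P_T(y')} \left| \rho_T(y',p) - p \right|\left(\frac{1}{T}\sum_{t=1}^T \mathbb{I}\{G_t(y')=p\}\right) = C_T(y'),
\end{align*}
where the first equality just groups the sum over $t$ by the value $p = G_t(y')$ and uses the definition of $\rho_T(y',p)$ to rewrite $\sum_t \mathbb{I}\{y_t \le y', G_t(y')=p\} = \rho_T(y',p)\sum_t \mathbb{I}\{G_t(y')=p\}$. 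Since the $G_t$ are $(\epsilon/2)$-calibrated, $C_T(y') \le R_T + \epsilon/2$ with $R_T = o(1)$, so for large $T$ this quantity is at most $\epsilon$, a.s.

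Third, I would apply this bound at both $y_1$ and $y_2$ and combine by the triangle inequality, giving $\left|\frac{1}{T}\sum_t (G_t(y_2)-G_t(y_1)) - \frac{1}{T}\sum_t (\mathbb{I}\{y_t\le y_2\}-\mathbb{I}\{y_t\le y_1\})\right| \le C_T(y_2) + C_T(y_1) \le 2R_T + \epsilon$, and note that $\mathbb{I}\{y_t\le y_2\}-\mathbb{I}\{y_t\le y_1\} = \mathbb{I}\{y_t\in(y_1,y_2]\}$. Since $\epsilon$ is arbitrary and $R_T \to 0$, this yields convergence a.s. The main obstacle — really the only subtlety — is the half-open versus closed interval discrepancy at the left endpoint $y_1$: the clean cross-section argument naturally controls $\mathbb{I}\{y_t\in(y_1,y_2]\}$ rather than $\mathbb{I}\{y_t\in[y_1,y_2]\}$. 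One can either state the theorem with a half-open interval, or argue that the $\{y_t = y_1\}$ contributions are negligible under the convergence notion already in play, or invoke calibration at a point $y_1 - \delta$ and let $\delta \to 0$ together with $T \to \infty$; I would handle it with a brief remark rather than belaboring it.
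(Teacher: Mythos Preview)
Your proposal is correct and matches the paper's approach: the paper does not spell out a proof at all, saying only that the result ``follows directly from the definition of $\epsilon$-calibration,'' and your argument---grouping $\sum_t G_t(y')$ and $\sum_t \mathbb{I}\{y_t\le y'\}$ by the value $p=G_t(y')$, bounding the difference by $C_T(y')$, and applying this at both endpoints---is exactly the natural unpacking of that remark. One small slip: in your final step you write ``since $\epsilon$ is arbitrary,'' but $\epsilon$ is fixed by the hypothesis, so what you actually obtain is $\limsup_T |\cdot| \le \epsilon$ rather than exact convergence; this imprecision is already present in the theorem statement itself, so your argument delivers what the paper intends.
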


This theorem justifies the use of $F_t(y_2) - F_t(y_1)$ to estimate the probability of the event that $y_t$ falls in the interval $[y_1, y_2]$: on average, predicted probabilities will match true outcomes. The proof follows directly from the definition of 
$\epsilon$-calibration. This result directly mirrors the construction for calibrated confidence intervals in \citet{kuleshov2018accurate}.

\subsection{Online decision-making}

Consider a doctor seeing a stream of patients. For each patient $x_t$, they use a model $M$ of an outcome $y_t$ to estimate a loss $\ell(x_t) = \mathbb{E}_{y \sim M(x_t)} \ell(x_t, y, a(x_t))$ for a decision $a(x_t)$ (which could be $a(x_t) = \arg \min_a \mathbb{E}_{y \sim M(x_t)} [ \ell(x_t,y, a) ]$, e.g., a treatment that optimizes an expected outcome).
We want to guarantee that the doctor's predictions will be correct: over time, the estimated expected value will not exceed from the realized loss. Crucially, we want this to hold in non-IID settings.

Our framework enables us to achieve this result with only a weak condition---calibration.
The following concentration inequality shows that estimates of $v$ are unlikely to exceed the true $v$ on average (proof in Appendix \ref{app:applications}). If data was IID, this would be Markov's inequality: surprisingly, a similar  statement holds in non-IID settings.
% \vk{cite individual calibration}

\begin{theorem}
\label{thm:dist_calib_bound_app}
Let $M$ be a calibrated model and
let $\ell(y, a, x)$ be a monotonically non-increasing or non-decreasing loss in $y$.
Then for any sequence $(x_t, y_t)_{t=1}^T$ and $r > 1$, we have:
\begin{equation}
    \label{eqn:dist_calib_bound1}
    \lim_{T \to \infty} \frac{1}{T} \sum_{t=1}^T \mathbb{I} \left[ \ell(y_t, a(x_t), x_t) \geq r \ell(x_t)) \right] \leq 1 / r
\end{equation}
\end{theorem}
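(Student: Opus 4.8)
The plan is to reduce the statement to two ingredients: Markov's inequality applied separately to each forecast distribution $M(x_t)$, and the calibration guarantee of \theoremref{main}. Assume first that $\ell(\cdot,a,x)$ is non-decreasing in $y$; the non-increasing case is symmetric, and the degenerate case $\ell(x_t)=0$ can be set aside (there, calibration forces $y_t$ into the support of $M(x_t)$, on which $\ell$ vanishes). By monotonicity, for each $t$ the event $\{\ell(y_t,a(x_t),x_t)\ge r\,\ell(x_t)\}$ is contained in a threshold event $\{y_t\ge z_t\}$, where $z_t:=\inf\{y:\ell(y,a(x_t),x_t)\ge r\,\ell(x_t)\}$ depends on $x_t$ only. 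Hence it suffices to bound $\tfrac1T\sum_{t=1}^T\mathbb{I}[y_t\ge z_t]$ by $1/r$ in the limit.

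The first step is a pointwise bound on what the forecaster itself predicts for the bad event. Since $M(x_t)$ is a genuine probability distribution and $\ell\ge 0$, Markov's inequality gives
\[
\mathbb{P}_{y\sim M(x_t)}\big(\ell(y,a(x_t),x_t)\ge r\,\ell(x_t)\big)\le\frac{\mathbb{E}_{y\sim M(x_t)}\,\ell(y,a(x_t),x_t)}{r\,\ell(x_t)}=\frac1r,
\]
so that, writing $F_t$ for the CDF of $M(x_t)$, we have $F_t(z_t)\ge 1-\tfrac1r$ for every $t$. In words: at each step the forecaster already assigns probability at most $1/r$ to the bad event, so the only remaining task is to show that calibration forces the realized frequency of these events to track the forecast on average.

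The second step invokes calibration, and this is where I expect the main difficulty to lie. By \theoremref{main}, $M$ is $\delta$-calibrated for every $y\in\mathcal{Y}$ simultaneously, and summing the per-level calibration error over the discretized forecast values yields $\big|\tfrac1T\sum_t\mathbb{I}[y_t\le y]-\tfrac1T\sum_t F_t(y)\big|\le R_T+\delta$ for each fixed $y$. The obstacle is that the thresholds $z_t$ vary from step to step, so this bound cannot be applied at a single $y$; one must aggregate over the data-dependent thresholds. My plan is to discretize $\mathcal{Y}=[-B/2,B/2]$ into a finite grid of spacing $\Delta$, group the time steps by (the grid cell of the relevant quantile of $F_t$, the discrete forecast level at that quantile), apply the per-$y$ calibration bound within each of the $O(N/\Delta)$ groups, union-bound over groups, and then let $T\to\infty$ and $\Delta\to 0$ (the residual being the calibration tolerance, which is negligible for a suitably chosen subroutine). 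The delicate point is that $F_t$ may have jumps, so the rounding must be performed on the forecast's $(1-1/r)$-quantile $\inf\{y:F_t(y)\ge 1-1/r\}$ rather than naively on $z_t$, so that the group to which calibration is applied still carries forecast mass $\ge 1-\tfrac1r-o(1)$ for the event in question; quantifying this rounding error against an adversarial choice of $x_t$ is the crux. Combining the three steps, $\tfrac1T\sum_t\mathbb{I}[y_t\ge z_t]$ is asymptotically at most $\tfrac1T\sum_t(1-F_t(z_t))\le 1/r$, and since the theorem's event implies $\{y_t\ge z_t\}$, the claim follows.
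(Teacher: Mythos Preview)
Your two ingredients—Markov's inequality applied to each forecast distribution, then calibration—are exactly the paper's. But you overlook a one-line simplification that removes the need for any discretization or grouping.

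You already have $F_t(z_t) \ge 1 - 1/r$ for every $t$. Since $F_t$ is non-decreasing, the event $\{y_t \ge z_t\}$ immediately implies $F_t(y_t) \ge F_t(z_t) \ge 1 - 1/r$. This is a \emph{fixed} threshold in PIT space, not a data-dependent one, and quantile calibration in the sense $\tfrac1T\sum_t \mathbb{I}[F_t(y_t) \ge 1 - s] \to s$ finishes the proof in one line. The paper argues exactly this way in Appendix~\ref{app:applications}: it shows directly that the bad event forces $F_{x_t}(y_t) \ge 1 - s$ (with $s = 1/r$) and then invokes calibration at the single level $1-s$.

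Your proposed discretize-and-group route may be salvageable, but as written it has a loose end: a union bound over $O(N/\Delta)$ groups multiplies the calibration residual by a factor that blows up as $\Delta \to 0$, so you would have to manage the order of the limits $T \to \infty$, $\Delta \to 0$, and calibration tolerance $\to 0$, none of which you spell out. Moreover, the groups you define (by grid cell of the $(1-1/r)$-quantile of $F_t$ and by forecast level there) do not obviously align with the conditioning set $\{t : F_t(y) = p\}$ that appears in the paper's per-$y$ calibration definition, so it is not clear the per-$y$ bound applies within your groups without further argument. The PIT-space transformation sidesteps all of this.
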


\section{Experiments}\label{sec:experiments}

Next, we evaluate \algorithmref{recal} on regression tasks as well as on Bayesian optimization, a sequential decision-making process that induces a non-i.i.d.~data distribution. We performed all  experiments on a laptop, indicating the low overhead of our method.

\paragraph{Baselines}

We compare our randomized online calibration with two baselines. Calibrated regression is a popular algorithm for the IID setting~\citep{kuleshov2018accurate} and can be seen as estimating the same mapping as Algorithm \ref{algo:recal} using kernel density estimation with a tophat kernel.  
Non-randomized online calibration uses the same subroutine as \algorithmref{recal}, but outputs the expected probability as opposed to a random sample; we found this to be a strong baseline that outperforms simple density estimation and reveals the value of randomization.

\textbf{Analysis of calibration.} We assess the calibration of the base model and the recalibrated model with calibration scores defined using the probability integral transform~\citep{gneiting2007probforecast}. 
% For each input $X$, we can use the base model to predict the probability distribution over outcome $y$. Hence, it is possible to compute the probability density $P$ at the observed outcome $y$ using the output of the base model. Thus, 
We define the calibration score as 
$\text{cal}(p_1, y_1,..,p_n, y_n) = \sum_{j=1}^{m} ((q_j-q_{j-1}) - \hat{p_j})^2,$
where $q_0=0 < q_1 < q_2 <..<q_m = 1$ are $m$ confidence levels. The $\hat{p_j}$ is estimated as $\hat{p_j} = |\{ y_t |  q_{j-1} \leq p_t \leq q_j,  t=1,..,N\}|/N.$ 
% The calibration scores are computed on each batch of data $\{P_t, y_t\}_{t=nt'+1}^{n(t'+1)}$, before being observed , where $n$ is the batch size and $t'$ is the time-step.

\subsection{UCI Datasets}

We experiment with four multivariate UCI datasets~\citep{Dua2019UCI} to evaluate our online calibration algorithm.  

\textbf{Setup.}
Our dataset consists of input and output pairs $\{x_t, y_t\}_{t=1}^{T}$ where $T$ is the size of the dataset. 
We simulate a stream of data by sending batches of data-points $\{x_t, y_t\}_{t=nt'+1}^{n(t'+1)}$ to our model, where $t'$ is the time-step and $n$ is the batch-size. This simulation is run for $\left \lceil{T/n}\right \rceil $ time-steps. For each batch, Bayesian ridge regression is fit to the data and the recalibrator is trained. 
We set $N=20$ in the recalibrator and use a batch size of $n=10$ unless stated otherwise.

\begin{table*}[t]
  \caption{Evaluation of online calibration on UCI datasets. We compare the performance of online calibration against non-randomized online calibration, kernel density estimation, and uncalibrated (i.e., raw) baselines. Our method produces the lowest calibration errors in the last time step. Results hold with std error quoted in braces (10 experimental runs, fixed dataset).}
  \label{table:uci}
  \centering
  % \resizebox{\textwidth}{!}
  \begin{small}
  % {
  % \begin{tabular}{lcccc}
  %   \toprule
  %   % Dataset & \multicolumn{4}{c}{Calibration Error (Last Time-step) }\\
  %   % \midrule
  %   Dataset & Uncalibrated  & Kernel Density & Online Calibration & Online Calibration \\
    
  %   & (Raw) & Estimation	& (Non-randomized) & \\
  %   \midrule
  %   Aq. Toxicity (Daphnia Magna) & {0.0087} & {0.0037} & {0.0040} & {\textbf{0.0017}} \\
  %   % (Daphnia Magna)& & & &  \\
  %   Aq. Toxicity (Fathead Minnow) & {0.0108} & {0.0137} & {0.0124} &  {\textbf{0.0072}} \\
  %   % (Fathead Minnow)& & & &  \\
     
  %     Energy Efficiency & 0.3336 & 0.0708 & 0.0822 & \textbf{0.0528} \\
  %     Facebook Comment Volume & 0.2510 & 0.1060 & 0.0628 & \textbf{0.0524} \\
    
  %   \bottomrule
  % \end{tabular}
  % }
  {
  \begin{tabular}{lcccc}
    \toprule
    % Dataset & \multicolumn{4}{c}{Calibration Error (Last Time-step) }\\
    % \midrule
    Dataset & Uncalibrated  & Kernel Density & Online Calibration & Online Calibration \\
    
    & (Raw) & Estimation	& (Non-randomized) & \\
    \midrule
    Aq. Toxicity (Daphnia Magna) & {0.0081 (0.0001)} & {0.0055 (0.0002)} & {0.0058 (0.0003)} & {\textbf{0.0027 (0.0001)}} \\
    % (Daphnia Magna)& & & &  \\
    Aq. Toxicity (Fathead Minnow) & {0.0111 (0.0000)} & {0.0097 (0.0005)} & {0.0084 (0.0005)} &  {\textbf{0.0031 (0.0003)}} \\
    % (Fathead Minnow)& & & &  \\
     
      Energy Efficiency & 0.3322 (0.0001) & 0.2857 (0.0356) & 0.1702 (0.0094) & \textbf{0.1156 (0.0061)} \\
      Facebook Comment Volume & 0.2510 (0.0000) & 0.0589 (0.0050) & 0.0623 (0.0000) & \textbf{0.0518 (0.0002)} \\
    
    \bottomrule
  \end{tabular}
  }
   \end{small}
\end{table*}

\begin{figure*}[tb]
\centering     %%% not \center
\vspace{-3mm}
\subfigure[Aquatic Toxicity (Daphnia Magna)]{\label{fig:daphnia-aquatic-toxicity}\includegraphics[width=0.49\linewidth]{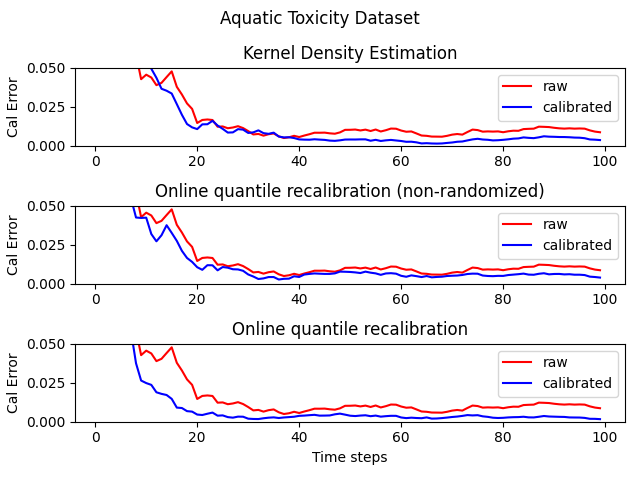}}
\subfigure[Aquatic Toxicity (Fathead Minnow)]{\label{fig:fathead-aquatic-toxicity}\includegraphics[width=0.49\linewidth]{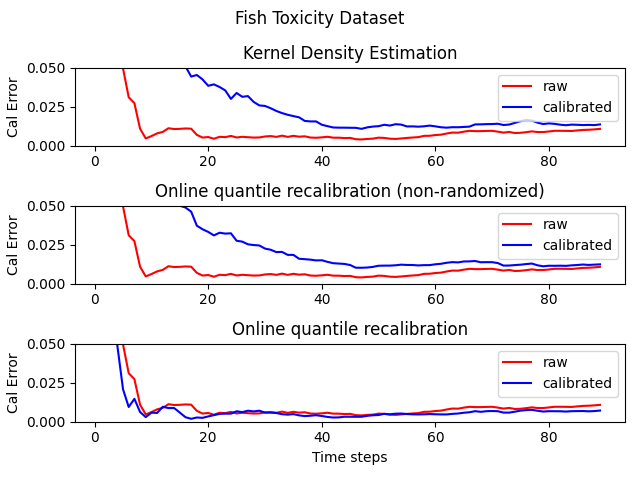}}

\caption{Performance of online calibration on the Aquatic Toxicity datasets. Aquatic toxicity towards two different types of fish (Daphnia Magna~\ref{fig:daphnia-aquatic-toxicity} and Fathead Minnow~\ref{fig:fathead-aquatic-toxicity}) is predicted by the base model. In both datasets, online calibration outperforms the baseline methods.}
\label{fig:aquatic-toxicity}
\end{figure*}

\paragraph{Aquatic toxicity datasets}
We evaluate our algorithm on the QSAR (Quantitative Structure-Activity Relationship) Aquatic Toxicity Dataset~\ref{fig:daphnia-aquatic-toxicity} (batch size n=5)
%quantitative structure-activity relationship
and Fish Toxicity Dataset~\ref{fig:fathead-aquatic-toxicity} (batch size n=10), where aquatic toxicity towards two different types of fish is predicted using 8 and 6 molecular descriptors as features respectively. %A batch-size of 5 was used for the Aquatic Toxicity dataset and a batch-size of 10 was used for the Fish Toxicity dataset. 
In Figure~\ref{fig:aquatic-toxicity}, we can see that the randomized online calibration algorithm produces a lower calibration error than the non-randomized baseline. We also compare the performance of our algorithm against uniform kernel density estimation by maintaining a running average of probabilities in each incoming batch of data-points. For the Fish Toxicity Dataset, we can see that only online calibration improves calibration errors relative to the baseline model. We report all final calibration errors in Table~\ref{table:uci}. 
% \begin{wrapfigure}{r}{7cm}
% \begin{figure}
% % \vspace{-0.5cm}
% \centering     %%% not \center
% \subfigure{\label{ewa-recalibrator-energy-efficiency}\includegraphics[width=1.01\linewidth]{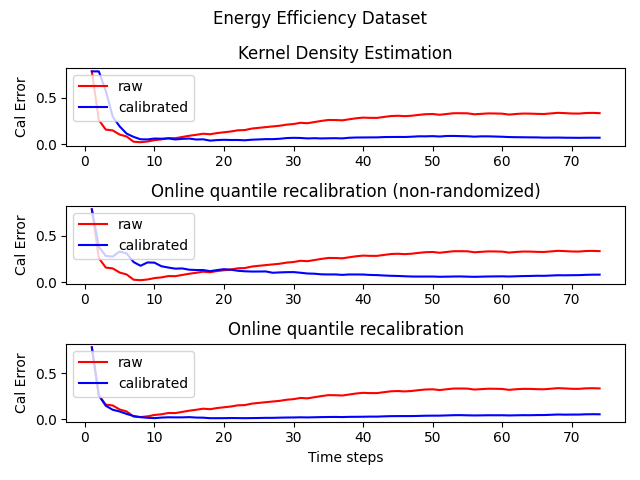}}
% % \subfigure[Running Average Recalibrator]{\label{mean-recalibrator-energy-efficiency}\includegraphics[width=0.4\linewidth]{figures/recalibration-energy-efficiency-gaussian_mean_recalib.png}}
% \caption{Online recalibration (blue, bottom) attains a lower calibration error at a faster rate than baselines (red and top, middle) on the Energy dataset.}
% \label{fig:energy-efficiency}
% % \vspace{-1cm}
% \end{figure}
% \end{wrapfigure}

\begin{figure*}[!htb]
\centering     %%% not \center
\subfigure[Energy Efficiency]{\label{ewa-recalibrator-energy-efficiency}\includegraphics[width=0.49\linewidth]{figures/recalibration-energy-efficiency-gaussian_recalib.png}}
\subfigure[Facebook Comment Volume]{\label{ewa-recalibrator-facebook-comments}\includegraphics[width=0.49\linewidth]{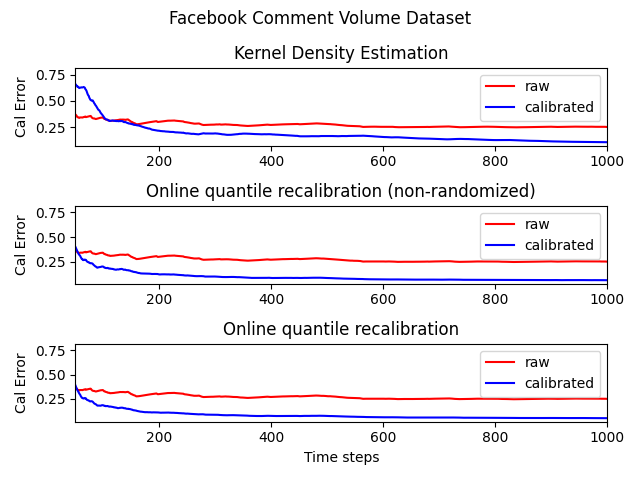}}

\caption{Performance of online calibration on the Energy Efficiency and Facebook Comment Volume datasets. In both datasets, online recalibration (blue, bottom) attains a lower calibration error at a faster rate than baselines (red and top, middle).}
\label{fig:other-datasets}
\vspace{-2mm}
\end{figure*}

\paragraph{Energy efficiency dataset}
The heating load and cooling load of a building is predicted using 8 building parameters as features. 
In Figure~\ref{ewa-recalibrator-energy-efficiency}, we see that the calibration errors produced by the online calibration algorithm drop sharply within the initial 10 time-steps. The baselines also produce a drop in calibration scores, but it happens more gradually.

\paragraph{Facebook comment volume dataset}
In Figure~\ref{ewa-recalibrator-facebook-comments}, the Facebook Comment Volume Dataset is used where the number of comments is to be predicted using 53 attributes associated with a post. We use the initial 10000 data-points from the dataset for this experiment. Here, the non-randomized and randomized online calibration algorithms produce a similar drop in calibration errors, but the  randomized online calibration algorithm still dominates both baselines (Table~\ref{table:uci}).

\subsection{Bayesian optimization}

We also apply online recalibration in the context of Bayesian optimization, an online model-based decision-making task in which {\bf the data distribution shifts over time} (it is the result of our actions). We find that improved uncertainties yield faster convergence to higher quality optima.

 \begin{wraptable}{l}{7.5cm}
%\begin{table}
% \vspace{-0.6cm}
  \caption{Recalibrated Bayesian optimization}
  \label{table:bayes-opt-uai}
  \centering
  % \resizebox{\textwidth}{!}
  % \begin{small}
{
  \begin{tabular}{lcc}
    \toprule
    % Benchmark & \multicolumn{4}{c}{Minima Obtained (Last Time-step) }\\
    % \midrule
    Benchmark & Uncalibrated & Recalibrated  \\
    
    %& (Raw) & & (Non-randomized) & \\
    \midrule
    % Sixhumpcamel (2D) & -0.378 (0.146) & & -1.029 (0.002) &  \\ 
    Ackley (2D) & 9.925 (3.502)  & \textbf{8.313 (3.403)}   \\    
      % Alpine (10D) & 18.554 (0.886)  & \textbf{14.025 (2.204)}  \\
    SixHump (2D) & -0.378 (0.146) & \textbf{-1.029 (0.002)}  \\ 
    Ackley (10D) & 14.638 (0.591) &  \textbf{10.867 (2.343)}   \\    
      Alpine (10D) & 13.911 (1.846) &  \textbf{12.163 (1.555)}   \\      
    \bottomrule
  \end{tabular}
  }
  % \end{small}
  % \vspace{-0.5cm}
%\end{table}
\end{wraptable}
% \vspace{-2mm}

\begin{figure*}[h]
\centering     %%% not \center
\subfigure[SixHumpCamel]{\label{ewa-recalibrator-sixhumpcamel}\includegraphics[width=0.32\linewidth]{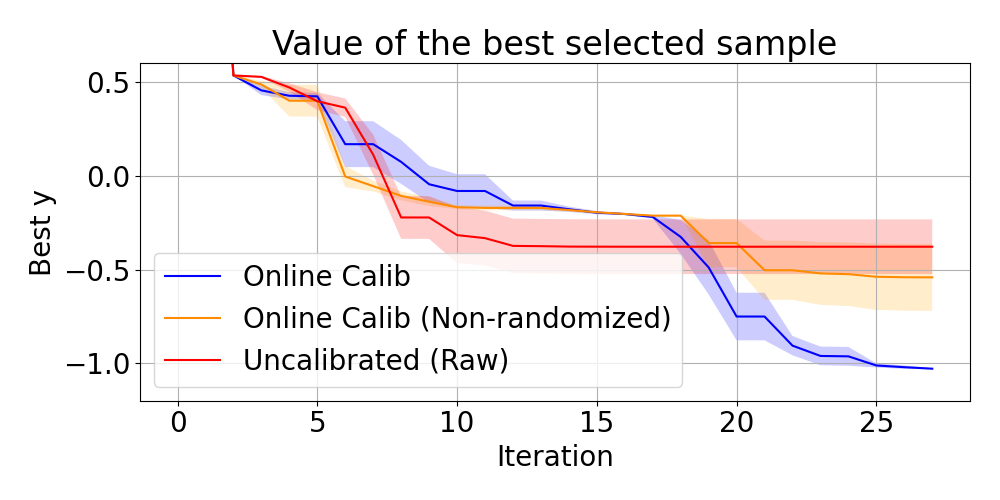}}
\subfigure[Beale]{\label{beale}\includegraphics[width=0.32\linewidth]{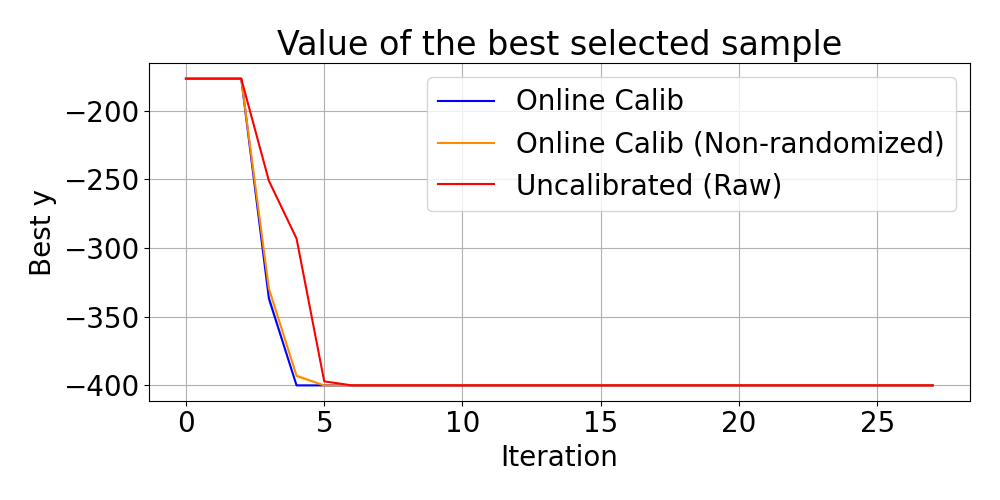}}
\subfigure[Mccormick]{\label{Mccormick}\includegraphics[width=0.32\linewidth]{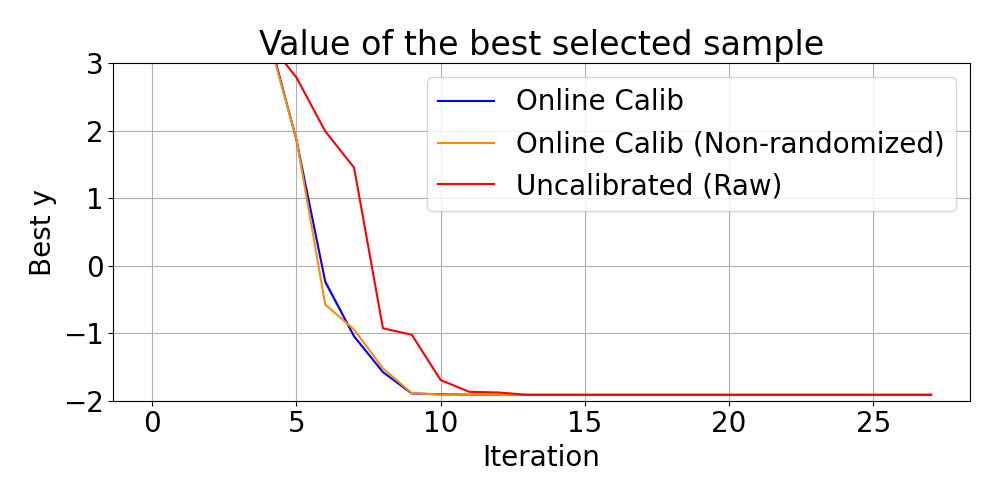}}
% \subfigure[Cosines]{\label{Cosines}\includegraphics[width=0.4\linewidth]{ }}
% \caption{Online Calibration Improves Bayesian optimization}
% \subfigure[Ackley]{\label{Ackley}\includegraphics[width=0.4\linewidth]{ }}
% \subfigure[Alpine]{\label{Alpine}\includegraphics[width=0.4\linewidth]{ }}
% \subfigure[Rosenbrock]{\label{Rosenbrock}\includegraphics[width=0.4\linewidth]{ }}
\caption{Performance of recalibration methods on Bayesian optimization benchmarks}
\vspace{-2mm}
\label{fig:bayes-opt-main}
\end{figure*}

% \begin{wrapfigure}{r}{7cm}
% \begin{figure}
% % \vspace{-1.5cm}
% \centering     %%% not \center
% \subfigure{\label{ewa-recalibrator-facebook-comments}\includegraphics[width=0.97\linewidth]{figures/recalibration-facebook-comments-total-gaussian_recalib.png}}
% % \subfigure[Running Average Recalibrator]{\label{mean-recalibrator-facebook-comments}\includegraphics[width=0.4\linewidth]{figures/recalibration-facebook-comments-total-gaussian_mean_recalib.png}}
% \caption{
% % Online Calibration on the Facebook Comments Volume Dataset. 
% % The randomized and non-randomized online calibration algorithms both outperform the simple kernel density estimation baseline by producing s sharper drop in calibration errors in the initial 100 time-steps. 
% Online recalibration yields the best calibration on Facebook Comments Volume Data}
% \label{fig:facebook-comments}
% % \vspace{-0.5cm}
% \end{figure}

\paragraph{Setup}
Bayesian optimization attempts to find the global minimum $x^\star = \arg \min_{x \in \mathcal{X}} f(x)$ of an unknown function $f:\mathcal{X} \to \mathbb{R} $ over an input space $\mathcal{X} \subseteq \mathbb{R}^D$. 
We are given an initial labeled dataset $x_t, y_t \in \mathcal{X} \times \mathbb{R}$ for $n=3$. At every time-step $t$, we use normal and recalibrated uncertainties from the probabilistic model $\mathcal{M}:\mathcal{X} \to (\mathbb{R} \to [0, 1])$ of $f$ (here, a Gaussian Process) to select the next data-point $x_{next}$ and iteratively update the model $\mathcal{M}$. 
% \paragraph{Setup.} 
We use popular benchmark functions to evaluate the performance of Bayesian optimization. 
% We initialize the Bayesian optimization with 3 randomly chosen data-points. 
We use the Lower Confidence Bound (LCB) acquisition function to select the data-point $x_t$.
See Appendix \ref{apdx:bayes_opt} for details.
% At any given time-step $T$, we have the dataset $\mathcal{D}_T = \{x_t, y_t\}_{t=1}^{T}$ collected iteratively. We use leave-one-out cross-validation splits to obtain a calibration dataset. 

Table \ref{table:bayes-opt-uai} shows that 
the online recalibration of uncertainties in a Bayesian optimization (BO) model
% online recalibrated Bayesian optimization can 
achieves lower minima than an uncalibrated model (results averaged over 5 overall BO runs with fixed initialization).
Figure~\ref{fig:bayes-opt-main} shows that online recalibrated Bayesian optimization can also reach optima in fewer steps. The error bars for the Beale and Mccormick functions are too small to be visible in the plots. All error bars denote standard errors. % We also show improvements over calibration without randomization. %using online calibration of uncertainties provided by $\mathcal{M}$ allows us to reach a lower minima with Bayesian optimziation. 

\begin{table*}[!h]
  \caption{Comparison to existing methods in the literature}
  \label{table:comparison}
  \centering
  \begin{small}
  {
  \begin{tabular}{llllllll}
    \toprule
    Method & Setting  & Output & Calibration & Recalibrator & Regret & Proof Technique \\
    \midrule
    Foster \& Vohra \cite{foster98asymptoticcalibration} &	Class.	& $p_t \in [0,1]$ & Conditional & n/a &	n/a & Int.~regret min. \\
    Kuleshov \& Ermon \cite{kuleshov2017estimating} &	Class.	& $p_t \in [0,1]$ & Conditional & $p$-to-$p$ & L2 loss & Int.~regret min. \\
    Gibbs \& Candes \cite{gibbs2022conformal} &	Regr.	& $q_t \in [0,1]$ & One quantile & $q$-to-$q$ & n/a & Quantile regr. \\
    Ours &	Regr.	& CDF $F_t$ & CDF $\forall y$ & $F(y)$-to-$F(y)$ & CRPS & CDF regr. \\
    \bottomrule
  \end{tabular}
  }
  \end{small}
\end{table*}

\section{Discussion}\label{sec:discussion}

\paragraph{Adversarial calibration methods}

Table \ref{table:comparison} compares our method against its closest alternatives. Unlike previous algorithms aimed at classification that output a binary forecast $p_t \in [0,1]$ \citep{foster98asymptoticcalibration,kuleshov2017estimating}, we study marginal quantile calibration in regression.
Our work resembles adaptive conformal inference \citep{gibbs2022conformal}, but provides a CDF-like object $F_t$ instead of one confidence interval $q_t \in [0,1]$ and yields a different notion of calibration. Crucially, we provide regret guarantees relative to a baseline model. 

Specifically, our technical goal is marginal CDF calibration: estimating the probability of the event 
$y_t \leq y$ for all $y$. Note that these probabilities are marginal over the $y_t$; this is in contrast to conditional calibration for $y_t = 1| p_t = p$ as in \citet{kuleshov2017estimating}.
We call our technical strategy online CDF regression (by analogy to quantile regression): we remap the predicted probabilities $F_t(y)$ (for any $y$) to a calibrated probability $R(F_t(y))$.
Our proof technique establishes calibration by relating final calibration to the calibration of each subroutine using Jensen’s inequality.
We establish low regret by aggregating the regret of all the subroutines within one CRPS loss.
% This step is particularly involved compared to the analogous argument in \citet{kuleshov2017estimating}.

% Our technical strategy allows us to achieve three key goals: (1) identifying a notion of calibration in online regression; (2) identifying a suitable notion of regret; (3) deriving the first algorithm that achieves this property.

Most existing methods in online calibrated classification \cite{foster98asymptoticcalibration,vovk2005defensive,abernethy11blackwell,okoroafor2024faster} or regression \cite{gibbs2022conformal} do not provide guarantees for regret, except online recalibrated classification \cite{kuleshov2017estimating} and calibeating \cite{foster2022calibeating,lee2022online}. However, these methods are only for binary classification, whereas ours are for regression.

\paragraph{Marginal calibration}
Our definition of calibration in regression is  marginal across all $x_t, y_t$; this is in contrast to classification \citep{foster98asymptoticcalibration}, where calibration is conditional (also known as distributional) on each $p$. 
Marginal calibration implies that the true outcome falls below the 90\% quantile 90\% of times (averaged over all $t$). 
% This definition is useful even though the full CDF is not calibrated (each quantile is): it is commonly used to form confidence intervals \citep{kuleshov2018accurate, gibbs2022conformal}, and we provide one simple way to do that using our method.
%
Distribution calibration in regression \cite{Kuleshov2022Calibrated} would be PPAD-hard by reduction from multi-class \citep{hazan2012calibration}. Marginal calibration is also currently a common definition of calibration for regression. For example, \citet{kuleshov2018accurate} in the IID setting or \citet{gibbs2022conformal} in the online setting adopt this definition.

\paragraph{Batch vs online calibration}

%If the $x_t, y_t$ have a distribution $\BP$, the calibration problem reduces to estimating the density $\BP(y|F(x)=\palg)$. 

\algorithmref{recal} can be seen as a direct counterpart to the histogram technique, a simple method for density estimation. With the histogram approach, the $F_t$ is split into N bins, and the average $y$ value is estimated for each bin. Because of the i.i.d. assumption, the output probabilities are calibrated, and the bin width determines the sharpness. Note that by Hoeffding's inequality, the average for a specific bin converges at a faster rate of $O({1}/{\sqrt{T_j}})$\citep{devroye1996probabilistic}, as opposed to the $O({1}/{\sqrt{\e T_j}})$ rate given by \citet{abernethy11blackwell}; hence online calibration is harder than batch. 
% highlighting the challenge of calibration in an online setting.

% \paragraph{Conformal Prediction} Conformal prediction \citep{vovk2005defensive} is a technique for constructing calibrated predictive sets. It has been extended to handle distribution shifts \citep{hendrycks2018using, tibshirani2019conformal, Barber2022Conformal}, as well as to online adversarial data \citep{gibbs2022conformal}. Calibrated prediction \citep{platt1999probabilistic,kuleshov2018accurate} is closely related to conformal prediction, but focuses on predicting distributions rather than sets.

% \paragraph{Deterministic Forecasting} 

% Interestingly, the method of \citet{gibbs2022conformal} is deterministic, while ours is randomized, and we proved that this is unavoidable for our type of calibration. However, \citet{vovk2005defensive} defined deterministic algorithms for {\em weak} calibration, which asks for a small difference between average predicted $p_t$ and true $y_t$ at times $t$ when $p_t \approx p^*$ for any $p^*$. Crucially, the relation $\approx$ is ``soft": it can be obtained by setting the indicator functions in the definition of $\rho_T(y,p)$ to continuous approximations. On the other hand, we take ``$\approx$" as an exact equality.
% We anticipate that our construction can admit a similar deterministic extension to weak calibration by using subroutines from \citet{vovk2005defensive}.

\section{Previous work \& conclusion}

Calibrated probabilities are widely used as confidence measures in the context of binary classification.
Such probabilities are obtained via recalibration methods, of which Platt scaling \cite{platt1999probabilistic} and isotonic regression \cite{niculescu2005predicting} are by far the most popular. Recalibration methods also possess multiclass extensions, which typically involve training multiple one-vs-all predictors \cite{zadrozny2002transforming}, as well as extensions to ranking losses \cite{menon2012ranking}, combinations of estimators \cite{zhong2013accurate}, and structured prediction \cite{kuleshov2015calibrated}. Recalibration algorithms have applied to improve reinforcement learning~\citep{Malik2019Calibrated}, Bayesian optimization~\citep{Deshpande2021Calibrated, stanton2023bayesian} and deep learning~\citep{Kuleshov2022Calibrated}. Crucially, all of these methods implicitly rely on the assumption that data is sampled i.i.d.~form an underlying distribution; they can be interpreted as density estimation techniques.

% In the online setting, the calibration problem was formalized by \citep{dawid1982well}; 
Online calibration was first proposed by \citep{foster98asymptoticcalibration}. Existing algorithms are based on internal regret minimization \cite{cesabianchi2006prediction} or on Blackwell approachability \cite{foster1997proof}; recently, these approaches were shown to be closely related \cite{abernethy11blackwell,mannor2010calibration}. 
% Unfortunately, online calibration is PPAD-hard \cite{hazan2012calibration}.
%
Conformal prediction \citep{vovk2005defensive} is a technique for constructing calibrated predictive sets; it has been extended to handle distribution shifts \citep{hendrycks2018using, tibshirani2019conformal, Barber2022Conformal}, and online adversarial data \citep{gibbs2022conformal}. 
% Calibrated prediction \citep{platt1999probabilistic,kuleshov2018accurate} is closely related to conformal prediction, but focuses on predicting distributions rather than sets.

% The concepts of calibration and sharpness were first formalized in the statistics literature \cite{murphy1973vector,gneiting2007probabilistic}. These metrics are captured by a class of {\em proper} losses and can be used both for evaluating \cite{buja05lossfunctions,brocker2009decomposition} and constructing \cite{kuleshov2015calibrated} calibrated forecasts.

% In this paper, 
% \section{CONCLUSION}
\paragraph{Conclusion}

% \section{Conclusion}
% \paragraph{Conclusion}

%The need for calibrated probability estimates naturally arises in many applications, including medical diagnosis and natural language processing. Current recalibration techniques implicitly require that the data is distributed i.i.d., which potentially makes them unreliable when this assumption does not hold. 
%Although the online learning literature introduces several techniques that forgo of this assumption, these algorithms are not suitable for machine learning applications, since they do not admit covariates, and therefore cannot produce useful predictions. 

% We define online calibrated forecasting to generalize calibrated regression to settings where data may not be distributed as IID. Current recalibration techniques implicitly require that the data is distributed i.i.d., which potentially makes them unreliable when this assumption does not hold. 
% In this work, we introduced the first recalibration technique that provably recalibrates any existing forecaster with a vanishingly small degradation in accuracy. This method does not make i.i.d.~assumptions, and is provably calibrated even on adversarial input. We analyzed our method's theoretical properties and showed excellent empirical performance on several real-world benchmarks, where the method converges quickly and retains good accuracy.

We presented a novel approach to uncertainty estimation that leverages online learning. Our approach extends existing online learning methods to handle predictive uncertainty while ensuring high accuracy, providing formal guarantees on calibration and regret on adversarial input. 

We introduced a new problem called online calibrated forecasting, and proposed algorithms that generalize calibrated regression to non-IID settings. Our methods are effective on several predictive tasks and hold potential to improve performance in sequential model-based decision-making settings where we are likely to observe non-stationary data. 

% \paragraph{Acknowledgements.} This work is supported by the NSF (grant \#1649208) and by the Future of Life Institute (grant 2016-158687).

\newpage

\bibliography{all}
% \newpage
% \input{checklist}
%\newpage

% \title{Online Calibrated Regression for Adversarially Robust Forecasting (Supplementary Material)}
%\appendix
% \end{document}
% \begin{document}
\newpage

%\vspace{12cm}
%\let\cleardoublepage\clearpage
\title{Adversarial Calibrated Regression
for Online Decision Making (Supplementary Material and Code)}

\appendix
% \onecolumn
\onecolumn

\section{Correctness of the recalibration procedure}

\label{app:proofs}

In the appendix, we provide the proofs of the theorems from the main part of the paper.

\paragraph{Notation}
We use $\Ind_E$ denote the indicator function of $E$, $[N]$ and $[N]_0$ to (respectively) denote the sets $\{1,2,...,N\}$ and $\{0,1,2,...,N\}$, and $\Delta_d$ to denote a $d$-dimensional simplex.

\paragraph{Setup}

We place our work in the framework of online learning 
\citep{shalev2007phd}.
At each time step $t = 1,2,...$, we are given features $x_t \in \mathcal{X} $. We use a forecaster $H : \mathcal{X} \to \mathcal{P}$ to produce a prediction $p_t = H(x_t)$, $p_t \in \mathcal{P}$ in the set of distributions $\mathcal{P}$ over a target $y \in \mathcal{Y}$.
Nature then reveals the true target $y_t \in \mathcal{Y}$ and we incur a loss of $\ell(y_t, p_t)$, where $\ell : \mathcal{Y} \times \mathcal{P}  \to \Rb^+$ is a loss function.
The forecaster $H$ updates itself based on $x_t, y_t$, and we proceed to time $t+1$.

Unlike in classical machine learning, we do not assume that the $x_t, y_t$ are i.i.d.: they can be random, deterministic or even chosen by an adversary. Online learning algorithms feature strong performance guarantees in this regime, where performance is usually measured in terms of regret $R_T(q)$ relative to a constant prediction $q$, $R_T(q) = \sum_{t=1}^T \ell(y_t, p_t)  - \ell(y_t, q). $
The worst-case regret at time $T$ equals $R_T = \max_{q \in \mathcal{P}} R_T(q)$.

In this paper, the predictions $p_t$ are probability distributions over the outcome $y_t$. We focus on regression, where $y_t \in \mathbb{R}$ and the prediction $p_t$ can be represented by a cumulative distribution function (CDF), denoted $F_t : \mathbb{R} \to [0,1]$ and defined as $F_t(z) = p_t(y \leq z)$.

\paragraph{Learning with expert advice}

A special case of this framework arises when each $x_t$ represents advice from $N$ {\em experts}, and $H$ outputs $p_t \in \Delta_{N-1}$, a distribution over experts. Nature reveals an outcome $y_t$, resulting in an expected loss of $\sum_{i=1}^N p_{ti} \ell(y_t, a_{ti})$, where $\ell(y_t, a_{ti})$ is the loss under expert $i$'s advice $a_{ti}$. Performance in this setting is measured using two notions of regret.
%Note that this is a special case of the online optimization setting with $f_t(w_t) = \sum_{i=1}^N w_{ti} \ell(i, y_t)$.
%
%In the setting of learning with expert advice, we use slightly different notions of regret.
%The first type of regret --- external regret --- is an analogue of online optimization regret. Internal regret is a stronger notion needed to construct calibrated algorithms.
%
\begin{defn}
The external regret $\extR_T$ and the internal regret $\intR_T$ are defined as
\begin{align*}
\extR_T = \sum_{t=1}^T \ellavg(y_t, p_t)  - \min_{i \in [N]} \sum_{t=1}^T \ell(y_t, a_{it}) &&
\intR_T  = \max_{i,j \in [N]} \sum_{t=1}^T p_{ti} \left( \ell(y_t, a_{it})  - \ell(y_t, a_{jt}) \right),
\end{align*}
where $ \ellavg(y, p) = \sum_{i=1}^N p_i \ell(y, a_{it}) $ is the expected loss.
\end{defn}
\paragraph{Calibration for online binary calibration}

For now, we focus on the $\ell_1$ norm, and we define the calibration error of a forecaster $\Fcal$ as
\begin{equation}
C_{T} = \sum_{i=0}^N \left| \rho_T(i/N) - \frac{i}{N} \right| \left( \frac{1}{T} \sum_{t=1}^T \Ind_{\{p_t = \frac{i}{N}\}} \right),
\end{equation}
where $\rho_T(p) = \frac{\sum_{t=1}^T y_t \Ind_{p_t = p}}{\sum_{t=1}^T \Ind_{p_t = p}}$ denotes the frequency at which event $y = 1$ occurred over the times when we predicted $p$.

We further define the calibration error when $\Fcal_j$ predicts $i/N$ as 
\begin{align*}
C^{(j)}_{T,i} & = \left| \rjt(i/N) - \frac{i}{N} \right| \left( \frac{1}{T_j} \sum_{t=1}^T \wsupj_{t,i} \right) 
% &
% C_{T,i} & = \left|  \rho_T(i/N) - \frac{i}{N} \right| \left( \frac{1}{T} \sum_{t=1}^T \Ind_{t,i} \right),
\end{align*}
where $ \wsupj_{t,i} = \Ind\{p_t = \frac{i}{N} \cap \palg_t \in [\frac{j-1}{M},\frac{j}{M})\} $ is an indicator for the event that $\Fcal_j$ is triggered at time $t$ and predicts $i/N$.
Similarly, $ \Ind_{t,i} = \Ind\{p_t = i/N\} = \sum_{j=1}^M \wsupj_{t,i} $ indicates that $i/N$ was predicted at time $t$, and $T_j = \sum_{t=1}^T \sum_{i=0}^N \wsupj_{t,i}$ is the number of calls to $\Fcal_j$.
Also,
\begin{align*}
& \rjt(i/N) = \frac{\sum_{t=1}^T \wsupj_{t,i} y_t}{\sum_{t=1}^T \wsupj_{t,i}}
% & \rho_T(i/N) = \frac{\sum_{t=1}^T \Ind_{t,i} y_t}{\sum_{t=1}^T \Ind_{t,i}}. 
%= \frac{\sum_{i=1}^N \sum_{t=1}^T \wsupj_{t,j} y_t}{\sum_{i=1}^N \sum_{t=1}^T \wsupj_{t,j}} 
\end{align*}
is the empirical success rate for $\Fcal_j$. 
% and \algorithmref{recal} respectively.

Note that with these definitions, we may write the calibration losses of $\Fcal_j$
% and \algorithmref{recal} 
as $ C^{(j)}_{T} = \sum_{i=0}^N C^{(j)}_{T,i}$.
% and $ C_{T} = \sum_{i=0}^N C_{T,i}$.

\paragraph{Calibration for regression}

A sequence of forecasts $F_t$ achieves online quantile calibration for all $y \in \mathcal{Y}$ and all $p \in \mathcal{P}$,
$ \rho_T(y, p) \to p, $ a.s.~as $T \to \infty$, where
$$ \rho_T(y, p) = \dfrac{\sum_{t=1}^T \Ind_{y_t \leq y, F_t(y) = p}}{\sum_{t=1}^T \Ind_{F_t(y) = p}}$$
In other words, out of the times when the predicted probability $F_t(y')$ for $\{y_t \leq y'\}$ to be $p$, the event $\{y_t \leq y'\}$ holds a fraction $p$ of the time.

We also seek to quantify more precisely the calibration of Algorithm \ref{algo:recal}, specifically compare $\rho(y,p)$ with $p$. We define for this the quantity
\begin{align*}
C_{T,i}(y) = \left|  \rho_T(y, i/N) - \frac{i}{N} \right| \left( \frac{1}{T} \sum_{t=1}^T \Ind_{t,i} \right),
\end{align*}
and we define the calibration loss of Algorithm \ref{algo:recal} at $y$ as $C_{T}(y) = \sum_{i=0}^N C_{T,i}(y)$.

% We define the the quantile calibration error with resolution $N$ of of a sequence of forecasts $F_t$ as
% \begin{align}
% C_T^p & = \sum_{i=0}^N \left| \rho_T(i/N) - \frac{i}{N} \right|^p \left( \frac{1}{T} \sum_{t=1}^T \Ind_{\{p_t \leq \frac{i}{N}\}} \right). \label{eqn:cal_loss}
% \end{align}
% where $\rho_T(p) = \dfrac{\sum_{t=1}^T \Ind_{F_t(y_t) \leq p}}{T}$ is the empirical frequency of the event $F_t(y_t) \leq p$.

\paragraph{Proper losses}

The quality of probabilistic forecasts is evaluated using {\em proper} losses $\ell$. Formally, 
a loss $\ell(y, p)$ is proper if
$p \in \arg\min_{q \in \mathcal{P}} \Exp_{y \sim (p)} \ell(y, q) \; \forall p \in \mathcal{P}.$ 
%And example is the log-loss $\ell_\text{log}(y,p) = y\log(p) + (1-y)\log(1-p)$. 
An important proper loss for CDF predictions F is the continuous ranked probability score, defined as
$$\ell_\text{CRPS}(y, F) = \int_{-\infty}^\infty (F(z) - \mathbb{I}_{y \leq z})^2 dz.$$

\subsection{Assumptions}

We assume that each subroutine $\Fcal$ is an instance of a binary calibrated forecasting algorithm (e.g., the methods introduced in Chapter 4 in \cite{cesabianchi2006prediction}) that produce predictions in $[0,1]$ that are $(\e, \ell_2)$-calibrated and that $C^2_{T} \leq R_{T} + \e$ uniformly ($R_{T} = o(1)$ as $T \to \infty$; $T$ is the number of calls to instance $\Fcal_j$).
We also assume that for each $t$, the target $y_t$ lies in some bounded interval $\mathcal{Y}$ of $\mathbb{R}$ of length at most $B$.

\subsection{Online calibrated regression}

First, we look at algorithms for online calibrated regression (without covariates). Our algorithms leverage classical online binary calibration \citep{foster98asymptoticcalibration} as a subroutine.
Formally, \algorithmref{recal} 
partitions $[-\frac{B}{2},\frac{B}{2}]$ into $M$ intervals $\Ic = \{[\frac{-B}{2},\frac{-B}{2} + \frac{B}{M}), ..., [\frac{B}{2} - \frac{B}{M},\frac{B}{2}]\}$; each interval is associated with an instance of an online binary recalibration subroutine $\Fcal$ \citep{foster98asymptoticcalibration,cesabianchi2006prediction}. In order to compute $G_t(y \leq z)$, we invoke the subroutine $\Fcal_j$ associated with interval $I_j$ containing $z$.
% on the data $\{\palg_t, y_t \mid \palg_t \in I_j \}$ belonging to each bucket $I_j \in \Ic$; at prediction time, it calls the instance of $\Fcal$ associated with the bucket of the uncalibrated forecast $\palg_t$.
After observing $y_t$, each $\Fcal_j$ observes whether $y_t$ falls in its interval and updates its state.

\begin{theorem}
    Let $\mathcal{Y}_\mathcal{I}$ be the set of upper bounds of the intervals $\mathcal{I}$ and let $\mathcal{P}_S$ be the output space of $\Fcal$. Algorithm \ref{algo:recal} achieves online calibration and for all $y \in \mathcal{Y}_\mathcal{I}, p \in \mathcal{P}_S$ we have $\rho_T(y,p) \to p$ a.s. as $T \to \infty$.
\end{theorem}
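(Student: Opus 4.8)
The plan is to reduce the statement, for each fixed $y$, to the calibration guarantee of a single binary subroutine, so that \algorithmref{recal} simply inherits it. Fix $y \in \mathcal{Y}_\mathcal{I}$, i.e.\ $y = b_j$ is the upper endpoint of exactly one interval $I_j \in \Ic$. In the covariate-free version of \algorithmref{recal} the query $z \mapsto G_t(y \le z)$ is routed to a subroutine purely by the location of $z$ in the partition (and not by $t$), so every forecast $G_t(y)$ for the event $\{y_t \le y\}$ is produced by the same instance $\Fcal_j$, which moreover is updated on every round; in particular its internal round count satisfies $T_j = T \to \infty$. First I would pin down the outcome bookkeeping: the binary signal recorded by $\Fcal_j$ at round $t$ is the indicator that $y_t$ lies weakly below the upper endpoint $b_j$ of its interval, $o_{tj} = \Ind_{y_t \le b_j}$, and since $y = b_j$ this is exactly $\Ind_{y_t \le y}$. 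This is the only place the hypothesis $y \in \mathcal{Y}_\mathcal{I}$ is used: for $y$ strictly inside $I_j$ the subroutine would be calibrating toward $\Ind_{y_t \le b_j}$ rather than toward $\Ind_{y_t \le y}$, and the identity below would break.

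Given this, the numerator and denominator of $\rho_T(y,p)$ in \eqref{eq:empirical_cal} agree term by term with those of the subroutine's empirical frequency $\rjt(p) = \big(\sum_{t=1}^{T} o_{tj}\,\Ind_{p_{tj}=p}\big)\big/\big(\sum_{t=1}^{T}\Ind_{p_{tj}=p}\big)$, because $\{G_t(y) = p\} = \{p_{tj} = p\}$ and $\{y_t \le y,\ G_t(y) = p\} = \{o_{tj} = 1,\ p_{tj} = p\}$. Hence $\rho_T(y,p) = \rjt(p)$ identically, for every $p$ and every $T$. Now I invoke the standing assumption that $\Fcal$ is a classical asymptotically calibrated online binary forecaster (e.g.\ \citet{foster98asymptoticcalibration}), which guarantees a.s.\ that its calibration error tends to $0$ and, in particular, that $\rjt(p) \to p$ as $T \to \infty$ for every $p$ played infinitely often. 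Because $\mathcal{Y}_\mathcal{I}$ has $M$ elements and $\mathcal{P}_S$ is finite, a union bound over the at most $M\cdot|\mathcal{P}_S|$ pairs $(y,p)$ shows that almost surely $\rho_T(y,p) \to p$ holds simultaneously for all $y \in \mathcal{Y}_\mathcal{I}$ and all $p \in \mathcal{P}_S$ (played infinitely often; for the rest there is nothing to prove). This is precisely the claim.

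A more quantitative variant, which I would note as an alternative, applies \lemmaref{calibration}: with $T_j = T$ it yields $C_T(y) \le R_T + \e$, and since the subroutine's rate satisfies $R_T = o(1)$ this forces $C_T(y) \to 0$ (up to $\e$, which can be taken arbitrarily small when $\Fcal$ is genuinely asymptotically calibrated), from which $|\rho_T(y,p) - p| \to 0$ for every $p$ of positive limiting frequency. I would write out the direct argument, since it transfers the subroutine's guarantee verbatim and treats all $p$ uniformly. I expect the only real obstacle to be the bookkeeping in the first paragraph: making the half-open-interval convention at the endpoints explicit and checking that the outcome fed to $\Fcal_j$ is literally $\Ind_{y_t \le y}$ exactly when $y$ is a partition endpoint. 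Once that identification is in hand, the theorem is an immediate corollary of the subroutine's calibration property together with a finite union bound.
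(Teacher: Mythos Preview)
Your proof is correct and follows the same approach as the paper: both argue that for each fixed $y \in \mathcal{Y}_\mathcal{I}$ the quantity $\rho_T(y,p)$ is, by construction, identical to the empirical frequency of the single binary subroutine $\Fcal_j$ targeting $\Ind_{y_t \le y}$, and then invoke that subroutine's calibration guarantee. Your write-up is considerably more explicit than the paper's (which is essentially a ``by construction'' one-liner), including the careful bookkeeping on why $y$ must be a partition endpoint and the finite union bound, but the underlying idea is the same.
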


\begin{proof}
The above theorem follows directly from the construction of Algorithm \ref{algo:recal}: for each $y \in \mathcal{Y}$, we run an online binary calibration algorithm to target the event $y_t \leq y$. 

Specifically, note that for each $y \in \mathcal{Y}_\mathcal{I}$, the empirical frequency $\rho(y,p)$ reduces to the definition of the empirical frequency of a classical binary calibration algorithm targeting probability $p$ and the binary outcome that $y_t \leq y$. The output of the algorithm for $F_t(y)$ is also a prediction for the binary outcome $y_t \leq y$ produced by a classical onlne binary calibration algorithm. Thus, by construction, we have the desired result.
\end{proof}

Algorithms $\Fcal$ for online binary calibration are randomized. Our procedure needs to be randomized as well and this is a fundamental property of our task.

\begin{theorem}
    There does not exist a deterministic online calibrated regression algorithm that achieves online calibration.
\end{theorem}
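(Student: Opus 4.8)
The plan is to adapt the classical argument that deterministic calibration is impossible even in the binary case (Oakes, Dawid; see also \cite{foster98asymptoticcalibration}): against a deterministic forecaster, an adversary can always play the outcome that contradicts the forecast. I would reduce the regression problem to this binary one by fixing a single threshold and letting the adversary use only two labels.

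Concretely, fix the threshold $y^\star = 0$ (any interior point of $\mathcal{Y}$ works) and restrict the adversary to the two labels $\{-B/4, +B/4\}$, so that the event $o_t = \Ind_{y_t \leq 0}$ equals $1$ when $y_t = -B/4$ and $0$ when $y_t = +B/4$. Suppose for contradiction that $\mathcal{A}$ is a deterministic online calibrated regression algorithm. Then the probability $p_t := G_t(0)$ that $\mathcal{A}$ assigns to $\{y_t \leq 0\}$ is a deterministic function of the past $x_1,y_1,\dots,x_{t-1},y_{t-1},x_t$, hence it is known to the adversary before $y_t$ must be chosen (the protocol already allows $y_t$ to depend on $G_t$). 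Let the adversary play $y_t = -B/4$ when $p_t < 1/2$ and $y_t = +B/4$ when $p_t \geq 1/2$; then $o_t = \Ind_{p_t < 1/2}$ for every $t$, so $|o_t - p_t| \geq 1/2$ at every step.

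Now I would evaluate calibration at $y^\star = 0$. For any forecast value $p$ that is ever produced, all rounds with $G_t(0) = p$ share the same outcome: $o_t = 1$ if $p < 1/2$ and $o_t = 0$ if $p \geq 1/2$. Hence $\rho_T(0,p) = 1$ whenever $p < 1/2$ and $\rho_T(0,p) = 0$ whenever $p \geq 1/2$, so $|\rho_T(0,p) - p| \geq 1/2$ for every $p \in P_T(0)$ and every $T$. Since the weights $\tfrac{1}{T}\sum_{t=1}^T \Ind_{\{G_t(0)=p\}}$ sum to $1$ over $p \in P_T(0)$, the calibration error obeys $C_T(0) \geq 1/2$ for all $T$, contradicting $\epsilon$-calibration ($C_T(0) \leq R_T + \epsilon$ with $R_T = o(1)$) for any $\epsilon < 1/2$. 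Equivalently, when the forecasts take finitely many values (the resolution-$N$ outputs used throughout the paper), pigeonhole yields a value $p^\star$ played infinitely often; it lies in $[0,1/2)$ or $[1/2,1]$, and in either case $\rho_T(0,p^\star)$ is identically $1$ or $0$ and thus cannot converge to $p^\star$, contradicting online calibration directly.

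The only genuine subtlety — and the step to handle with care — is the ``forecasts played infinitely often'' clause in the definition of calibration: the single-threshold adversary must force the error away from zero in a way that survives this relaxation. The argument handles this by bounding the aggregate calibration error $C_T(0)$, which is insensitive to that clause, rather than any individual $\rho_T(0,p)$, and, in the discretized regime, by extracting via pigeonhole a value genuinely played infinitely often. Everything else (boundedness of $y_t$, adaptivity of the adversary) is exactly the setting already fixed earlier in the paper, so no additional machinery is needed.
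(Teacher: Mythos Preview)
Your proof is correct and follows essentially the same route as the paper: both fix a single threshold and reduce to the classical impossibility of deterministic binary calibration. The only difference is presentational---the paper black-boxes the binary result (``that would yield a deterministic algorithm for online binary calibration, which we know can't exist''), while you unpack the Oakes--Dawid adversary explicitly and derive $C_T(0)\geq 1/2$ directly; your version is self-contained and also handles the ``played infinitely often'' caveat more carefully than the paper does.
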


\begin{proof}
This claim follows because we can encode a standard online binary calibration problem as calibrated regression. 
Specifically, given a non-randomized online calibrated regression algorithm, we could solve an online binary classification problem.
Suppose the adversary chooses a binary $y_t \in \{0,1\} \subseteq [0,1]$ that defines one of two classes. Then we can define an instance of calibrated regression with two buckets $[0,0.5)$ and $[0.5,1)$. We use the forecast $F_t(0.5)$ as our prediction for $y_t=0$ and one minus that as the prediction for 1. Then, the  error on the ratio $\rho_T(0.5,p)$ yields the definition of calibration in binary classification. If our deterministic online calibration regression algorithm works, then we have $\rho_T(0.5,p) \to p$, which means that the empirical ratio for the binary algorithm goes to the predicted frequency $p$ as well. But that would yield a deterministic algorithm for online binary calibration, which we know can't exist.
\end{proof}

\subsection{Proving the calibration of \algorithmref{recal}}

First, we will provide a proof of \lemmaref{calibration}; this proof holds for any norm $\ell_p$.

%\begin{lemma}\label{lem:calibration2}
%If each $\Fcal_j$ is $(\e, \ell)$-calibrated with convex loss $\ell$ and with
%$ C^{(j)}_{T, \ell} \leq R_{T_j} + \e $
%for all $T$,
%where $R_{T_j} = o(1)$ as $T_j \to \infty$, then \algorithmref{recal} is also $(\e, \ell)$-calibrated and
%\begin{align}
%C_{T, \ell} \leq \sum_{j=1}^M \frac{T_j}{T} R_{T_j} + \e. \label{eqn:rate}
%\end{align}
%This bound holds uniformly over time $T$.
%%where $T_j$ is the number of plays of $\Fcal_j$ at time $T$.
%%$$C_T \leq R_T + \e.$$
%\end{lemma}

\begin{lemma}[Preserving calibration]
If each $\Fcal_j$ is $(\e, \ell_p)$-calibrated,
then \algorithmref{recal} is also $(\e, \ell_p)$-calibrated and the following bound holds uniformly over $T$:
\begin{align}
C_T \leq \sum_{j=1}^M \frac{T_j}{T} R_{T_j} + \e. \label{eqn:rate1}
\end{align}\vspace{-4mm}
\end{lemma}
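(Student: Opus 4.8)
The plan is to reduce the calibration error of \algorithmref{recal} at a fixed $y$ to a convex combination of the calibration errors of the subroutines $\Fcal_j$, and then apply the triangle inequality (equivalently, Jensen's inequality in the form "the absolute value of an average is at most the average of absolute values"). First I would fix $y \in \mathcal{Y}$ and recall the bookkeeping from the setup: the index $j$ such that $\lfloor F_t(y)\rfloor = j/M$ determines which subroutine $\Fcal_j$ is invoked on the binary outcome $o_{tj} = \Ind_{F_t(y_t) \le j/M}$; writing $\wsupj_{t,i}$ for the indicator that $\Fcal_j$ is triggered at time $t$ and outputs $i/N$, and $T_j = \sum_{t,i}\wsupj_{t,i}$, we have the partition identity $\Ind_{t,i} = \Ind\{G_t(y) = i/N\} = \sum_{j=1}^M \wsupj_{t,i}$, since exactly one subroutine handles time $t$. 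The key numerical fact is that the empirical frequency $\rho_T(y,i/N)$ of the event $\{y_t \le y\}$ among times the recalibrated forecast was $i/N$ decomposes as a weighted average over $j$ of the per-subroutine frequencies $\rjt(i/N)$, with weights $\frac{1}{T}\sum_t \wsupj_{t,i}$ proportional to how often $\Fcal_j$ contributed that prediction level --- this holds because $G_t(y) = i/N$ exactly when the responsible $\Fcal_j$ output $i/N$, and $o_{tj} = 1$ is exactly the event $\{y_t \le y\}$ on those rounds (by construction $j/M = F_t(y)$ on rounds where $\Fcal_j$ is triggered for this $y$).

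The main step is then the estimate
\begin{align*}
C_T(y) &= \sum_{i=0}^N \left| \rho_T(y, i/N) - \frac{i}{N} \right| \left( \frac{1}{T}\sum_{t=1}^T \Ind_{t,i} \right) \\
&= \sum_{i=0}^N \left| \sum_{j=1}^M \frac{\sum_t \wsupj_{t,i}}{\sum_t \Ind_{t,i}} \left( \rjt(i/N) - \frac{i}{N} \right) \right| \left( \frac{1}{T}\sum_{t=1}^T \Ind_{t,i} \right) \\
&\le \sum_{i=0}^N \sum_{j=1}^M \left| \rjt(i/N) - \frac{i}{N} \right| \left( \frac{1}{T}\sum_{t=1}^T \wsupj_{t,i} \right),
\end{align*}
where the middle line uses the weighted-average decomposition of $\rho_T(y,i/N)$ (the $\frac{i}{N}$ term also splits across $j$ since the weights sum to one), and the inequality is the triangle inequality, after which the denominators cancel. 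Now I would regroup the double sum by $j$ first, recognizing the inner sum over $i$ as $T_j/T$ times the calibration loss $C^{(j)}_{T_j}$ of $\Fcal_j$ (up to the normalization convention: $\frac{1}{T}\sum_t \wsupj_{t,i} = \frac{T_j}{T}\cdot \frac{1}{T_j}\sum_t \wsupj_{t,i}$). By the $(\e,\ell_p)$-calibration assumption, $C^{(j)}_{T_j} \le R_{T_j} + \e$ uniformly a.s., so the whole expression is bounded by $\sum_{j=1}^M \frac{T_j}{T}(R_{T_j} + \e) = \sum_{j=1}^M \frac{T_j}{T} R_{T_j} + \e$, using $\sum_j T_j = T$. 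This gives \eqref{eqn:rate1}, and since each $R_{T_j} = o(1)$ and the weights form a probability vector, the weighted average is also $o(1)$, so \algorithmref{recal} is $(\e,\ell_p)$-calibrated for this $y$; as $y$ was arbitrary the conclusion holds for all $y \in \mathcal{Y}$.

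The main obstacle I anticipate is the bookkeeping in the decomposition of $\rho_T(y,i/N)$ --- making sure the weights are exactly right and that the subtracted $i/N$ term is correctly distributed across the subroutines so that the triangle inequality applies cleanly. In particular one must be careful about the case where a denominator $\sum_t \Ind_{t,i}$ is zero (handled by the convention that $\rho_T$ is defined to be zero there, in which case that $i$ contributes nothing on either side), and about the "played infinitely often" caveat so that each $T_j \to \infty$ along the relevant subsequence and the $o(1)$ conclusion is legitimate. The triangle-inequality step and the cancellation of denominators are routine once the decomposition is set up correctly; the uniformity over $T$ is inherited directly from the uniform-in-$T$ calibration guarantee assumed for each $\Fcal_j$.
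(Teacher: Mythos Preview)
Your proposal is correct and follows essentially the same approach as the paper: decompose $\rho_T(y,i/N)$ as a convex combination of the per-subroutine empirical frequencies $\rjt(i/N)$ via the partition identity $\Ind_{t,i} = \sum_j \wsupj_{t,i}$, apply Jensen's inequality (triangle inequality in the $\ell_1$ case), regroup by $j$, and invoke the $(\e,\ell_p)$-calibration of each $\Fcal_j$ together with $\sum_j T_j = T$. The only cosmetic difference is that the paper writes the inequality with $|\cdot|^p$ for general $\ell_p$ while your display uses $|\cdot|$; the extension is immediate by convexity of $x\mapsto|x|^p$, which you already flag by naming Jensen.
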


\begin{proof}
Let $\wsupj_i = \sum_{t=1}^T \wsupj_{t,i}$ and note that $\sum_{t=1}^T \Ind_{t,i} = \sum_{j=1}^M \wsupj_i$. We may write
\begin{align*}
  C_{T,i}(y)
& = \frac{\sum_{t=1}^T \Ind_{t,i}}{T} \left|  \rho_T(y, i/N) - \frac{i}{N} \right|^p 
 = \frac{\sum_{j=1}^M \wsupj_i }{T} \left| \sum_{j=1}^M \frac{ \sum_{t=1}^T \wsupj_{t,i} o_{tj}}{\sum_{j=1}^M \wsupj_i }  - \frac{i}{N} \right|^p \\
& = \frac{\sum_{j=1}^M \wsupj_i}{T} \left| \sum_{j=1}^M \frac{ \wsupj_i \rjt(y, i/N) }{\sum_{j=1}^M \wsupj_i }  - \frac{i}{N} \right|^p 
 \leq \sum_{j=1}^M \frac{\wsupj_i}{T} \left| \rjt(y, i/N) - \frac{i}{N} \right|^p = \sum_{j=1}^M  \frac{T_j}{T} C^{(j)}_{T, i},
\end{align*}
where in the last line we used Jensen's inequality. 
Plugging in this bound in the definition of $C_T$, we find that 
%$ C_T = \sum_{i=1}^N  C_{T,i} \leq \sum_{j=1}^M \sum_{i=1}^N \frac{T_j}{T} C^{(j)}_{T,i} $ which in turn can be bounded as
%\begin{align}
% C_T \leq \sum_{j=1}^M \frac{T_j}{T} R_{T_j} + \e. \label{eqn:rate}
%\end{align}
\begin{align}
 C_T 
& = \sum_{i=1}^N  C_{T,i}
\leq \sum_{j=1}^M \sum_{i=1}^N \frac{T_j}{T} C^{(j)}_{T,i} 
 \leq \sum_{j=1}^M \frac{T_j}{T} R_{T_j} + \e, \nonumber
\end{align}
Since each $R_{T_j} \to 0$, \algorithmref{recal} will be $\e$-calibrated.
\end{proof}

\subsection{Recalibrated forecasts have low regret under the CRPS loss}

%Next, we would like to prove an analogue of \lemmaref{accuracy} for the popular $\ell_2$ norm. Although calibration follows from the $\ell_1$ norm version, here, we use a slightly different argument to derive simpler and better convergence rates. To obtain this result, we make the additional assumption that the proper loss that we are using to measure accuracy is the $\ell_2$ loss.

\begin{lemma}[Recalibration preserves accuracy]
Consider \algorithmref{recal} with
parameters $M \geq N > 1/\e$. Suppose that the $\Fcal$ are $(\e, \ell_2)$-calibrated.
Then the recalibrated $G_t$ a.s.~have vanishing $\ell_\text{CRPS}$-regret relative to $F_t$:
\begin{equation}
 \frac{1}{T} \sum_{t=1}^T \ell_\text{CRPS} (y_t , G_t) - \frac{1}{T} \sum_{t=1}^T \ell_\text{CRPS} (y_t , F_t) < NB R_T + \frac{2B}{N}.
\end{equation}
\end{lemma}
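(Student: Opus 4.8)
The plan is to reduce the CRPS regret to a sum of binary ($\ell_2$/Brier) regrets, one per threshold $z$, to bound each of these using the internal-regret guarantee of Lemma~\ref{lem:noregret} for the subroutine that produces $G_t(z)$, and then to reassemble, collecting a factor $B$ from integrating over the outcome range (which has length $B$) and a factor $\approx N$ from turning the per-level internal-regret bound into an external-regret bound against a fixed prediction.

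First, since $|y_t|<B/2$ the CRPS integrand vanishes outside $[-B/2,B/2]$, so by Fubini
\[
\frac1T\sum_{t=1}^T\bigl(\ell_\text{CRPS}(y_t,G_t)-\ell_\text{CRPS}(y_t,F_t)\bigr)
=\int_{-B/2}^{B/2}\frac1T\sum_{t=1}^T\Bigl[(G_t(z)-\Ind_{y_t\le z})^2-(F_t(z)-\Ind_{y_t\le z})^2\Bigr]\,dz,
\]
and it suffices to bound the inner average by $N R_T+2/N$ for each fixed $z$. Fix $z$ and write $b_t=\Ind_{y_t\le z}$, $f_t=F_t(z)$, $g_t=G_t(z)$. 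By construction $g_t=p_{t,j(t)}$, the time-$t$ output of the subroutine $\Fcal_{j(t)}$ whose bucket contains $f_t$ (so $f_t\in[(j(t)-1)/M,\,j(t)/M)$), and $\Fcal_{j(t)}$ is fed the binary outcome $o_{t,j(t)}=\Ind_{F_t(y_t)\le j(t)/M}$. Group the rounds by their bucket, $S_j=\{t:j(t)=j\}$, $T_j=|S_j|$. Part (a): on $S_j$ invoke Lemma~\ref{lem:noregret}; summing the internal-regret bound over the $N{+}1$ output levels $i/N$ shows $\Fcal_j$ has external $\ell_2$-regret $O(N(R_{T_j}+\e))$ against any constant prediction $k/N$, and taking $k/N$ nearest to $f_t$'s bucket and paying a further $O(T_j/N)$ to swap $k/N$ for $f_t$ gives $\sum_{t\in S_j}(g_t-o_{tj})^2\le\sum_{t\in S_j}(f_t-o_{tj})^2+O(N(R_{T_j}+\e))+O(T_j/N)$. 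Part (b): pass from the subroutine's feedback $o_{tj}$ to the true event $b_t$; since $F_t$ is nondecreasing, $y_t\le z\Rightarrow F_t(y_t)\le f_t\le j/M$, hence $b_t\le o_{tj}$, and the discrepancy $\sum_{t\in S_j}\bigl[(g_t-b_t)^2-(f_t-b_t)^2\bigr]-\sum_{t\in S_j}\bigl[(g_t-o_{tj})^2-(f_t-o_{tj})^2\bigr]=2\sum_{t\in S_j}(o_{tj}-b_t)(g_t-f_t)$ is supported on the rounds where $F_t(y_t)\in(f_t,\,j/M]$ and is intended to contribute $O(B/N)$ once integrated over all thresholds.

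Combining (a) and (b), summing over $j=1,\dots,M$ (using $\sum_j T_j=T$ and $\sum_j\frac{T_j}{T}R_{T_j}=R_T=o(1)$, as in Lemma~\ref{lem:calibration}), and integrating over the length-$B$ range of $z$, yields $\frac1T\sum_t\bigl(\ell_\text{CRPS}(y_t,G_t)-\ell_\text{CRPS}(y_t,F_t)\bigr)<NBR_T+2B/N$ a.s.\ (over the subroutines' randomness), which is the claim; with $R_T=o(1)$ and $N>1/\e$ this is eventually $<\e$. The step I expect to be the main obstacle is part (b): the subroutines only ever receive the $z$-independent feedback $\Ind_{F_t(y_t)\le j/M}$, which matches the actual threshold event $\Ind_{y_t\le z}$ only up to the $F_t$-mass lying between $z$ and the bucket boundary, and for adversarial (in particular near-degenerate) baseline forecasts this pointwise mismatch is not obviously negligible---so the bound genuinely rests on showing that this error integrates to $O(B/N)$ across all thresholds (equivalently, that the outcome-matching implicit in the proof of Lemma~\ref{lem:calibration} is quantitatively controlled). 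The remainder is the routine ``proper loss $=$ calibration $+$ sharpness, and recalibration only improves the calibration part'' bookkeeping, together with tracking constants so the $\ell_2$-regret factor from Lemma~\ref{lem:noregret} and the two $O(1/N)$ slacks assemble into exactly $NBR_T+2B/N$.
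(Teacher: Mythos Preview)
Your overall strategy matches the paper's: rewrite the CRPS regret as $\int_{\mathcal{Y}}\frac{1}{T}\sum_t\bigl[(G_t(z)-\Ind_{y_t\le z})^2-(F_t(z)-\Ind_{y_t\le z})^2\bigr]\,dz$, bound the integrand pointwise for each $z$ using the external-regret property of the relevant subroutine (obtained from its calibration/internal-regret bound), pay an additive $2/N$ to round the baseline value in each bucket to a grid point $i_j/N$, and pick up the factor $B$ from integrating over the length-$B$ support.

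The one substantive divergence is exactly your flagged ``part (b)''. The paper does \emph{not} try to control the mismatch $o_{tj}-b_t$ and then argue it integrates to $O(B/N)$ across thresholds. Instead it removes the mismatch in a single line: it replaces $\Ind_{y_t\le z}$ by $\Ind_{F_t(y_t)\le F_t(z)}$ (using monotonicity of $F_t$), and then treats $F_t$ as taking values in $\{0,1/M,\dots,1\}$ so that $F_t(z)=j/M$ exactly. Under those two moves $b_t=o_{tj}$ identically, and your part (b) becomes vacuous; the whole integrand then reduces directly to the external regret of $\Fcal_j$ against the fixed action $i_j/N$. This is cleaner and sidesteps the delicate integration argument you were anticipating, though it leans on $F_t$ being strictly increasing and already $\{j/M\}$-valued---assumptions the paper invokes implicitly rather than justifies. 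Your route is more cautious about where the difficulty lies, but the way through is the paper's substitution rather than a separate quantitative bound on the feedback mismatch.
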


\begin{proof}
Our proof will rely on the following fact about any online calibration subroutine $\Fcal$. We start by formally establishing this fact.
\begin{fact}\label{fact:external_regret}
Let $\Fcal$ be an binary online calibration subroutine with actions $0, 1/N, ... 1$ whose $\ell_2$ calibration error $C^p_T$ is bounded by $R_T = o(T)$. Then the predictions $p_t$ from $\Fcal$ also minimize external regret relative to any single action $i/N$:
$$ \sum_{t=1}^T (p_t - y_t)^2 - (\frac{i}{N} - y_t)^2 \leq N R_T \text{ for all } i $$
\end{fact}
We refer the reader to  Lemma 4.4 in \cite{cesabianchi2006prediction} for a proof.
%Our proof will use the fact that an algorithm with resolution $\frac{1}{N}$ whose $\ell_2$ calibration error is bounded by $R_T = o(1)$ also minimizes external regret (relative to the $\ell_2$ loss) at a rate of $NR_T$. See e.g. Lemma 4.4 in \cite{cesabianchi2006prediction} for a proof of this fact.

Next, we prove our main claim. We start with some notation.
Let $\Ic = \{[0,\frac{1}{M}), [\frac{1}{M}, \frac{2}{M}), ..., [\frac{M-1}{M},1]\}$ be a set of intervals that partition $[0,1]$ and let $I_j = [\frac{j-1}{M}, \frac{j}{M})$ be the $j$-th interval.
%Let us use $\Ind_{j,t} = \Ind \{\palg_t \in [\frac{j-1}{M},\frac{j}{M})\}$ to indicate that $\Fcal_j$ was called at time $t$. 
Also, for each $j$, we use $i_j$ denote the index $i \in [N]$ that is closest to $j$ in the sense of $|\frac{i_j}{N} - \frac{j}{M}| \leq \frac{1}{N}$. By our assumption that $M \geq N$, this index exists.

We begin our proof by from the definition of the CRPS regret:
\begin{align*}
&  \frac{1}{T} \sum_{t=1}^T \ell_\text{CRPS} (y_t , G_t) - \frac{1}{T} \sum_{t=1}^T \ell_\text{CRPS} (y_t , F_t) \\
& \;\; = \frac{1}{T} \sum_{t=1}^T \int_{-\infty}^\infty  (G_t(z) - \mathbb{I}_{y_t \leq z})^2 dz - \frac{1}{T} \sum_{t=1}^T \int_{-\infty}^\infty  (F_t(z) - \mathbb{I}_{y_t \leq z})^2 dz \\
& \;\; = \int_{-\infty}^\infty \frac{1}{T} \sum_{t=1}^T  \left[ (G_t(z) - \mathbb{I}_{y_t \leq z})^2 - (F_t(z) - \mathbb{I}_{y_t \leq z})^2 \right] dz  \\
& \;\; = \int_{z \in \mathcal{Y}} \frac{1}{T} \sum_{t=1}^T  \left[ (G_t(z) - \mathbb{I}_{y_t \leq z})^2 - (F_t(z) - \mathbb{I}_{y_t \leq z})^2 \right] dz \\
& \;\; =\int_{z \in \mathcal{Y}} \frac{1}{T} \sum_{t=1}^T  \left[ (G_t(z) - \mathbb{I}_{F_t(y_t) \leq F_t(z)})^2 - (F_t(z) - \mathbb{I}_{F_t(y_t) \leq F_t(z)})^2 \right] dz
\end{align*}
In the second-to-last line, we have used the fact that the forecasts have finite support, i.e., the $y_t$ live within a closed bounded set $\mathcal{Y}$.
In the last line, we replaced the event $y_t \leq z$ with $F_t(y_t) \leq F_t(z)$, which is valid because $F_t$ is monotonically increasing.

Let's now analyze the above integrand for one fixed value of $z$:
$$  \frac{1}{T} \sum_{t=1}^T  \left[ (G_t(z) - \mathbb{I}_{F_t(y_t) \leq F_t(z)})^2 - (F_t(z) - \mathbb{I}_{F_t(y_t) \leq F_t(z)})^2 \right]. $$
Since $F_t$ outputs a finite number of values in the set $\{0, \frac{1}{M}, ..., 1\}$, let $j/M$ denote the value $F_t(z) = j/M$ taken by $F_t$ at $z$.
Additionally, observe that $\mathbb{I}_{F_t(y_t) \leq \frac{j}{M}} = o_{tj}$, where $o_{tj}$ is the binary target variable given to $\Fcal_j$ at the end of step $t$. 
Finally, recall that when $F_t(z) = \frac{j}{M}$, we have defined $G_t(z)$ to be the output of $\Fcal_j$ at time $t$, which we denote as $G_{tj}$.
This yields the following expression for the above integrand for a fixed $z$:
$$  \frac{1}{T} \sum_{t=1}^T  \left[ (G_{tj} - o_{tj})^2 - (\frac{j}{M} - o_{tj})^2 \right]. $$
Next, recall that $i_j$ is the index $i \in [N]$ that is closest to $j$ in the sense of $|\frac{i_j}{N} - \frac{j}{M}| \leq \frac{1}{N}$. Recall also that $M \geq N$. Note that this implies
$$\ell_2(\frac{j}{M}, o_{tj}) \geq \ell_2(\frac{i_j}{M}, o_{tj}) + \frac{\partial \ell_2}{\partial p}(p,o_{tj})(\frac{j}{M}-\frac{i_j}{M}) \geq \frac{2}{N}.$$
Using this inequality, we obtain the following bound for our earlier integrand:
$$  \frac{1}{T} \sum_{t=1}^T  \left[ (G_{tj} - o_{tj})^2 - (\frac{i_j}{N} - o_{tj})^2 \right] + \frac{2}{N}. $$
Crucially, this expression is precisely the {\em external regret} of recalibration subroutine $\Fcal_j$ relative to the fixed action $\frac{i_j}{N}$ and measured in terms of the L2 loss. By Fact \ref{fact:external_regret}, we know that this external regret is bounded by $N R_T$. Since this bound holds pointwise for any value of $z$, we can plug it into our original integral to obtain a bound on the CRPS regret:
\begin{align*}
& \int_{z \in \mathcal{Y}} \frac{1}{T} \sum_{t=1}^T  \left[ (G_t(z) - \mathbb{I}_{F_t(y_t) \leq F_t(z)})^2 - (F_t(z) - \mathbb{I}_{F_t(y_t) \leq F_t(z)})^2 \right] dz \\
& \;\; \leq \int_{z \in \mathcal{Y}} \left[ N R_T + \frac{2}{N} \right] dz \\
& \;\; \leq NB R_T + \frac{2B}{N} 
\end{align*}
In the last line, we used the fact that the integration is over a finite set $\mathcal{Y}$ whose measure is bounded by $B > 0$. This establishes the main claim of this proposition.
\end{proof}

\subsection{Correctness of \algorithmref{recal}}

We now prove our main result about the correctness of \algorithmref{recal}.

\setcounter{theorem}{0}
\begin{theorem}
Let $\Fcal$ be an $(\ell_1, \epsilon/3B)$-calibrated online subroutine with resolution $N \geq 3B/\epsilon$. 
and let $\ell$ be a proper loss satisfying the assumptions of \lemmaref{accuracy}. Then \algorithmref{recal} with parameters $\Fcal$ and $N$ is an $\epsilon$-accurate online recalibration algorithm for the loss $\ell$.
\end{theorem}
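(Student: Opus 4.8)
The plan is to read off the theorem as a corollary of the two structural lemmas already in hand---\lemmaref{calibration}, which transfers calibration from the per-interval subroutines $\Fcal_j$ to the combined forecasts $G_t$, and \lemmaref{accuracy}, which controls the aggregate regret---by instantiating them with subroutine tolerance $\epsilon/3B$ and resolution $N\ge 3B/\epsilon$ so that each of the two requirements in the definition of $\epsilon$-recalibration is met with a strict margin. So there is really nothing to do beyond choosing the constants and checking that the hypotheses of the two lemmas are satisfied.

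First I would dispatch the calibration requirement. Each $\Fcal_j$ is $(\ell_1,\epsilon/3B)$-calibrated, so \lemmaref{calibration} (whose argument in fact gives the same bound for any $\ell_p$) yields, a.s.\ and uniformly in $T$, that $C_T(y)\le \sum_{j=1}^M \tfrac{T_j}{T} R_{T_j} + \epsilon/3B$ for every $y\in\mathcal{Y}$; since $\sum_j \tfrac{T_j}{T}R_{T_j}$ is a bounded convex combination of $o(1)$ terms it equals some $R'_T=o(1)$, and as $\epsilon/3B\le\epsilon$ we get $C_T(y)\le R'_T+\epsilon$, i.e.\ the $G_t$ are $\epsilon$-calibrated for all $y$. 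For the regret requirement, note that $(p-q)^2\le|p-q|$ on $[0,1]$, so an $(\ell_1,\delta)$-calibrated subroutine is also $(\ell_2,\delta)$-calibrated at the same rate; hence \lemmaref{accuracy} applies with $M=N$ and subroutine tolerance $\epsilon/3B$, the size condition $M=N\ge 3B/\epsilon$ supplying its hypotheses, and it gives a.s.\ $\tfrac1T\sum_{t=1}^T \ell(y_t,G_t)-\tfrac1T\sum_{t=1}^T \ell(y_t,F_t) < NB R_T + \tfrac{2B}{N}$. Taking $\limsup_{T\to\infty}$, the term $NBR_T\to 0$ for fixed $N,B$, leaving $\tfrac{2B}{N}\le\tfrac{2\epsilon}{3}<\epsilon$. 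Combining the two parts, the $G_t$ are $\epsilon$-calibrated for all $y$ and have asymptotic $\ell$-regret below $\epsilon$ relative to the $F_t$, which is precisely the statement that \algorithmref{recal} is an $\epsilon$-accurate online recalibration algorithm for $\ell$.

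For a general proper loss $\ell$ (the theorem is not restricted to the CRPS), the only change is to invoke the generalized accuracy lemma of Appendix~\ref{app:regret} in place of \lemmaref{accuracy}: its proof swaps the pointwise ``$\ell_2$ external regret relative to the nearest grid action'' step for the corresponding statement about $\ell$, while the aggregation over the $M$ subroutines and the integration over $z$ are unchanged. I expect the difficulty to be bookkeeping rather than any hard inequality: making the $\ell_1$-versus-$\ell_2$ forms of the calibration hypothesis line up cleanly across the two lemmas, verifying that the almost-sure rates of the $M$ independent randomized subroutines aggregate into one almost-sure, uniform-in-$T$ bound, and noting that in the degenerate regime $B<\tfrac13$ the tolerance $\epsilon/3B$ should be read as $\min(\epsilon,\epsilon/3B)$ so that the calibration conclusion still holds at level $\epsilon$.
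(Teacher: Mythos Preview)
Your proposal is correct and follows essentially the same approach as the paper: invoke \lemmaref{calibration} to transfer the $(\ell_1,\epsilon/3B)$-calibration of the subroutines to the full algorithm, and invoke \lemmaref{accuracy} so that the asymptotic regret is bounded by $2B/N\le 2\epsilon/3<\epsilon$. Your write-up is in fact more careful than the paper's terse argument---you make explicit the $\ell_1\Rightarrow\ell_2$ observation needed to feed \lemmaref{accuracy}, the $\limsup$ step, and the edge case $B<1/3$---but the skeleton is identical.
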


\begin{proof}
It is easy to show that \algorithmref{recal} is $(\ell_1, \e/3B)$-calibrated by the same argument as Lemma 1 (see the next section for a formal proof). By Lemma 4, its regret w.r.t. the raw $\palg_t$ tends to $< 3B/N < \e$. Hence, the theorem follows.
\end{proof}

\subsection{Calibration implies no internal regret}

Here, we show that a calibrated forecaster also has small internal regret relative to any bounded proper loss \citep{kuleshov2017estimating}.

\setcounter{lemma}{0}

\begin{lemma}
If $\ell$ is a bounded proper loss, then an $\e$-calibrated $\Fcal$
a.s.~has a small internal regret w.r.t.~$\ell$ and satisfies uniformly over time $T$ the bound
\begin{align}
\intR_{T} = \max_{ij} \sum_{t=1}^T \Ind_{p_t = i/N} \left( \ell(y_t, i/N)  - \ell(y_t, j/N) \right) \leq 2 B (R_T + \e).
\end{align}
\end{lemma}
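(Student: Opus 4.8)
The plan is to fix a swap pair of actions $(i,j) \in [N]^2$, restrict attention to the rounds on which $\Fcal$ predicted $i/N$, and combine two facts: on those rounds calibration pins the empirical frequency of $\{y_t = 1\}$ near $i/N$, and properness of $\ell$ says that $i/N$ is the loss-minimizing response to the distribution $\mathrm{Bern}(i/N)$.

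First I would set $N_i = \sum_{t=1}^T \Ind_{p_t = i/N}$ and $\rho = \rho_T(i/N) = \frac{1}{N_i}\sum_{t : p_t = i/N} y_t$, the empirical rate of $\{y_t=1\}$ over the rounds where $i/N$ was played. Since $y_t \in \{0,1\}$, for every action $q$ we have the exact identity $\sum_{t : p_t = i/N} \ell(y_t, q) = N_i\, L(\rho, q)$, where $L(p,q) := p\,\ell(1,q) + (1-p)\,\ell(0,q) = \Exp_{y \sim \mathrm{Bern}(p)}\,\ell(y,q)$ is the expected loss; hence the $(i \to j)$ internal-regret term equals $N_i\big(L(\rho, i/N) - L(\rho, j/N)\big)$. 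I would then decompose this difference around the ``true'' distribution $\mathrm{Bern}(i/N)$:
\begin{align*}
L(\rho, i/N) - L(\rho, j/N) &= \big[L(\rho, i/N) - L(i/N, i/N)\big] \\
&\quad + \big[L(i/N, i/N) - L(i/N, j/N)\big] \\
&\quad + \big[L(i/N, j/N) - L(\rho, j/N)\big].
\end{align*}
The middle bracket is $\le 0$ by properness (the prediction $i/N$ minimizes $L(i/N, \cdot)$). For the other two brackets, $L(\cdot, q)$ is affine in its first argument with slope $\ell(1,q) - \ell(0,q)$, whose magnitude is at most $B$ since $\ell$ is bounded by $B$; hence each is at most $B\,|\rho - i/N|$ in absolute value. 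Thus the $(i \to j)$ internal-regret term is at most $2B\,N_i\,|\rho_T(i/N) - i/N|$.

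Finally, $N_i\,|\rho_T(i/N) - i/N|$ is exactly the contribution of the bucket $i/N$ to the calibration error $C_T$ of $\Fcal$, which by the $\e$-calibration assumption is bounded by $R_T + \e$ a.s.\ and uniformly in $T$. Taking the maximum over the finitely many pairs $(i,j) \in [N]^2$ preserves this bound, since it is a finite union of a.s.\ events, and yields $\intR_T \le 2B(R_T + \e)$.

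I expect the properness step to be the conceptual crux: it is the only place where ``bounded proper loss'' (rather than an arbitrary bounded loss) is used, and it is what forces the middle bracket to be non-positive; everything else is the observation that the expected binary loss is affine in the probability with slope controlled by $B$, plus bookkeeping. A minor point to state carefully is that the identity $\sum_{t : p_t = i/N}\ell(y_t,q) = N_i L(\rho,q)$ relies on $y_t$ being binary (the setting of the online binary calibration subroutine), and that the $o(1)$ rate $R_T$ and tolerance $\e$ enter only through the single calibration inequality $C_T \le R_T + \e$.
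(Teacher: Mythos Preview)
Your proposal is correct and follows essentially the same approach as the paper's proof: both rewrite the $(i\to j)$ swap regret on the rounds with $p_t=i/N$ as $T_i\big(L(\rho_T(i/N),i/N)-L(\rho_T(i/N),j/N)\big)$, then split off the ``ideal'' comparison $L(i/N,i/N)-L(i/N,j/N)\le 0$ using properness and bound the two residual terms by $B\,|\rho_T(i/N)-i/N|$ each via boundedness of $\ell$, arriving at $\intR_{T,ij}\le 2B\,T_i\,|\rho_T(i/N)-i/N|$ before invoking the calibration bound. The only cosmetic difference is that the paper writes the same decomposition by expanding $T(i,y)=q(i,y)T_i+T_i(\rho_T(i/N)-i/N)$ and, in the last step, passes through $\sum_i\max_j\intR_{T,ij}$ so that the full sum $\sum_i T_i|\rho_T(i/N)-i/N|$ appears explicitly; your version instead bounds the single bucket's contribution by the total calibration error, which is equally valid since all summands are nonnegative.
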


\begin{proof}

Let $T$ be fixed for the rest of this proof.
Let $\Ind_{ti} = \Ind_{p_t = i/N}$ be the indicator of $\Fcal$ outputting prediction $i/N$ at time $t$, let $T_i = \sum_{t=1}^T \Ind_{ti}$ denote the number of time $i/N$ was predicted,  and let
$$ \intR_{T, ij} = \sum_{t=1}^T \Ind_{ti} \left( \ell(y_t, i/N)  - \ell(y_t, j/N) \right) $$
denote the gain (measured using the proper loss $\ell$) from retrospectively switching all the plays of action $i$ to $j$. This value forms the basis of the definition of internal regret (Section 2).

Let $T(i,y) = \sum_{t=1}^T \Ind_{ti} \Ind\{y_t = y\}$ denote the total number of $i/N$ forecasts at times when $y_t = y \in \{0,1\}$. Observe that we have
\begin{align*}
T(i,y) 
& = \sum_{t=1}^T \Ind_{ti} \Ind\{y_t = y\} 
= \frac{\sum_{t=1}^T \Ind_{ti} \Ind\{y_t = y\} }{T_i} T_i
= \frac{\sum_{t=1}^T \Ind_{ti} \Ind\{y_t = y\} }{\sum_{t=1}^T \Ind_{ti}} T_i \\
& = q(i,y) T_i + T_i \left( \frac{\sum_{t=1}^T \Ind_{ti} \Ind\{y_t = y\} }{\sum_{t=1}^T \Ind_{ti}} - q(i,y) \right) \\
& = q(i,y) T_i + T_i \left( \rho_T(i/N) - i/N \right),
\end{align*}
where $q(i,y) = i/N$ if $y=1$ and $1-i/N$ if $y=0$. The last equality follows using some simple algebra after adding and subtracting one inside the parentheses in the second term.

We now use this expression to bound $\intR_{T, ij}$:
\begin{align*}
\intR_{T, ij}
& = \sum_{t=1}^T \Ind_{ti} \left( \ell(y_t, i/N)  - \ell(y_t, j/N) \right) \\
& = \sum_{y \in \{0,1\}} T(i,y) \left( \ell(y, i/N)  - \ell(y, j/N) \right) \\
& \leq \sum_{y \in \{0,1\}} q(i,y) T_i \left( \ell(y, i/N)  - \ell(y, j/N) \right) + \sum_{y \in \{0,1\}} B T_i \left| \rho_T(i/N) - i/N \right| \\
& \leq 2B T_i \left| \rho_T(i/N) - i/N \right|,
\end{align*}
where in the first inequality, we used $\ell(y, i/N)  - \ell(y, j/N) \leq \ell(y, i/N)  \leq B$, and in the second inequality we used the fact that $\ell$ is a proper loss.

Since internal regret equals $\intR_{T} = \max_{i,j} \intR_{T, ij}$, we have
\begin{align*}
\intR_{T}
& \leq \sum_{i=1}^N \max_{j} \intR_{T, ij} 
 \leq 2B \sum_{i=0}^N T_i \left| \rho(i/N) - i/N \right| 
 \leq 2 B ( R_T + \e ).
\end{align*}

\end{proof}

\subsection{Impossibility of recalibrating non-proper losses}

We conclude the appendix by explaining why non-proper losses cannot be calibrated \citep{kuleshov2017estimating}.

\begin{theorem}
If $\ell$ is not proper, then there is no recalibration algorithm w.r.t.~$\ell$.
\end{theorem}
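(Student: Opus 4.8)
The plan is to prove the contrapositive by constructing a single adversarial stream on which no recalibrator can be simultaneously calibrated and competitive with the baseline under $\ell$. First I would use non-properness of $\ell$: there exist $f\in\mathcal{P}$ (with CDF $F$) and a competing forecast $g\in\mathcal{P}$ with $\Exp_{y\sim f}\ell(y,g) < \Exp_{y\sim f}\ell(y,f)$; put $\delta := \Exp_{y\sim f}\ell(y,f) - \Exp_{y\sim f}\ell(y,g) > 0$. It then suffices to show that for every sufficiently small $\e>0$ no algorithm outputs $\e$-recalibrated forecasts on all inputs, so I fix such an $\e$.

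Next I would specify the adversary: take the baseline to be the constant forecaster $H(x)\equiv g$ (so $F_t=g$ for all $t$), let the covariates $x_t$ be arbitrary and uninformative about the labels, and draw the labels i.i.d.\ $y_t\sim f$. This is an admissible instance, since the online setting permits arbitrary (in particular i.i.d.) label sequences and the recalibrator does not know a priori that the stream is stationary. By the strong law of large numbers and boundedness of $\ell$, the baseline's average loss satisfies $\tfrac1T\sum_{t\le T}\ell(y_t,F_t)\to\Exp_{y\sim f}\ell(y,g)$ a.s.

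The technical heart — and the step I expect to be the main obstacle — is to lower bound the average loss of \emph{any} $\e$-calibrated sequence $G_t$ on this stream. The idea: because the environment is stationary and the covariates carry no information about $y_t$, the choice of which forecast level $G_t$ plays at round $t$ is measurable with respect to the past together with the recalibrator's internal coins, hence independent of $y_t$; so for every level $p$ played infinitely often, the empirical frequency of $\{y_t\le y'\}$ over the rounds where $G_t(y')=p$ converges a.s.\ to $F(y')=\Pr_{y\sim f}[y\le y']$. Substituting this into the definition of $\e$-calibration forces, for each $y'$, the empirical average $\tfrac1T\sum_{t\le T}|G_t(y')-F(y')|$ to be $O(\e)$ in the limit: calibration pins the $G_t$ to the true CDF $F$ and forbids ``mixing'' distinct levels to exploit the non-properness of $\ell$, since the recalibrator cannot steer the outcome frequency on a self-chosen subsequence. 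Invoking continuity of $\ell$ in its forecast argument (explicit for $\ell_\text{CRPS}$ and any Lipschitz loss, where $|\ell(y,G_t)-\ell(y,F)|\le 2\int|G_t(z)-F(z)|\,dz$), this gives $\tfrac1T\sum_{t\le T}\ell(y_t,G_t)\ge\Exp_{y\sim f}\ell(y,F)-\eta(\e)=\Exp_{y\sim f}\ell(y,f)-\eta(\e)$ a.s., where $\eta(\e)\to0$ as $\e\to0$. Making this pinning argument rigorous — a martingale/LLN estimate showing calibration against i.i.d.\ data forces forecasts toward the true distribution rather than its convex envelope — is where the care is needed; a cleaner alternative is to collapse the construction to two buckets $\{[0,\tfrac12),[\tfrac12,1]\}$ and invoke the binary impossibility result of \citet{kuleshov2017estimating}, whose proof runs along the same lines.

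Finally I would combine the two estimates: the $\ell$-regret of $G_t$ relative to $F_t$ is, in the limit, at least $\Exp_{y\sim f}\ell(y,f)-\Exp_{y\sim f}\ell(y,g)-\eta(\e)=\delta-\eta(\e)$, which exceeds $\e$ once $\e$ is small enough that $\eta(\e)+\e<\delta$. This contradicts the requirement in the definition of $\e$-recalibration that this regret be at most $\e$. Hence no algorithm achieves recalibration with respect to a non-proper $\ell$ for all $\e>0$, as claimed.
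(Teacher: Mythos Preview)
Your proposal is correct and follows essentially the same approach as the paper: draw i.i.d.\ labels from the distribution $f$ that witnesses non-properness, take the constant baseline $g$, and argue that calibration pins the recalibrated forecasts to $f$, forcing the $\ell$-regret to be at least $\delta$. The paper's own proof is much terser --- it works in the binary (Bernoulli) setting and simply asserts that ``the prediction of a calibrated forecaster $p_t$ must converge to $p'$'' --- whereas you spell out the LLN/martingale reasoning behind this pinning step and flag the mild continuity requirement on $\ell$ needed to pass from closeness of forecasts to closeness of losses.
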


\begin{proof}
If $\ell$ is not proper, there exist a $p'$ and $q$ such that $\Exp_{y \sim \text{Ber}(p')} \ell(y, q)  < \Exp_{y \sim \text{Ber}(p')} \ell(y, p') $.% Let $\delta = \Exp_{y \sim \text{Ber}(p')} \ell(y, p') - \Exp_{y \sim \text{Ber}(p')} \ell(y, q) > 0$

Consider a sequence $y_t$ for which $y_t \sim \text{Ber}(p')$ for all $t$. Clearly the prediction of a calibrated forecaster $p_t$ much converge to $p'$ and the average loss will approach $\ell(y, p')$. This means that we cannot recalibrate the constant predictor $p_t = q$ without making its loss $\ell(y, q)$ higher. We thus have a forecaster that cannot be recalibrated with respect to $\ell$.
\end{proof}

\section{Low regret relative to baseline classifiers}\label{app:regret}

Here, we show that a calibrated forecaster also has small regret relative to any bounded proper loss if we use a certain construction that combines our algorithm with a baseline forecaster. This extends our previous construction to more general settings.

\subsection{Recalibration construction}

\paragraph{Setup}

We start with an online forecaster $F$ that outputs uncalibrated forecasts $\palg_t$ at each step; these forecasts are fed into a {\em recalibrator} such that the resulting forecasts $p_t$ are calibrated and have low regret relative to the baseline forecasts $\palg_t$. 

Formally, at every step $t=1,2,...$ we have:
  \begin{algorithmic}[1]
    \STATE Forecaster $F$ predicts $\palg_t$.
    \STATE A recalibration algorithm produces a calibrated forecast $p_t$ based on $\palg_t$.
    \STATE Nature reveals label $y_t$
    \STATE Based on $x_t, y_t$, we update the recalibration algorithm and optionally update $H$.
  \end{algorithmic}

\paragraph{Notation}  

We define a discretization $V$ of the space of forecasts. We assume that the forecasts live in a compact set $\Delta$ and we define a triangulation of $\Delta$, i.e., a partition into a set of simplices
such that any two simplices intersect in either a common face, common vertex,
or not at all. Let $V$ be the vertex set of this triangulation, and let
$V (p)$ be the set of corners for this simplex. 

Note that each distribution $p$ can be uniquely written as a weighted average of its neighboring vertices, $V (p)$. For $v \in V (p)$, we define the test functions $w_v(p)$ to be
these linear weights, so they are uniquely defined by the linear equation $p=\sum_{v \in V(p)} w_v(p) v$.
We also define the discretization to be sufficiently small: given a target precision $\epsilon > 0$ we define the discretization such that for all $f_1, f_2$ in the same simplex we have $|| f_1 - f_2 || < \epsilon$.

\subsection{Recalibration algorithm}

We are going to define a general meta-algorithm that follows a construction in which we run multiple instances of our calibrated forecasting algorithms over the inputs of $F$.

More formally, we take the aforementioned partition of the space of forecasts of $\Delta$ of $F$ and we associate each simplex with an instance of 
our calibration algorithm $\Fcal$ (using the same $\Delta$ and discretization $V$). In order to compute $\palg_t$, we invoke the subroutine $\Fcal_j$ associated with simplex $I_j$ containing $\palg_t$ (with ties broken arbitrarily).
After observing $y_t$, we pass it to $\Fcal_j$.

The resulting procedure produces valid calibrated estimates because each $\Fcal_j$ is a calibrated subroutine. More importantly the new forecasts do not decrease the predictive performance of $F$, as measured by a proper loss $\ell$.
In the remainder of this section, we establish these facts formally.

\subsection{Theoretical analysis}

\paragraph{Notation}

Our task is to produce calibrated forecasts. Intuitively, we say that a forecast $F_t$ is calibrated if for every $y' \in \mathcal{Y}$, the probability $F_t(y')$ on average matches the frequency of the event $\{ y = y' \}$.
% We formalize this intuition as follows.
We formalize this by introducing the ratio
\begin{equation}
    \rho_T(p) = \dfrac{\sum_{t=1}^T y_t \cdot \Ind_{p_t = p}}{\sum_{t=1}^T \Ind_{p_t = p}}
\end{equation}
Intuitively, we want $ \rho_T(p) \to p, $ a.s.~as $T \to \infty$ for all $y$.
In other words, out of the times when the predicted probability for $y_t$ is $p$, the average $y_t$ look like $p$.

The quality of probabilistic forecasts is evaluated using {\em proper} losses $\ell$. Formally, 
a loss $\ell(y, p)$ is proper if
$p \in \arg\min_{q \in \mathcal{P}} \Exp_{y \sim (p)} \ell(y, q) \; \forall p \in \mathcal{P}.$ 
An example in binary classification is the log-loss $\ell_\text{log}(y,p) = y\log(p) + (1-y)\log(1-p)$. We will assume that the loss is bounded by $B > 0$ .

We measure calibration a calibration error $C_T$. Our algorithms will output discretized probabilities; hence we define the error relative to a set of possible predictions $V$
\begin{equation}
    C_T  = \sum_{p \in V} \left| \rho_T(p) - p \right| \left( \frac{1}{T} \sum_{t=1}^T \Ind_{\{p_t = p\}} \right). 
\end{equation}

% \paragraph{Assumptions}

\subsubsection{A helper lemma}

In order to establish the correctness of our recalibration procedure, we need to start with a helper lemma. This lemma shows that if forecasts are calibrated, then they have small internal regret.

\begin{lemma}
If $\ell$ is a bounded proper loss, then an $(\e, \ell_1)$-calibrated $\Fcal$
a.s.~has a small internal regret w.r.t.~$\ell$ and satisfies uniformly over time $T$ the bound
\begin{align}
\intR_{T} = \max_{ij} \sum_{t=1}^T \Ind_{p_t = p_i} \left( \ell(y_t, p_i)  - \ell(y_t, p_j) \right) \leq 2 B (R_T + \e).
\end{align}
\end{lemma}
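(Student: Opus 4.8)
The plan is to mirror the proof already given earlier in the excerpt for the binary-outcome calibration lemma, but now adapted to the setting where the forecasts are vector-valued distributions $p_i \in V$ and $y_t$ ranges over a bounded outcome set rather than $\{0,1\}$. First I would fix $T$ and introduce the same bookkeeping quantities as before: $\Ind_{ti} = \Ind_{p_t = p_i}$, $T_i = \sum_{t=1}^T \Ind_{ti}$, and the per-pair internal regret $\intR_{T,ij} = \sum_{t=1}^T \Ind_{ti}(\ell(y_t,p_i) - \ell(y_t,p_j))$. The goal is to show each $\intR_{T,ij}$ is controlled by the calibration discrepancy $|\rho_T(p_i) - p_i|$ weighted by $T_i$, and then sum over $i$.

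The key structural step is the decomposition of the empirical loss on the plays of action $i$. Writing $\bar y_i = \rho_T(p_i) = \tfrac{1}{T_i}\sum_{t=1}^T \Ind_{ti} y_t$ for the empirical average outcome conditioned on predicting $p_i$, I would split $\sum_{t} \Ind_{ti}\ell(y_t, p_i) = T_i \cdot \Exp_{y \sim \bar y_i}\ell(y, p_i) + \big(\text{fluctuation term}\big)$ — here I am using that $\ell(\cdot, p)$ enters linearly enough (or, if $\ell$ is not affine in $y$, I instead bound the fluctuation directly by $B\, T_i |\rho_T(p_i) - p_i|$ using boundedness, exactly as in the $T(i,y)$ computation in the earlier binary lemma). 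Then the properness of $\ell$ gives $\Exp_{y \sim \bar y_i}\ell(y, p_i)$ versus $\Exp_{y \sim \bar y_i}\ell(y, p_j)$: since $\bar y_i$ is itself the empirical ``true'' distribution, properness implies the forecast that minimizes expected loss against $\bar y_i$ is $\bar y_i$ (or its nearest grid point), so swapping $p_i$ for $p_j$ cannot help beyond the slack introduced by $p_i \neq \bar y_i$. Combining, $\intR_{T,ij} \leq 2B\, T_i |\rho_T(p_i) - p_i|$, where the factor $2$ collects the two fluctuation terms (one for the $\ell(\cdot,p_i)$ sum, one for the $\ell(\cdot,p_j)$ sum) and the $B$ comes from $|\ell| \leq B$.

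Finally I would aggregate: $\intR_T = \max_{ij}\intR_{T,ij} \leq \sum_{i} \max_j \intR_{T,ij} \leq 2B \sum_{p \in V} T_i |\rho_T(p_i) - p_i| = 2B\, T\, C_T \leq 2B\,(R_T + \e)$, using the definition of $C_T$ and the hypothesis $C_T \leq R_T + \e$. (One must be slightly careful that $\sum_i T_i |\rho_T(p_i)-p_i|$ is exactly $T$ times the displayed $C_T$ up to the $\ell_1$ norm convention on the vector $\rho_T(p_i)-p_i$; this is immediate from the definition.)

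The main obstacle I anticipate is handling the step where $\ell(\cdot, p)$ is not affine in $y$ so that $\sum_t \Ind_{ti}\ell(y_t,p_i) \neq T_i \ell(\rho_T(p_i), p_i)$ literally. The earlier binary proof sidesteps this because $y \in \{0,1\}$ makes $\ell(y,p)$ trivially expressible via the two values $\ell(0,p),\ell(1,p)$ and the count $T(i,y)$; in the general bounded-outcome case I would instead argue that properness only needs to be invoked against the empirical distribution $\bar y_i$ of outcomes, and the discrepancy between $\tfrac1{T_i}\sum_t \Ind_{ti}\ell(y_t,p_i)$ and $\Exp_{y\sim\bar y_i}\ell(y,p_i)$ is zero when both are read as expectations under the empirical measure — so the only genuine slack is $\Exp_{y\sim\bar y_i}[\ell(y,p_i) - \ell(y,p_j)]$, which properness plus $\|p_i - \bar y_i\|$-Lipschitz-type control (or direct boundedness, giving the clean $2B$ constant) bounds by $2B|\rho_T(p_i)-p_i|$. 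Getting this accounting right — i.e., being precise about which quantities are empirical expectations versus pointwise sums — is the one place where care is needed; everything else is a direct transcription of the preceding lemma's proof.
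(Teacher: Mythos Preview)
Your overall strategy matches the paper's: decompose $\intR_{T,ij}$ via the empirical outcome counts, use properness to kill the main term, bound the remainder by $B\,T_i\,|\rho_T(p_i)-p_i|$ via boundedness of $\ell$, and sum against the calibration bound. The paper writes $\intR_{T,ij}=\sum_y T(i,y)\bigl(\ell(y,p_i)-\ell(y,p_j)\bigr)$, substitutes $T(i,y)=p_i(y)\,T_i + T_i\bigl(\rho_T(p_i)(y)-p_i(y)\bigr)$, and then proceeds exactly as in your first paragraph.

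The one place your exposition wobbles is on \emph{which} distribution properness is invoked against. In your later paragraphs you suggest applying properness with respect to the empirical measure $\bar y_i$, which yields $\Exp_{y\sim\bar y_i}[\ell(y,p_i)-\ell(y,p_j)] \le \Exp_{y\sim\bar y_i}[\ell(y,p_i)-\ell(y,\bar y_i)]$; but bounding the right side by $|\rho_T(p_i)-p_i|$ then genuinely requires Lipschitz control of $p\mapsto \ell(y,p)$, which is not assumed. The paper (and your first-paragraph $T(i,y)$ route) instead applies properness with respect to the \emph{forecast} $p_i$: shift the expectation from $\bar y_i$ to $p_i$ at cost $B\,\|\bar y_i-p_i\|_1$ (pure boundedness), and then $\Exp_{y\sim p_i}[\ell(y,p_i)-\ell(y,p_j)]\le 0$ is free. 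Stick with that version and drop the Lipschitz remark; everything else in your plan goes through unchanged.
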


\begin{proof}

Let $T$ be fixed for the rest of this proof.
Let $\Ind_{ti} = \Ind_{p_t = p_i}$ be the indicator of $\Fcal$ outputting prediction $p_i$ at time $t$, let $T_i = \sum_{t=1}^T \Ind_{ti}$ denote the number of time $i/N$ was predicted,  and let
$$ \intR_{T, ij} = \sum_{t=1}^T \Ind_{ti} \left( \ell(y_t, p_i)  - \ell(y_t, p_j) \right) $$
denote the gain (measured using the proper loss $\ell$) from retrospectively switching all the plays of action $i$ to $j$. This value forms the basis of the definition of internal regret.

Let $T(i,y) = \sum_{t=1}^T \Ind_{ti} \Ind\{y_t = y\}$ denote the total number of $p_i$ forecasts at times when $y_t = y$. Observe that we have
\begin{align*}
T(i,y) 
& = \sum_{t=1}^T \Ind_{ti} \Ind\{y_t = y\} 
= \frac{\sum_{t=1}^T \Ind_{ti} \Ind\{y_t = y\} }{T_i} T_i
= \frac{\sum_{t=1}^T \Ind_{ti} \Ind\{y_t = y\} }{\sum_{t=1}^T \Ind_{ti}} T_i \\
& = q(i,y) T_i + T_i \left( \frac{\sum_{t=1}^T \Ind_{ti} \Ind\{y_t = y\} }{\sum_{t=1}^T \Ind_{ti}} - q(i,y) \right) \\
& = q(i,y) T_i + T_i \left( \rho_T(p_i) - p_i \right),
\end{align*}
where $q(i,y) = p_i(y)$. The last equality follows using some simple algebra after adding and subtracting one inside the parentheses in the second term.

We now use this expression to bound $\intR_{T, ij}$:
\begin{align*}
\intR_{T, ij}
& = \sum_{t=1}^T \Ind_{ti} \left( \ell(y_t, p_i)  - \ell(y_t, p_j) \right) \\
& = \sum_{y} T(i,y) \left( \ell(y, p_i)  - \ell(y, p_j) \right) \\
& \leq \sum_{y} q(i,y) T_i \left( \ell(y, p_i)  - \ell(y, p_j) \right) + B T_i \left| \rho_T(p_i) - p_i \right| \\
& \leq B T_i \left| \rho_T(p_i) - p_i \right|,
\end{align*}
where in the first inequality, we used $\ell(y, p_i)  - \ell(y, p_j) \leq \ell(y, p_i)  \leq B$, and in the second inequality we used the fact that $\ell$ is a proper loss.

Since internal regret equals $\intR_{T} = \max_{i,j} \intR_{T, ij}$, we have
\begin{align*}
\intR_{T}
& \leq \sum_{i=1}^N \max_{j} \intR_{T, ij} 
 \leq 2B \sum_{i=0}^N T_i \left| \rho(i/N) - p_i \right| 
 \leq 2 B ( R_T + \e ).
\end{align*}

\end{proof}

\subsection{Recalibrated forecasts have low regret relative to uncalibrated forecasts}

Next, we use the above result to prove that the forecasts recalibrated using the above construction have low regret relative to the baseline uncalibrated forecasts.

%\begin{lemma}\label{lem:regret}
%Consider an instance of Algorithm 2 with
%parameters $M \geq N$, and $\ell$ be a proper loss that is
%\begin{enumerate}
%\item Bounded in absolute value by $B>0$
%\item $\ell(y_t, p) \leq \ell(y_t, j/M) + B/M$ whenever $p \in [j/M, (j+1)/M)$.
%\item $\ell(y_t, p) \leq \ell(y_t, p_i) + B/N$ whenever $p \in [p_i, (i+1)/N)$.
%\end{enumerate}
%The recalibrated forecasts $p_t$ have vanishing $\ell$-loss regret relative to $\palg_t$:
%$$ \lim_{T\to\infty} \left( \frac{1}{T} \sum_{t=1}^T \ell (y_t , p_t) - \frac{1}{T} \sum_{t=1}^T \ell(y_t , \palg_t) \right) < 3B/N. $$
%\end{lemma}

\begin{lemma}[Recalibration preserves accuracy]
Let $\ell$ be a bounded proper loss such that $\ell(y_t, p) \leq \ell(y_t, p_j) + B\epsilon$ whenever $||p - p_j|| \leq \epsilon$.
Then the recalibrated $p_t$ a.s.~have vanishing $\ell$-loss regret relative to $\palg_t$ and we have uniformly:
\begin{equation}
\frac{1}{T} \sum_{t=1}^T \ell (y_t , p_t) - \frac{1}{T} \sum_{t=1}^T \ell(y_t , \palg_t)  < \frac{B}{\epsilon} \sum_{j=1}^M \frac{T_j}{T} R_{T_j} + 3B\e.
\end{equation}
\end{lemma}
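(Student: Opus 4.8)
The plan is to follow the same template as the CRPS accuracy lemma of Section~\ref{sec:framework}, but with the $\ell_2$-specific external-regret fact replaced by the internal-regret helper lemma just proved (which holds for any bounded proper loss). First I would split the cumulative regret $\sum_{t=1}^T\bigl(\ell(y_t,p_t)-\ell(y_t,\palg_t)\bigr)$ over the simplices of the triangulation: by construction, on every round with $\palg_t\in I_j$ the recalibrated forecast $p_t$ is exactly the output of the subroutine $\Fcal_j$, and $\Fcal_j$ is the instance fed the label $y_t$, so the total regret equals $\sum_{j=1}^M\sum_{t\,:\,\palg_t\in I_j}\bigl(\ell(y_t,p_t)-\ell(y_t,\palg_t)\bigr)$.

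Next, within each simplex $I_j$ I would replace the moving comparator $\palg_t$ by a single fixed vertex $v_j\in V$ of that simplex. This is where the fineness of the discretization enters: since any two points of a common simplex are within $\e$, we have $\|\palg_t-v_j\|\le\e$ for every round routed to $\Fcal_j$, and the hypothesis $\ell(y,p)\le\ell(y,p_j)+B\e$ when $\|p-p_j\|\le\e$ then lets me swap $\ell(y_t,\palg_t)$ for $\ell(y_t,v_j)$ at a cost of $B\e$ per round, i.e.\ $B\e T_j$ over the simplex. What remains is $\sum_{t\,:\,\palg_t\in I_j}\bigl(\ell(y_t,p_t)-\ell(y_t,v_j)\bigr)$, the external regret of $\Fcal_j$ against the fixed action $v_j$ over its $T_j$ active rounds. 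Because the subroutines share the vertex set $V$, $v_j$ is one of the forecasts $\Fcal_j$ could itself have played, so this external regret equals $\sum_{p_i\in V}\intR_{T_j,\,p_i\to v_j}\le\sum_{p_i\in V}\max_{p_k}\intR_{T_j,\,p_i\to p_k}$, which the proof of the helper lemma bounds by $O\!\bigl(B(R_{T_j}+\e)\bigr)$ with $R_{T_j}=o(1)$.

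Finally I would aggregate: summing over $j$, using $\sum_j T_j=T$ and dividing by $T$, the discretization terms contribute $B\e$, the per-subroutine $\e$-slacks contribute a further $O(B\e)$, and the calibration-rate pieces assemble into $\tfrac{B}{\e}\sum_{j=1}^M\tfrac{T_j}{T}R_{T_j}$ --- the $1/\e$ overhead being exactly the price (already paid in Lemma~\ref{lem:accuracy}) of turning a width-$\e$, scale-$B$ calibration guarantee into a fixed-action regret bound. Collecting the $\e$-terms into a single $3B\e$ and noting each $R_{T_j}\to0$ then gives the claim.

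The step I expect to be the main obstacle is the second one: $v_j$ must be chosen to be simultaneously (i) a vertex of the shared discretization $V$, so that the internal-regret helper lemma applies to comparisons against $v_j$ verbatim, and (ii) within $\e$ of \emph{every} uncalibrated forecast $\palg_t$ routed into $\Fcal_j$, so that the Lipschitz-type substitution is legitimate. Both hold because all subroutines are instantiated on one triangulation whose simplices have diameter below $\e$, but making this precise --- together with tracking how the single $1/\e$ factor and the several $B\e$ contributions combine into the clean $\tfrac{B}{\e}\sum_j\tfrac{T_j}{T}R_{T_j}+3B\e$ form --- is the only genuine work; the telescoping itself is identical to the CRPS case.
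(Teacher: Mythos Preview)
Your proposal is correct and follows essentially the same approach as the paper: decompose over simplices, use the Lipschitz hypothesis to replace the moving comparator $\palg_t$ by a fixed vertex $p_j$ of the simplex, then bound each subroutine's external regret against that fixed vertex via the internal-regret helper lemma, and finally aggregate. The paper's proof is terser --- it simply asserts that the helper lemma's internal-regret bound yields an external-regret rate of $2BR_T/\e$ and then writes the three-line decomposition --- but the logical skeleton is identical to yours.
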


\begin{proof}
By the previous lemma, we know that an algorithm whose calibration error is bounded by $R_T = o(1)$ also minimizes internal regret at a rate of $2BR_T$, and thus external regret at a rate of $2BR_T / \epsilon$.

Next, let us use $\Ind_{j,t}$ to indicate that $\Fcal_j$ was called at time $t$. 
We establish our main claim as follows:
\begin{align*}
& \frac{1}{T} \sum_{t=1}^T  \ell (y_t , p_t) - \frac{1}{T} \sum_{t=1}^T \ell (y_t , \palg_t) \\
& \;\; = \frac{1}{T} \sum_{t=1}^T \left( \sum_{j=1}^M \left( \ell (y_t , p_t) - \ell (y_t , \palg_t) \right) \Ind_{j,t} \right) \\
& \;\; < \frac{1}{T} \sum_{t=1}^T \left( \sum_{j=1}^M \left( \ell (y_t , p_t) - \ell (y_t , p_j) \right) \Ind_{j,t} + B\epsilon \right) \\
% & \;\; < \frac{1}{T} \sum_{t=1}^T \left( \sum_{j=1}^M \left( \ell (y_t , p_t) - \ell (y_t , \frac{i_j}{N}) \right) \Ind_{j,t} + \frac{2B}{N}\right) \\
& \;\; \leq \frac{1}{\epsilon} B \sum_{j=1}^M \frac{T_j}{T} R_{T_j} + 3B\epsilon,
\end{align*}
where $R_{T_j}$ is a bound on the calibration error of $\Fcal_j$ after $T_j$ plays. 

%The first inequality holds because $|\palg_t - \frac{j}{M}| \leq \frac{1}{M} \leq \frac{1}{N}$ when $\Ind_{j,t} = 1$ and because $\ell_2(\palg_t, y_t) \geq \ell_2(\frac{j}{M},y_t) + \frac{\partial \ell_2}{\partial p}(p,y_t)(\frac{j}{M}-\palg_t)$. 
In the first two inequality, we use our assumption on the loss $\ell$.
%Note that this bound holds for other convex loss functions.
%We repeat the same argument in the second inequality using the fact that $|\frac{i_j}{N} - \frac{j}{M}| \leq \frac{1}{N}$ for some $i_j$.
The last inequality follows because $\Fcal_j$ minimizes external regret w.r.t.~the constant action $p_j$ at a rate of $BR_{T_j}/\epsilon$.
\end{proof}

\subsection{Proving that calibration holds}

We want to also give a proof that the recalibration construction described above yields calibrated forecasts.

\begin{lemma}
If each $\Fcal_j$ is $(\e, \ell_p)$-calibrated,
then the combined algorithm is also $(\e, \ell_p)$-calibrated and the following bound holds uniformly over $T$:
\begin{align}
C_T \leq \sum_{j=1}^M \frac{T_j}{T} R_{T_j} + \e. \label{eqn:rate}
\end{align}\vspace{-4mm}
\end{lemma}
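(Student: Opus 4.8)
The plan is to mirror the proof of the ``Preserving calibration'' lemma (Lemma~1 above) essentially verbatim, replacing the grid points $i/N$ by the vertices $v \in V$ of the triangulation. The three facts that drive the argument are: (i) at each vertex $v$, the empirical frequency $\rho_T(v)$ of the combined forecaster is a convex combination of the per-subroutine frequencies $\rho_T^{(j)}(v)$; (ii) the per-vertex calibration error is built from a convex function of $\rho_T(v)$ (namely $x \mapsto |x-v|$, or $|x-v|^p$ in the $\ell_p$ version); and (iii) the hypothesis $C^{(j)}_{T_j} \le R_{T_j}+\e$ together with $\sum_{j=1}^M T_j = T$.

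First I would fix a vertex $v \in V$ and let $w^{(j)}_v = \sum_{t=1}^T \Ind\{p_t = v\}\,\Ind_{j,t}$ count the rounds in which $\Fcal_j$ was the active subroutine and output $v$, so that $\sum_{t=1}^T \Ind_{\{p_t=v\}} = \sum_{j=1}^M w^{(j)}_v$. Writing $\rho_T^{(j)}(v)$ for the success rate of $\Fcal_j$ over its $T_j$ calls, one has
\[
\rho_T(v) \;=\; \frac{\sum_{t=1}^T y_t \Ind_{\{p_t=v\}}}{\sum_{t=1}^T \Ind_{\{p_t=v\}}} \;=\; \sum_{j=1}^M \frac{w^{(j)}_v}{\sum_{j'=1}^M w^{(j')}_v}\,\rho_T^{(j)}(v),
\]
a genuine convex combination; cells with $w^{(j)}_v = 0$ drop out, and if the whole denominator vanishes we take $\rho_T(v) = 0$, in which case $v$ contributes $0$ to $C_T$ consistently with the claimed bound. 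Applying Jensen's inequality to the convex map $x \mapsto |x-v|$ (resp.\ $|x-v|^p$) gives, after multiplying through by $\tfrac1T \sum_j w^{(j)}_v$,
\[
\frac{\sum_{j=1}^M w^{(j)}_v}{T}\,\bigl|\rho_T(v)-v\bigr| \;\le\; \sum_{j=1}^M \frac{w^{(j)}_v}{T}\,\bigl|\rho_T^{(j)}(v)-v\bigr|,
\]
which is exactly the per-vertex contribution to $C_T$ bounded by a sum of per-subroutine per-vertex contributions.

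Summing over $v \in V$ and regrouping the right-hand side by $j$, the inner sum over $v$ is precisely $C^{(j)}_{T_j}$ — the calibration error of $\Fcal_j$ over its own $T_j$ rounds — so $C_T \le \sum_{j=1}^M \tfrac{T_j}{T} C^{(j)}_{T_j}$. Plugging in $C^{(j)}_{T_j} \le R_{T_j} + \e$ (which holds uniformly in $T_j$ by the $(\e,\ell_p)$-calibration hypothesis) and using $\sum_{j} T_j = T$ yields $C_T \le \sum_{j=1}^M \tfrac{T_j}{T} R_{T_j} + \e$, i.e.\ \eqref{eqn:rate}; since the bound on each $C^{(j)}_{T_j}$ is uniform in $T_j$ and $T_j$ is a function of $T$, this holds uniformly over $T$. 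Finally, each $R_{T_j}=o(1)$ and $(T_j/T)_j$ is a probability vector, so $\sum_j (T_j/T) R_{T_j} = o(1)$ (split the sum according to whether $T_j$ is large or bounded), hence the combined algorithm is $(\e,\ell_p)$-calibrated. The only step needing a little care — and the main, mild, obstacle — is verifying the convex-combination identity for $\rho_T(v)$ and the zero-denominator bookkeeping; everything else is a direct transcription of the Lemma~1 argument, so no new ideas are required.
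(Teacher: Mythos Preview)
Your proposal is correct and follows essentially the same route as the paper: express $\rho_T(v)$ as a convex combination of the per-subroutine frequencies $\rho_T^{(j)}(v)$, apply Jensen's inequality to $x\mapsto|x-v|^p$ at each vertex, sum over $v\in V$, regroup by $j$ to recover $C^{(j)}_{T_j}$, and invoke the per-subroutine calibration bound together with $\sum_j T_j=T$. Your handling of the zero-denominator case and your ``split into large vs.\ bounded $T_j$'' argument for $\sum_j (T_j/T)R_{T_j}=o(1)$ are in fact more careful than the paper's own sketch.
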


\begin{proof}
Let $M = |V|$.
Let $\wsupj_i = \sum_{t=1}^T \wsupj_{t,i}$ where $\wsupj_{t,i} = \Ind \{p_t = p_j \cap \palg_t = p_j\}$ and note that $\sum_{t=1}^T \Ind_{t,i} = \sum_{j=1}^M \wsupj_i$. Let also $\rjt(p_i) = \frac{\sum_{t=1}^T \wsupj_{t,i} y_t}{\sum_{t=1}^T \wsupj_{t,i}}$. We may write
\begin{align*}
  C_{T,i} 
& = \frac{\sum_{t=1}^T \Ind_{t,i}}{T} \left|  \rho_T(p_i) - p_i \right|
 = \frac{\sum_{j=1}^M \wsupj_i }{T} \left| \sum_{j=1}^M \frac{ \sum_{t=1}^T \wsupj_{t,i} y_t}{\sum_{j=1}^M \wsupj_i }  - p_i \right| \\
& = \frac{\sum_{j=1}^M \wsupj_i}{T} \left| \sum_{j=1}^M \frac{ \wsupj_i \rjt(p_i) }{\sum_{j=1}^M \wsupj_i }  - p_i \right|
 \leq \sum_{j=1}^M \frac{\wsupj_i}{T} \left| \rjt(p_i) - p_i \right| = \sum_{j=1}^M  \frac{T_j}{T} C^{(j)}_{T, i},
\end{align*}
where $C^{(j)}_{T,i} = \left| \rjt(p_i) - p_i \right| \left( \frac{1}{T_j} \sum_{t=1}^T \wsupj_{t,i} \right)$ and in the last line we used Jensen's inequality. 
Plugging in this bound in the definition of $C_T$, we find that 
%$ C_T = \sum_{i=1}^N  C_{T,i} \leq \sum_{j=1}^M \sum_{i=1}^N \frac{T_j}{T} C^{(j)}_{T,i} $ which in turn can be bounded as
%\begin{align}
% C_T \leq \sum_{j=1}^M \frac{T_j}{T} R_{T_j} + \e. \label{eqn:rate}
%\end{align}
\begin{align}
 C_T 
& = \sum_{i=1}^N  C_{T,i}
\leq \sum_{j=1}^M \sum_{i=1}^N \frac{T_j}{T} C^{(j)}_{T,i} 
 \leq \sum_{j=1}^M \frac{T_j}{T} R_{T_j} + \e, \nonumber
\end{align}
Since each $R_{T_j} \to 0$, the full procedure will be $\e$-calibrated.
\end{proof}

Recall that $R_T$ denotes the rate of convergence of the calibration error $C_T$.
For most online calibration subroutines $\Fcal$,
$R_{T} \leq f(\e)/\sqrt{T}$ for some $f(\e)$.
In such cases, we can further bound the calibration error in the above lemma as
$$
\sum_{j=1}^M \frac{T_j}{T} R_{T_j} \leq \sum_{j=1}^M \frac{\sqrt{T_j}f(\e)}{T} \leq \frac{f(\e)}{\sqrt{ \e T}}. 
$$
In the second inequality, we set the $T_j$ to be equal. 
Thus, our recalibration procedure introduces an overhead of
$ \frac{1}{\sqrt{\e}} $
in the convergence rate of the calibration error $C_T$ and of the regret relative to a baseline forecaster in the earlier lemma.
% In addition, \algorithmref{recal} requires $ \frac{1}{{\e}} $ times more memory and computation time (we run $1/\e$ instances of $\Fcal_j$). Overall, our runtime in linear in $M$, and the cost of \algorithmref{recal} is negligible relative to fitting the base model.

\newpage

\section{Applications: Decision-making}\label{app:applications}

Next, we complement our results with a formal characterization of some benefits of calibration. We are interested in decision-making settings where we wish to estimate the value of a function $v : \mathcal{Y} \times \mathcal{A} \times \mathcal{X} \to \mathbb{R}$ over a set of outcomes $\mathcal{Y}$, actions $\mathcal{A}$, and features $\mathcal{X}$. Note that the function $v$ could be a loss $\ell(y,a,x)$ that quantifies the error of an action $a \in \mathcal{A}$ in a state $x \in \mathcal{X}$ given outcome $y \in \mathcal{Y}$.

We assume that given $x$, the agent chooses an action $a(x)$ according to a decision-making process. This could be an action $a(x) = \arg \min_a \mathbb{E}_{y \sim H(x)} [ \ell(y, a,x) ]$ that minimizes a loss that are trying to estimate, but any outcome is possible.
The agent then relies on a predictive model $H$ of $y$ to estimate the future values $v (y, a,x)$ for the decision $a(x)$ :
\begin{align}
% a(x) & = \arg \min_a \mathbb{E}_{y \sim H(x)} [ \ell(y, a,x) ] \\
v(x) & = \mathbb{E}_{y \sim H(x)} [ v(y, a(x),x) ].
\end{align}
% Here, $a(x)$ is the action that minimizes the expected loss under $H$.
We study $v(y,a,x)$ that are monotonically non-increasing or non-decreasing in $y$. Examples include linear utilities $u(a,x) \cdot y + c(a,x)$ or their monotone transformations. 

\paragraph{Expectations under calibrated models}

If $H$ was a perfect predictive model, we could estimate expected values of outcomes perfectly. In practice, inaccurate models can yield imperfect decisions. Surprisingly, our analysis shows that in many cases, calibration (a much weaker condition that having a perfectly specified model $H$) is sufficient to correctly estimate the value of various outcomes.

Surprisingly, our guarantees can be obtained with a weak condition---quantile calibration.
Additional requirements are the non-negativity and monotonicity of $v$.
Our result is a concentration inequality that shows that estimates of $v$ are unlikely to exceed the true $v$ on average.
% \vk{cite individual calibration}

\begin{theorem}
\label{apdx:thm:dist_calib_bound_app}
Let $M$ be a quantile calibrated model as in and
let $v(y, a, x)$ be a monotonic value function.
Then for any sequence $(x_t, y_t)_{t=1}^T$ and $r > 0$, we have:
\begin{equation}
    \label{apdx:eqn:dist_calib_bound1}
    \lim_{T \to \infty} \frac{1}{T} \sum_{t=1}^T \mathbb{I} \left[ v(y_t, a(x_t), x_t) \geq r v(x_t)) \right] \leq 1 / r
\end{equation}
\end{theorem}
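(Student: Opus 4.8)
The plan is to collapse the event $\{v(y_t,a(x_t),x_t)\ge r\,v(x_t)\}$ into a one-sided event on the scalar outcome $y_t$ at a \emph{fixed} probability level, and then to quote quantile calibration of $M$ at that single level. Write $F_t$ for the calibrated CDF predicted by $M$ at step $t$, put $h_t(y):=v(y,a(x_t),x_t)$ — nonnegative and, say, nondecreasing in $y$ (the nonincreasing case is symmetric, with the ray below replaced by a ray above) — and set $\mu_t:=v(x_t)=\int h_t\,dF_t$, assuming $\mu_t>0$. If $r\le 1$ the claim is vacuous, since the left-hand side is an average of indicators, so assume $r>1$.

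\emph{Reduction to a one-sided event.} Because $h_t$ is nondecreasing, $\{y:h_t(y)\ge r\mu_t\}=[c_t,\infty)$ with $c_t:=\inf\{y:h_t(y)\ge r\mu_t\}$, which is finite when $r>1$. Hence $\mathbb{I}[v(y_t,a(x_t),x_t)\ge r\,v(x_t)]=\mathbb{I}[y_t\ge c_t]$, and $c_t$ depends only on $x_t,a(x_t),r,F_t$ — all fixed before $y_t$ is revealed. A one-line Markov inequality \emph{under the model} controls the model's own probability of this event: with $p_t:=1-F_t(c_t^-)=\int_{[c_t,\infty)}dF_t$,
\[
r\mu_t\,p_t\;\le\;\int_{[c_t,\infty)}h_t\,dF_t\;\le\;\int h_t\,dF_t\;=\;\mu_t ,
\]
so $p_t\le 1/r$, i.e. $F_t(c_t^-)\ge 1-1/r$, which gives $c_t\ge q_t$ where $q_t:=\inf\{z:F_t(z)\ge 1-1/r\}$ is the model's $(1-1/r)$-quantile. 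Therefore $\mathbb{I}[y_t\ge c_t]\le\mathbb{I}[y_t\ge q_t]$, and averaging,
\[
\limsup_{T\to\infty}\frac{1}{T}\sum_{t=1}^T\mathbb{I}[v(y_t,a(x_t),x_t)\ge r\,v(x_t)]\;\le\;\limsup_{T\to\infty}\frac{1}{T}\sum_{t=1}^T\mathbb{I}[y_t\ge q_t].
\]

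\emph{Applying calibration.} The right-hand average is exactly the empirical frequency with which $y_t$ lands in the model's upper $1/r$-tail, i.e. with which the probability-integral-transform values $F_t(y_t)$ fall in $[1-1/r,1]$. Quantile calibration of $M$ — the online quantile-calibration guarantee, equivalently the uncertainty-estimation theorem specialized to the single threshold level $1-1/r$ — forces this frequency to $1/r$ a.s. as $T\to\infty$ (to $1/r+O(\epsilon)$ if one uses the $\epsilon$-calibrated subroutines rather than the exact limit, with the recalibrator's grid spacing $1/N$ absorbed into $\epsilon$). Combining with the previous display yields the bound.

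\emph{Where the work is.} The substantive point is this last step: the level $1-1/r$ is fixed, but the threshold $q_t=F_t^{-1}(1-1/r)$ still moves with $x_t$, whereas the paper's calibration guarantee is stated for each fixed outcome value $y$. I would bridge this exactly as elsewhere in the paper — the recalibrator emits probabilities on the finite grid $\{0,1/N,\dots,1\}$, so $\{F_t(y_t)\ge 1-1/r\}$ splits over finitely many grid levels on each of which the subroutine's calibration applies, with the $1/N$ spacing contributing only an $O(\epsilon)$ error. Everything else — the conventions for the infima defining $c_t$ and $q_t$, left-limits versus values of $F_t$ at jumps, the degenerate case $\mu_t=0$, and the symmetric nonincreasing case — is routine bookkeeping once the monotonicity-plus-Markov skeleton above is in place.
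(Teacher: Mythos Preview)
Your argument is essentially the same as the paper's: both use monotonicity and nonnegativity of $v$ to run a Markov-type inequality \emph{under the model} $F_t$, concluding that $v(y_t,a(x_t),x_t)\ge r\,v(x_t)$ forces $F_t(y_t)\ge 1-1/r$, and then invoke calibration at the single level $1-1/r$ (the paper writes $s=1/r$ and goes directly to $F_{x_t}(y_t)\ge 1-s$ rather than passing through your intermediate thresholds $c_t$ and $q_t$, but the content is identical). If anything you are more careful than the paper about the ``where the work is'' step---the paper simply asserts $\frac{1}{T}\sum_t\mathbb{I}\{F_{x_t}(y_t)\ge 1-s\}\to s$ by calibration without discussing the moving-threshold issue you flag.
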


\begin{proof}

Recall that $M(x)$ is a distribution over $\mathcal{Y}$, with a density $p_x$, a quantile function $Q_x$, and a cdf $F_x$.
Note that for any $x$ and $s \in (0,1)$ and $y' \leq F_x^{-1}(1-s)$ we have:
\begin{align*}
v(x)  
& = \int v(x, y, a(x)) q_x(y) dy \\
& \geq \int_{y \geq y'} v(x, y, a(x)) q_x(y) dy \\
& \geq v(x, y', a(x)) \int_{y \geq y'} q_x(y) dy \\
& \geq s v(x, y', a(x))
\end{align*}

The above logic implies that whenever $v(x)   \leq s v(x, y, a)$, we have $y \geq F_x^{-1}(1-s)$ or $F_x(y) \geq (1-s)$. Thus, we have for all $t$,
\begin{align*}
\mathbb{I}\{  v(x_t)   \leq s v(x_t, y_t, a_t) \} \leq \mathbb{I}\{  F_{x_t}(y_t) \geq (1-s) \}.
\end{align*}
Therefore, we can write
\begin{align*}
\frac{1}{T} \sum_{t=1}^T \mathbb{I}\{  v(x_t)   \leq s v(x_t, y_t, a_t) \} \leq \frac{1}{T} \sum_{t=1}^T \mathbb{I}\{  F_{x_t}(y_t) \geq (1-s) \} = s + o(T),
\end{align*}
where the last equality follows because $M$ is calibrated. Therefore, the claim holds in the limit as $T \to \infty$ for $r = 1/s$. 
The argument is similar if $v$ is monotonically non-increasing. In that case, we can show that whenever $y' > F_x^{-1}(s)$, we have $v(x)  \geq s v(x, y', a(x))$. Thus, whenever $v(x)   \leq s v(x, y, a)$, we have $y \leq F_x^{-1}(s)$ or $F_x(y) \leq s$. Because, $F_x$ is calibrated, we again have that
\begin{align*}
\frac{1}{T} \sum_{t=1}^T \mathbb{I} \{ v(x_t)   \leq s v(x_t, y_t, a_t) \} \leq \sum_{t=1}^T \mathbb{I} \{ F_{x_t}(y_t) < s \} = s + o(T),
\end{align*}
and the claim holds with $r = 1/s$. 
\end{proof}

Note that this statement represents an extension of Markov inequality. 
Note also that this implies the same result for a distribution calibrated model, since distribution calibration implies quantile calibration.

\section{Experiments on UCI Benchmarks}
\label{apdx:uci_expt}
The existing UCI datasets~\citep{Dua2019UCI} used in our experiments hold a Creative Commons Attribution 4.0 International (CC BY 4.0) license. 
\paragraph{Computational resources.} Our experiments were conducted on a laptop with 2.3 GHz 8-Core Intel Core i9 processor and 32 GB 2667 MHz DDR4 RAM. The code and datasets take 16MB memory. 

\paragraph{Detailed setup.}
Our dataset consists of input and output pairs $\{x_t, y_t\}_{t=1}^{T}$ where $T$ is the size of the dataset. 
We simulate a stream of data by sending batches of data-points $\{x_t, y_t\}_{t=nt'+1}^{n(t'+1)}$ to our model, where $t'$ is the time-step and $n$ is the batch-size. This simulation is run for $\left \lceil{T/n}\right \rceil $ time-steps. For each batch, Bayesian ridge regression is fit to the data and the recalibrator is trained. 
We set $N=20$ in the recalibrator and use a batch size of $n=10$ for all experiments except for the Aquatic Toxicity dataset~\ref{fig:daphnia-aquatic-toxicity} where we used $n=5$. The calibration is evaluated at levels $[0.2, 0.4, 0.5, 0.6, 0.8]$. 

\section{Experiments on Bayesian optimization}
\label{apdx:bayes_opt}
Bayesian optimization attempts to find the global minimum $x^\star = \arg \min_{x \in \mathcal{X}} f(x)$ of an unknown function $f:\mathcal{X} \to \mathbb{R} $ over an input space $\mathcal{X} \subseteq \mathbb{R}^D$. 
We are given an initial labeled dataset $x_t, y_t \in \mathcal{X} \times \mathbb{R}$ for $t = 1, 2, ..., N$ of i.i.d. realizations of random variables $X,Y \sim P$. At every time-step $t$, we use uncertainties from the probabilistic model $\mathcal{M}:\mathcal{X} \to (\mathbb{R} \to [0, 1])$ of $f$ to select the next data-point $x_{next}$ and iteratively update the model $\mathcal{M}$.  Algorithm~\ref{algo:plain-bo} outlines this procedure. Since the black-box function evaluation can be expensive, the objective of Bayesian optimization in this context is to find the minima (or maxima) of this function while using a small number of function evaluations.

\paragraph{Computational resources.} Our experiments were conducted on a laptop with 2.3 GHz 8-Core Intel Core i9 processor and 32 GB 2667 MHz DDR4 RAM. The code and datasets take 16MB memory.

\paragraph{Detailed setup. } We use online calibration to improve the uncertainties estimated by the model $\mathcal{M}$. Following~\citet{Deshpande2021Calibrated}, we use Algorithm~\ref{algo:calibrate} to recalibrate the model $\mathcal{M}$. Since the dataset size is small, we use the \textsc{CREATESPLITS} function to generate leave-one-out cross-validation splits of our dataset $\mathcal{D}.$  We train the base model on train-split and use this to obtain probabilistic forecast for data in the test-split. We collect these predictions on all test-splits to form our recalibration dataset and use Algorithm~\ref{algo:recal} to perform calibration.

Following~\citet{Deshpande2021Calibrated}, we perform calibrated Bayesian optimization as detailed in Algorithm~\ref{algo:calibrated-bo}. Specifically, we recalibrate the base model $\mathcal{M}$ after every step in Bayesian optimization. We build on the GpyOpt library~\citep{gpyopt2016} for Bayesian optimization that holds the BSD 3-clause license. 

We use some popular benchmark functions to evaluate the performance of Bayesian optimization. We initialize the Bayesian optimization with 3 randomly chosen data-points. We use the Lower Confidence Bound (LCB) acquisition function to select the data-point $x_t$ and evaluate a potentially expensive function $f$ as $x_t$ to obtain $y_t$.  At any given time-step $T$, we have the dataset $\mathcal{D}_T = \{x_t, y_t\}_{t=1}^{T}$ collected iteratively. 

In Figure~\ref{fig:bayes-opt}, we see that using online calibration of uncertainties from $\mathcal{M}$ allows us to reach a lower minimum or find the same minimum with a smaller number of steps with Bayesian optimization.

\begin{figure}[h]
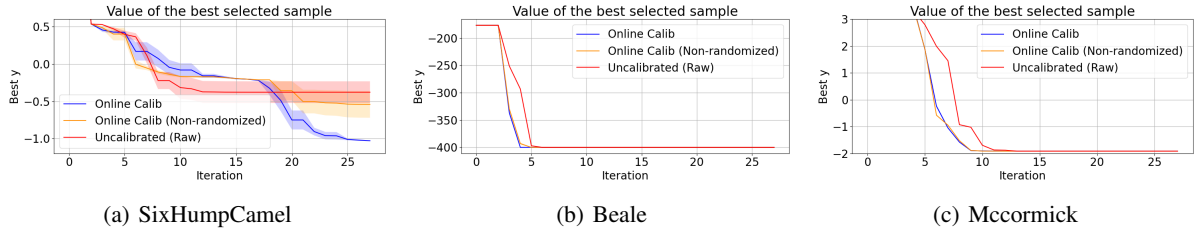

\centering     %%% not \center
\subfigure[SixHumpCamel]{\label{ewa-recalibrator-sixhumpcamel2}\includegraphics[width=0.32\linewidth]{ figures/sixhumpcamel_new_aggregate_convergence_comparison.png}}
\subfigure[Beale]{\label{beale2}\includegraphics[width=0.32\linewidth]{ figures/beale_new_aggregate_convergence_comparison.png}}
\subfigure[Mccormick]{\label{Mccormick2}\includegraphics[width=0.32\linewidth]{ figures/mccormick_new_aggregate_convergence_comparison.png}}
\caption{Online Calibration Improves Bayesian optimization}
\label{fig:bayes-opt}
\end{figure}

\begin{algorithm}[H]
  \caption{Bayesian Optimization}
  \label{algo:plain-bo}
  \begin{algorithmic}[1]
    \STATE Initialize base model $\mathcal{M}$ with data $\mathcal{D}=\{x_t, y_t\}_{t=0}^{M}$.
    % \STATE $\mathcal{R} \gets \textsc{Calibrate}(\mathcal{M,\mathcal{D}})$.
    \FOR {$n=1,2,...,T$:}
    \STATE $x_{\textrm{next}}$ = $\arg \max_{x \in \mathcal{X}}(\textrm{Acquisition}(x, \mathcal{R} \circ \mathcal{M}))$.
    \STATE $y_{\textrm{next}}$ = $f(x_{\textrm{next}})$.
    \STATE $\mathcal{D}$ = $\mathcal{D} \bigcup \{(x_{\textrm{next}}, y_{\textrm{next}})\}$
    \STATE Update model $\mathcal{M}$ with data $\mathcal{D}$
    % \STATE $\mathcal{R} \gets \textsc{Calibrate}(\mathcal{M,\mathcal{D}})$
    \ENDFOR
  \end{algorithmic}
\end{algorithm}

\begin{algorithm}[H]
  \caption{Calibrated Bayesian Optimization~\citep{Deshpande2021Calibrated}}
  \label{algo:calibrated-bo}
  \begin{algorithmic}[1]
    \STATE Initialize base model $\mathcal{M}$ with data $\mathcal{D}=\{x_t, y_t\}_{t=0}^{M}$.
    \STATE $\mathcal{R} \gets \textsc{Calibrate}(\mathcal{M,\mathcal{D}})$.
    \FOR {$n=1,2,...,T$:}
    \STATE $x_{\textrm{next}}$ = $\arg \max_{x \in \mathcal{X}}(\textrm{Acquisition}(x, \mathcal{R} \circ \mathcal{M}))$.
    \STATE $y_{\textrm{next}}$ = $f(x_{\textrm{next}})$.
    \STATE $\mathcal{D}$ = $\mathcal{D} \bigcup \{(x_{\textrm{next}}, y_{\textrm{next}})\}$
    \STATE Update model $\mathcal{M}$ with data $\mathcal{D}$
    \STATE $\mathcal{R} \gets \textsc{Calibrate}(\mathcal{M,\mathcal{D}})$
    \ENDFOR
  \end{algorithmic}
\end{algorithm}

\begin{algorithm}[H]
  \caption{$\textsc{Calibrate}$~\citep{Deshpande2021Calibrated}}
  \label{algo:calibrate}
  \begin{algorithmic}[1]
    \REQUIRE Base model $\mathcal{M}$, Dataset $\mathcal{D}=\{x_t, y_t\}_{t=0}^{N}$
    \STATE Train a base model $\mathcal{M}$ on training dataset $\{x_t, y_t\}_{t=0}^{N}$.
    \STATE Initialize recalibration dataset $\mathcal{D_{\textrm{recal}}} = \phi$
  %     \item $S = \textsc{CreateSplits({D})}$.
     \STATE $S = \textsc{CreateSplits({D})}$
    \FOR {$(\mathcal{D_{\textrm{train}}}, \mathcal{D_{\textrm{test}}})$ in $S$:}
    \STATE $\mathcal{D_{\textrm{train}}}=$ Train Dataset $\{x_t, y_t\}_{t=0}^{M}$ in split $s$.
    \STATE $\mathcal{D_{\textrm{test}}}=$ Test Dataset $\{x_t, y_t\}_{t=0}^{L}$ in split $s$.
    \STATE $\mathcal{D_{\textrm{train}}}=\textsc{TrainSplit}(s), \mathcal{D_{\textrm{test}}}=\textsc{TestSplit}(s)$
    \STATE Train base model $\mathcal{M'}$ on dataset $\mathcal{D_{\textrm{train}}}$
    \STATE Compute CDF dataset $\{[M'(x_t)](y_t)\}_{t=1}^{M}$ from dataset $\mathcal{D_{\textrm{test}}}$
    \STATE $\mathcal{D_{\textrm{recal}}} = \mathcal{D_{\textrm{recal}}} \bigcup \{[\mathcal{M'}(x_t)](y_t), y_t\}_{t=1}^{M}$
    \ENDFOR
    \STATE Train recalibrator model $\mathcal{R}$ on the recalibration dataset $\mathcal{D_{\textrm{recal}}}$ using Algorithm~\ref{algo:recal}
    \STATE Return ($\mathcal{R}$)
  \end{algorithmic}
\end{algorithm}

% \bibliography{all}

\section{Comparison to prior work}

\subsection{Comparing to \citet{kuleshov2017estimating}}

While Kuleshov and Ermon [2017] and the Calibeating technique focus on binary classification, we study regression. We want to emphasize that moving from calibration to regression is non-trivial and significantly more involved than generalizing the scoring rule from CDFs to point forecasts. The regression setting is significantly harder than classification, and requires (1) non-trivial thinking about how to define calibration and (2) algorithms and analyses that are substantially different than in classification.

% Defining calibration in regression is non-trivial

The classical definition of calibration (of the times when I predict $p$, binary event holds $p$ \% of the time) does not easily carry over to regression. In fact, an “easier” version of regression is multi-class calibration (imagine the continuous label $y$ is discretized), and even that is PPAD-hard (Hazan and Kakade, 2012).

Thus, most work on regression studies marginal notions of calibration: a $p$-\% confidence interval contains the label $p$-\% of the time (note how we omit the “when I predict $p$” part). Still, maintaining this in a non-IID setting is non-trivial. One well-known method is ACI (Gibbs and Candes, 2021), but it does not admit regret guarantees. We define a novel and slightly stronger notion of marginal calibration (which has elements of conditional calibration; see Eqn 2), and we provide regret guarantees.

Also, quantifying and minimizing regret is itself non-trivial. This requires defining a suitable notion of regret that is compatible with our definition of calibration. We use the CRPS and CDF recalibration as measures of regret and calibration, respectively.

% Algorithms and analyses for calibration.

While our method superficially resembles that of Kuleshov and Ermon (2017) (and Calibeating, which is the same algorithm) in that we partition an interval and run simple subroutines in each sub-interval, the analysis is significantly different, especially the part about minimizing regret. Superficially, while that proof takes (1/2)-page in Kuleshov and Ermon, ours is about 2 pages long and is substantially different.

Note also that we provide a significant number of additional results that strengthen our core work: a generalized Markov inequality that guarantees our method is able to accurately estimate losses, an analysis of confidence intervals, and an application to online decision-making and Bayesian optimization.

\subsection{Comparing to \citet{Deshpande2021Calibrated}}

Note that the focus and the methods of both papers are different. Our work makes more theoretical contributions around the feasibility of defining and maintaining good calibration and regret in an online non-IID regression setting. The work by Deshpande and Kuleshov is mainly empirical: it applies methods from IID regression (e.g., Kuleshov and Ermon, ICML2018) and additional heuristics to obtain the best possible empirical results on classification. Please note also that their paper is an unpublished manuscript.

We adopt a similar setup to Deshpande and Kuleshov in our experiments because the setting is useful and inherently non-IID. However, because our work is more theoretical, our experiments are not as extensive as those of Deshpande and Kuleshov (whose entire paper is mostly experimental). That said, our non-randomized baseline (orange line) is effectively equivalent to the IID algorithm used in Deshpande and Kuleshov (it simply maintains marginal calibration by counting frequencies in bins), and we outperform that baseline in our experiments by virtue of designing specialized non-IID algorithms.

Lastly, while the paper by Deshpande and Kuleshov has a lemma on online decision-making, ours holds in the online non-IID setting, while theirs is only IID.

% \section{Implementation}
% \label{apdx:codebase}
% Please find our code at this anonymous link \href{https://anonymous.4open.science/r/OnlineQuantileCalibrationDemo-BD10}{https://anonymous.4open.science/r/OnlineQuantileCalibrationDemo-BD10}
% \input{nuerips_checklist}

\end{document}